\newtheorem{assumption}{Assumption}
\theoremstyle{plain}
\newtheorem{thm}{Theorem}[section]
\newtheorem{cor}[thm]{Corollary}
\newtheorem{lem}[thm]{Lemma}
\newtheorem{defn}[thm]{Definition}
\newtheorem{ex}[thm]{Example}
\theoremstyle{definition}
\theoremstyle{remark}
\newtheorem{remark}{Remark}[section]
\DeclarePairedDelimiter{\abs}\lvert\rvert
\newcommand{\R}{\mathbb{R}}
\title{
Pruning Deep Neural Networks via a Combination of the  Marchenko-Pastur Distribution and Regularization}
\author{
    Leonid Berlyand \\
    {\small Department of Mathematics} \\
    {\small Pennsylvania State University} \\
    {\small University Park, PA 16802, USA} \\
    \and
    Theo Bourdais\\
    {\small Department of Computing and Mathematical Sciences} \\
    {\small California Institute of Technology} \\
    {\small 1200 E California Blvd} \\
    {\small Pasadena, CA 91125, USA} \\
      \and
     Houman Owhadi\\
    {\small Department of Computing and Mathematical Sciences } \\
    {\small California Institute of Technology} \\
    {\small 1200 E California Blvd} \\
    {\small Pasadena, CA 91125, USA} \\
    \and
    Yitzchak Shmalo \\
    {\small Department of Mathematics} \\
    {\small Pennsylvania State University} \\
    {\small University Park, PA 16802, USA} \\
}
\date{} 
\date{} 
\begin{document}

\maketitle

\begin{abstract}
Deep neural networks (DNNs) have brought significant advancements in various applications in recent years, such as image recognition, speech recognition, and natural language processing.
In particular, Vision Transformers (ViTs) have emerged as a powerful class of models in the field of deep learning for image classification.
In this work, we propose a novel Random Matrix Theory (RMT)-based method for pruning pre-trained DNNs, based on the sparsification of weights and singular vectors, and apply it to ViTs. RMT provides a robust framework to analyze the statistical properties of large matrices, which has been shown to be crucial for understanding and optimizing the performance of DNNs. We demonstrate that our RMT-based pruning can be used to reduce the number of parameters of ViT models (trained on ImageNet) by 30-50\% with less than 1\% loss in accuracy\footnote{Code for this can be found at \url{https://github.com/yspennstate/RMT_pruning_ViT}}. To our knowledge, this represents the state-of-the-art in pruning for these ViT models.  We also show numerically that RMT-based pruning can be used to increase the accuracy of fully connected DNNs. Furthermore, we provide a rigorous mathematical underpinning of the above numerical studies, namely we proved a theorem for fully connected DNNs, and other more general DNN structures, describing how the randomness in the weight matrices of a DNN decreases as the weights approach a local or global minimum (during training). This decrease in randomness may explain why RMT-based pruning can enhance DNN performance for fully connected DNNs, as the pruning can help reduce the randomness in the weight layers, allowing for faster convergence to the minima.  We finally verify this theorem through numerical experiments on fully connected DNNs, providing empirical support for our theoretical findings. Moreover, we prove a theorem that describes how DNN loss decreases as we remove randomness in the weight layers, and show a monotone dependence of the decrease in loss with the amount of randomness that we remove. Our results also provide significant RMT-based insights into the role of regularization during training and pruning.  
\end{abstract}

\maketitle

\tableofcontents

\section{Introduction}

DNNs have emerged as a powerful tool in the realm of classification tasks, where they categorize objects from a set $S \subset \mathbb{R}^n$. They are a class of parametric functions and will be defined in section \ref{introduction_deep_neural_networks}. The training of DNNs on labeled training datasets $T \subset \mathbb{R}^n$ is done by optimizing its parameters to minimize a loss in order to increase the network's accuracy, typically the cross-entropy loss, as shown in \eqref{CE}.  DNNs have been applied to various real-world challenges, achieving remarkable results in fields such as handwriting recognition \cite{LBD}, image classification \cite{krizhevsky2017imagenet}, speech recognition \cite{hinton2012deep}, and natural language processing \cite{sutskever2014sequence} among others. One significant hurdle in DNN training is overfitting, where the model excessively adapts to the training data, leading to poor generalization of unseen data. This often results in a decline in performance on test datasets despite high accuracy on the training set T. To mitigate overfitting, researchers have developed various regularization strategies, including dropout \cite{srivastava2014dropout}, early stopping \cite{prechelt2012early}, and weight decay regularization \cite{Park2023}.

Recently, Random Matrix Theory (RMT) has emerged as a promising approach to tackling overfitting in deep learning \cite{martin2021implicit,mahoney2019traditional}. This includes the development of RMT-based stopping criteria \cite{meng2023impact} and regularization methods \cite{xiao2023heavy}. Furthermore, RMT has been applied to analyze the spectrum of weight layer matrices \cite{thamm2022random} and study the input-output Jacobian matrix \cite{pastur2020random,pastur2023random}. Investigations into RMT-based initializations for DNNs have also been conducted, see e.g, \cite{saada2023initialisation}. It has been demonstrated that the spectral characteristics of DNN weight layer matrices can provide insightful information about the network's performance, even in the absence of test data. In particular, it has been shown in \cite{martin2021predicting,martin2020heavy} that the spectrum of the weight layers of a DNN can be indicative of the DNN's accuracy on unseen test sets. 
In this work, we show that analyzing these spectral properties helps identify weights that have minimal impact on accuracy, enabling the pruning of a DNN’s weights while maintaining its accuracy. 

In the comprehensive review by \cite{vadera2022methods}, numerous studies on pruning are categorized into three main strategies: magnitude-based pruning, redundancy identification through clustering, and sensitivity analysis-based pruning. Other works and surveys on pruning in deep learning can be found in \cite{cheng2023survey} and \cite{khetan2020prunenet}. Several previous studies, such as \cite{yang2020learning,xue2013restructuring,cai2014fast,anhao2016svd,xu2019trained}, have leveraged Singular Value Decomposition (SVD) to prune small singular values from DNN weight matrices, focusing on empirical techniques like energy thresholds and validation set error monitoring. 
In our work, we introduce a theoretically grounded magnitude-based pruning method using RMT. The Marcenko-Pastur (MP) distribution, a famous RMT distribution (see Section \ref{sec:MP_distribution}), characterizes the distribution of the singular values of random matrices. Using the MP distribution, we may recover the amount of noise in the weight layers and compare their spectrum to that of purely random matrices. Eigenvalues that deviate significantly from what would be expected in a purely random matrix indicate weights that contain information and are critical to the network's function. Conversely, weights associated with eigenvalues that align closely with those of random matrices are considered candidates for removal. Additionally, we quantify the degree to which each layer’s spectrum aligns with the MP distribution, providing a measure of “randomness” that guides targeted training and regularization efforts. For the classical case of fully connected DNNs, it has been demonstrated in \cite{berlyand2023enhancing} both numerically and theoretically using rigorous mathematical analysis that MP-based pruning can reduce a network's parameters by over $99\%$  while increasing its accuracy.

The goal of this work is to develop state-of-the-art pruning algorithms for Vision Transformer (ViT) models based on the MP spectral approach from RMT theory. We compare our pruning algorithms with those presented in \cite{zhu2021visual, goyal2020powerbert, pan2021scalable, song2022cp}. A key feature of our method, compared to other works, is that our pruning does not rely on any access to a validation or test set. In principle, this pruning approach can be extended to other deep neural network models, such as Convolutional Neural Networks (CNNs). We also provide a rigorous mathematical analysis of how removing randomness from the weight layers of DNNs reduces loss without affecting accuracy. We demonstrate that during training, the randomness in the DNN weight layers diminishes over time. As DNN parameters converge to a local minimum, the randomness vanishes.

The paper has been organized as follows. In Sections 1 and 2, a brief overview of key notions and concepts is presented. In Section \ref{sec:pruning_VITs}, we apply our MP-based pruning and fine-tuning approach to ViTs pre-trained on the 1K ImageNet dataset \cite{5206848}. We study the reduction in parameters and how it affects the accuracy of the DNN on the ImageNet validation set. We compare our results with the following work \cite{song2022cp} and the references therein, showing that we achieve higher pruning with less reduction in accuracy. 

In Sections \ref{Main_theory} and \ref{Main_theory_gen}, we present our main mathematical results that explain the evolution of randomness in DNN weight layers during training. We prove two key theorems. Our first theorem demonstrates that for a DNN trained with $L2$ regularization, the random parameters within the network's weight matrices diminish and vanish as the network's loss approaches a local or global minimum. This result underscores the transition of the network's parameters from a regime dominated by randomness (from their initialization) to one where deterministic structures prevail, thereby contributing to how DNNs extract information from noisy data under training. The second theorem  describes how DNN loss decreases as we remove randomness in the weight layers and shows a monotone dependence of the decrease in loss with the amount of randomness that we remove.

Our numerical findings and corresponding mathematical results illustrate the interplay between regularization and pruning (see Theorem \ref{theorem_reduction_of_loss}), emphasizing the critical role of regularization techniques in the training and pruning of DNNs. Our numerical approach is underpinned by rigorous mathematical results and tested on advanced ViT models, demonstrating its applicability to modern deep-learning models.

\section{Randomness in Deep Neural Networks}
\subsection{Introduction to Deep Neural Networks}
\label{introduction_deep_neural_networks}

In classification tasks, the goal is to categorize elements of a set $S$ into one of $K$ distinct classes. Let $C(s)\in\{1,..,K\}$ be the correct class of $s\in S$. Given a labeled training set $T\subset S$, we wish to develop an approximate classifier to extend the accurate classification from the training set $T$ to the entire set $S$. To do so, we use a DNN as a parameter-dependant classifier $\phi( s,\alpha)$, where $\alpha \in \mathbb{R}^\nu$ are its parameters. The DNN outputs a probability vector $(p_1(s), \dots, p_K(s))$ such that $p_k(s):=\mathbb{P}[C(s)=k]$ is the estimated probability distribution of $C(s)$. \\
We consider DNNs which are a composition $\rho \circ X(\cdot,\alpha)$, with $\rho$ the softmax (given in \eqref{soft_max_new}) and $X(\cdot,\alpha)$ defined as follows. Let 
 \begin{equation}
X(\cdot,\alpha)=\lambda\circ M_L(\cdot,\alpha)\circ\cdots\circ\lambda\circ M_1(\cdot,\alpha)\end{equation} with:
 \begin{itemize}
    	\item $M_k(\cdot,\alpha)$  an affine function $\mathbb R^{N_{k-1}}\to\mathbb R^{N_k}$ parameterized by a weight matrix $W_k\in\mathbb{R}^{N_{k}\times N_{k-1}}$ and bias vector $\beta_k\in\mathbb{R}^{N_{k}}$ such that $M_k(x)=W_k\cdot x+\beta_k$.
		\item $\lambda: \R^m \mapsto \R^m$ is a nonlinear activation function. In this case,  we assume $\lambda$ is the absolute value activation function, such that $\lambda:x\in\R^m \mapsto (|x_1|, \dots, |x_m|)\in\R^m$.  
		\item The softmax function $\rho$ normalizes the outputs of $X(\cdot,\alpha)$ into a probability vector over $\{1,\dots,K\}$. The individual components of $\rho$ are computed as:
\begin{equation}
\label{soft_max_new}
\rho(v)_i=\frac{\exp(v_i)}{\sum_{i=1}^{K}\exp(v_i)}\text{ for }v\in\mathbb{R}^K
\end{equation}
 \end{itemize}
Given the neural network architecture above, we train our classifier $\phi(\cdot,\alpha)$ by minimizing the cross-entropy loss. This loss quantifies the accuracy of our classifier over the training set, and is defined as 

\begin{equation}
\label{CE}
L(\alpha)=-\frac{1}{|T|}\sum_{s\in T} \log\left(\phi_{C(s)}(s,\alpha)\right).
\end{equation}

Training a DNN is fundamentally about exploring a complex, high-dimensional, and non-convex landscape of loss to identify the global minimum. However, the intricacies of these landscapes often result in the emergence of local minima, saddle points, and broad flat regions. These features can act as impediments in the learning process, obstructing the path to an optimal solution. The prevalence of such challenges escalates with the increase in dimensionality and complexity of the DNN \cite{dauphin2014identifying, choromanska2015loss}.

In this context, the potential of RMT is noteworthy. By applying RMT to prune DNN weight layers, we can simplify the loss landscape, thereby reducing local minima and saddle points. This simplification assists the optimization process in finding the global minimum, potentially leading to higher training accuracy without reaching a plateau and overall improved model performance. We observe this improvement in performance (higher accuracy) for fully connected DNNs trained with $L1$ and $L2$ regularization; see Appendix \ref{fully_conneced}. 

\subsection{The MP Distribution in Machine Learning contexts}
\label{sec:MP_distribution}

The identification of the Marchenko-Pastur distribution, a fundamental result in RMT, has far-reaching implications in fields like signal processing, wireless communication, and machine learning. References such as \cite{vershynin2018high,ge2021large,serdobolskii2000multivariate,couillet2011random} detail its applications. This distribution provides insights into the spectral density of large random matrices, revealing the asymptotic eigenvalue distribution in these matrices and predicting their behavior under various conditions. Furthermore, the MP distribution is pivotal in dimension reduction techniques, including principal component analysis (PCA), as highlighted in \cite{abdi2010principal,bro2014principal,ringner2008principal}.\\
We introduce the concept of the empirical spectral distribution (ESD) for an $N \times M$ matrix $G$, then state the Marcenko-Pastur theorem:
\begin{defn}\label{ESD_Definition}
The ESD for an $N \times M$ matrix $X$ is defined as:
\begin{equation}
\mu_{X_M} = \frac{1}{M} \sum_{i=1}^M \delta_{\sigma_i},
\end{equation}
where $\sigma_i$ are the non-zero singular values of $X$, and $\delta$ signifies the Dirac measure.
\end{defn}
\begin{thm}[Marchenko and Pastur (1967) \cite{marchenko1967distribution}]
\label{RMT_MP_theorem}
Consider an $N\times M$ random matrix $W$ with $M \leq N$. Let the entries $W_{i,j}$ be independent, identically distributed with zero mean and finite variance $\sigma^2$. Let $X = \frac{1}{N} W^T W$. When $N \to \infty$ and $\frac{M}{N} \to c \in (0,+\infty)$, the ESD of $X$, denoted by $\mu_{X_M}$, converges in distribution to the Marchenko-Pastur probability distribution:
\begin{equation}
\label{MP_distribution}
\frac{1}{2\pi\sigma^2} \frac{\sqrt{(\lambda_+ - x)(x - \lambda_-)}}{cx} \mathbf{1_{[\lambda_-, \lambda_+]}} dx
\end{equation}
with
\begin{equation}
\label{lambda_parameters}
\lambda_\pm = \sigma^2(1\pm\sqrt{c})^2.
\end{equation}
\end{thm}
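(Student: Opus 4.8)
The plan is to establish convergence of the empirical spectral distribution by the \emph{method of moments}, which uses only the stated hypotheses (i.i.d.\ entries, zero mean, finite variance $\sigma^{2}$). First observe that $X=\tfrac{1}{N}W^{T}W$ is symmetric and positive semidefinite, so its singular values coincide with its eigenvalues and the measure $\mu_{X_M}$ of Definition~\ref{ESD_Definition} is exactly the eigenvalue distribution of $X$. Since the target law \eqref{MP_distribution} is compactly supported on $[\lambda_-,\lambda_+]$, it is determined by its moments; hence it suffices to prove that for each fixed $k\ge 1$,
\begin{equation}
\int x^{k}\,d\mu_{X_M}(x)=\frac{1}{M}\,\mathrm{tr}\big(X^{k}\big)\ \longrightarrow\ m_{k}:=\int x^{k}\,\frac{1}{2\pi\sigma^{2}}\frac{\sqrt{(\lambda_+-x)(x-\lambda_-)}}{cx}\,\mathbf{1}_{[\lambda_-,\lambda_+]}\,dx ,
\end{equation}
first in expectation and then almost surely, while simultaneously evaluating $m_{k}$ in closed form.

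To reduce technicalities I would first \emph{truncate}: replacing each entry by $\widetilde W_{ij}=W_{ij}\mathbf{1}_{\{|W_{ij}|\le \delta\sqrt{N}\}}-\mathbb{E}[\,\cdot\,]$ and renormalizing the variance changes the ESD by an amount that vanishes as $\delta\to0$, which follows from rank inequalities (the truncated and original matrices differ on a vanishing fraction of entries in expectation) together with the Hoffman--Wielandt bound, and uses only $\mathbb{E}|W_{ij}|^{2}<\infty$; so one may assume all moments of $W_{ij}$ are finite. Then, writing $X^{k}=N^{-k}(W^{T}W)^{k}$, the quantity $\mathbb{E}\big[\tfrac1M\mathrm{tr}(X^{k})\big]$ expands as a sum over closed walks of length $2k$ alternating between the $M$ ``column'' indices and the $N$ ``row'' indices, weighted by products of entry moments. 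A standard counting argument shows that walks using some edge exactly once contribute $0$ (zero mean), walks using some edge at least three times are lower order in $N$, and the surviving configurations are the \emph{non-crossing pair partitions} of the $2k$ steps; each carries a factor $\sigma^{2k}$ and a ratio of powers of $M$ and $N$ tending to $c^{r}$, where $r$ is the number of pairs on the ``column'' side. Summing over these gives $\mathbb{E}\big[\tfrac1M\mathrm{tr}(X^{k})\big]\to\sigma^{2k}\sum_{r=0}^{k-1}\tfrac{1}{r+1}\binom{k}{r}\binom{k-1}{r}c^{r}$, the generalized Narayana sum.

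It then remains to (i) verify this Narayana sum equals $m_{k}$ --- a classical identity, most cleanly checked by showing the Stieltjes transform $s_{\mathrm{MP}}(z)=\int(x-z)^{-1}d\mu_{\mathrm{MP}}(x)$ solves $c\sigma^{2}z\,s^{2}+(z+\sigma^{2}(c-1))\,s+1=0$ and matching the power series at $z=\infty$ via Lagrange inversion --- and (ii) upgrade convergence in expectation to almost sure convergence, by bounding $\mathrm{Var}\big(\tfrac1M\mathrm{tr}(X^{k})\big)=O(N^{-2})$ through the same walk expansion applied to pairs of walks (the non-factoring cross terms force the two walks to share an edge, costing an extra power of $N$) and invoking Borel--Cantelli; moment-determinacy of $\mu_{\mathrm{MP}}$ then yields weak convergence of $\mu_{X_M}$ to \eqref{MP_distribution}. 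The main obstacle is the bookkeeping in the walk expansion: identifying the dominant walk shapes and, above all, tracking the exact exponent of $c=M/N$ attached to each non-crossing partition (i.e.\ proving the Narayana refinement, not merely the Catalan total). An alternative that sidesteps the combinatorics is the resolvent method: apply a Schur complement to one column of $W$, use Hanson--Wright concentration of the resulting quadratic form to derive the self-consistent quadratic above directly for $s_{N}(z)=\tfrac1M\mathrm{tr}(X-zI)^{-1}$, solve it on $\mathbb{C}^{+}$ choosing the branch with $\mathrm{Im}\,s>0$, and recover both the density and the edges $\lambda_\pm=\sigma^{2}(1\pm\sqrt c)^{2}$ from the Stieltjes inversion formula; there the difficulty shifts to controlling the concentration error terms uniformly on compact subsets of $\mathbb{C}^{+}$.
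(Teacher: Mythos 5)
The paper does not prove this theorem; it is quoted verbatim as the classical Marchenko--Pastur result and attributed to \cite{marchenko1967distribution}, so there is no in-text proof to compare against. Your sketch is nevertheless a correct and essentially complete outline of the standard method-of-moments argument for the Mar\v{c}enko--Pastur law under a second-moment hypothesis, in the spirit of the original paper and of Bai--Silverstein. You correctly identify the load-bearing steps: (i) reduction to the eigenvalue ESD since $X=\tfrac1N W^TW\succeq0$; (ii) truncation at level $\delta\sqrt N$ together with rank and Hoffman--Wielandt bounds to reduce to the bounded-moment case, which is precisely where the hypothesis $\mathbb{E}|W_{ij}|^2<\infty$ and no more is spent; (iii) the bipartite walk expansion in which only non-crossing pair matchings survive at leading order; (iv) the Narayana-refined count $m_k=\sigma^{2k}\sum_{r=0}^{k-1}\tfrac1{r+1}\binom{k}{r}\binom{k-1}{r}c^r$ (which indeed gives $m_1=\sigma^2$, $m_2=\sigma^4(1+c)$, matching a direct computation of $\mathbb{E}\tfrac1M\operatorname{tr}X$ and $\mathbb{E}\tfrac1M\operatorname{tr}X^2$); (v) the $O(N^{-2})$ variance bound via paired walks and Borel--Cantelli; and (vi) moment determinacy of the compactly supported limit. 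The Stieltjes-transform/Schur-complement alternative you mention is the other classical route and, as you note, trades combinatorial bookkeeping for concentration estimates on quadratic forms. Two small cautions: with $M\le N$ the aspect ratio satisfies $c\le1$, which is what makes the stated density integrate to one with no atom at the origin, so that constraint should be carried explicitly through the truncation and moment steps; and the precise coefficients in the self-consistent quadratic for $s_{\mathrm{MP}}$ depend on whether one normalizes by $N$ or $M$ and whether one tracks eigenvalues of $W^TW$ or $WW^T$, so one should pin the convention before matching the power series at infinity. Neither point is a gap in the argument.
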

Theorem \ref{RMT_MP_theorem} states that as the dimensions of a random matrix increase, its eigenvalue distribution converges to the MP distribution. The MP distribution is deterministic and hinges on two parameters: the variance of the matrix's entries, $\sigma^2$, and the aspect ratio of the matrix, denoted by $c$.

\subsection{Reduction of randomness in DNN weights via MP Distribution}
\label{overallpicture}

As outlined in Subsection \ref{introduction_deep_neural_networks}, a DNN is composed of affine functions $M_l$, represented by an $N \times M$ matrix $W_l$ of parameters and a bias vector $\beta_l$. Before training, these matrices are often initialized with i.i.d. entries according to a normal distribution $\mathcal
{N}(0,\frac{1}{N})$. Given the large size of these matrices, the Empirical Spectral Distribution (ESD) of $X_l=\frac{1}{N}W_l^T W_l$ is close to the MP distribution \emph{before training}. Yet, previous studies have used the spiked model in random matrices to examine $W_l$, showing that the ESD of $X_l$ \emph{after training} exhibits eigenvalues that diverge from the Marchenko-Pastur limit $\lambda_+$ \cite{martin2021implicit,staats2022boundary}. This observation suggests that the weight matrices contain signal obtained from the data after training. 

With random initialization and training through Stochastic Gradient Descent (SGD), the sequence of parameters during training $\alpha(t)$ satisfies\begin{align}
    &\alpha(0)\text{ is i.i.d. Gaussian}\\
    &\alpha(t+1) = \alpha(t) - \tau\nabla L(\alpha(t))
\end{align}
While $\nabla L(\alpha(t))$ contains information derived from the training data (signal) it also includes noise due to the data itself possibly being noisy, and the use of random minibatches by SGD. Thus,  each update of $\alpha$ contains both noise and signal. To account for this observation, we model the training process using the following suppositions:

\paragraph{Supposition 1} After $t$ steps of training, the weight matrix at layer $l$ can be decomposed as\begin{equation}
    W_l(t)=R_l(t)+S_l(t)
\end{equation} where $S_l(t)$ represents the signal (structured information learned from the data) and $R_l(t)$ is an independent random noise perturbation. 

As training progresses, we expect the magnitude of the signal $\lVert S_l(t)\rVert_F$ to increase while the magnitude of the noise $\lVert R_l(t)\rVert_F$ decreases, where $\|\cdot\|_F$ is the Frobenius norm of a matrix. The theory will explore this question in detail. Given this decomposition, we will use the MP distribution to separate the signal from the noise. To do so, we assume that the signal in $W_l$ is strong enough to be distinguishable from the noise. Since the singular values of $R_l$ are determined by the MP distribution, with maximal value $\sqrt{\lambda_+}$, this supposition can be reformulated as: 

\paragraph{Supposition 2} Singular values $\sigma_i$ of $W_l$ below $\sqrt{\lambda_+}$ are likely from $R_l$, where $\lambda_+$ is the upper bound of the MP distribution of $R_l^TR_l$ 
\bigbreak

In section \ref{sec:pruning_VITs}, we will use Supposition 2 to design a pruning strategy and apply it to VIT models. In section \ref{Main_theory}, we formalize these suppositions and prove that removing $R_l(t)$ does not decrease accuracy (see Lemma \ref{main_result_remove _R}). The theory, through Theorem \ref{RMT_MP_theorem}, provides insights into how the training reduces the noise in weight matrices. We first present our result for a simplified case where the DNN has only three weight layer matrices and no bias vector. In future works, the results will be extended to more general ViT models.

\section{Pruning Visual Transformers (ViTs)}
\label{sec:pruning_VITs}

ViTs have emerged as state-of-the-art models for image classification tasks on benchmarks such as ImageNet \cite{5206848}. Most weights in ViTs are organized as dense matrices, representing affine transformations through matrix-vector products, such as in Multi-Layer Perceptrons (MLPs) and Attention layers \cite{vaswani2017attention}. This structure makes ViTs particularly compatible with the framework of RMT. For further details on the architecture of ViTs, see Appendix \ref{ViT_modl}.

\subsection{Pruning strategy based on Random Matrix Theory}
\label{pruning_strategy}

In this section, we detail our pruning strategy that leverages insights from RMT. The pruning process involves selectively removing weights from a DNN based on certain criteria derived from RMT. Our goal is to reduce the model size while preserving its accuracy.

Our pruning strategy is based on the suppositions made in section \ref{overallpicture}, and extensive experiments on ViT models. Given our supposition that the weight matrices are perturbed low-rank structures, we can prune them by setting small values, which are likely noise, to zero. We define the pruning function for $x,\theta\in\mathbb{R}$:\begin{equation}
    \text{Prune}(x,\theta)=\left\{\begin{matrix}
        x \text{ if }\abs{x}>\theta\\
        0 \text{ if }\abs{x}\leq\theta
    \end{matrix}\right.
\end{equation}
This function can be vectorized for matrices $W\in\mathbb{R}^{N\times M}$ s.t. $\text{Prune}(W,\theta)_{i,j}=\text{Prune}(W_{ij},\theta)$.\\
There are three types of pruning that can be done:\begin{enumerate}
    \item \textbf{Singular Vector Pruning:} Small values in the singular vectors may be interpreted as noise. This step helps protect the model during pruning by enhancing the low-rank structure and preserving accuracy. 
    \item \textbf{Singular value Pruning:} Although small singular values could be considered noise according to our suppositions, experiments showed that pruning these values significantly degraded the network’s performance. Therefore, this technique was excluded from our final pruning procedure.
    \item \textbf{Direct coefficients pruning:} We may also directly set to zero small coefficients in the weights layer, as they may be interpreted as noise. 
\end{enumerate}
After pruning, we observe that applying a combination of $L_1$ and $L_2$ regularization helps in refining the weight matrices, promoting sparsity while maintaining the network’s performance.\\
To guide the pruning and define what is interpreted as noise, we introduce two key metrics:
\begin{itemize}
    \item \textbf{Spike Metric $\gamma$:} This metric quantifies the proportion of spikes in the weight matrices of the layers. It is defined as:
    \[
    \gamma = \frac{1}{N} \cdot \text{Card}\left\{ \sigma \leq \sqrt{\lambda_+ N}, \text{ for } \sigma \text{ singular value of } W_{\ell} \right\}
    \]
    where $N$ is the size of the weight matrix $W_{\ell}$, and $\lambda_+$ is the upper edge of the Marchenko-Pastur distribution fitted to the Empirical Spectral Density (ESD) of $X_{\ell} = \frac{1}{N} W_{\ell}^T W_{\ell}$. In the base ViT described below, its average value is 88.55\%. This metric suggests how large the rank of the deterministic matrix $S$ is compared to the total rank of $W$. See Appendix \ref{numerics_updated} for more on spikes. 

    \item \textbf{MP Fit Metric $\mu$:} This metric measures the error in fitting the MP distribution to the ESD of $X_{\ell} = \frac{1}{N} W_{\ell}^T W_{\ell}$. It is computed based on a fitting algorithm (see Subsection~\ref{Alignment_Evaluation}). In the base VIT described below, its average value is 0.3. 
\end{itemize}

\subsubsection{Singular vector pruning}

For a triplet $(v,w,\sigma)$ of respectively left, right singular vectors and singular value, we apply pruning such that: \begin{equation}
    v \leftarrow \text{Prune}\left(v,\theta\max\left\{\frac{1}{750},\left[1-\frac{\sigma}{\sqrt{\lambda_+}}\right]^{30}\right\}\right)
\end{equation}
A similar update is performed on $w$. The specifics of this pruning were found by extensive experiments. The pruning of the components of the singular vectors is based on their size, with smaller components being pruned. The threshold of this pruning is based on $\theta= .00001125 \times r \times N \times M$, with $N$ and $M$ being the sizes of the matrix. This process reduces the complexity of the model by removing less significant components of the singular vectors. It seems to "protect" the DNN during the pruning phase and make the weight layers "more" low rank, see Subsection \ref{SV-sparsification}. 

\subsubsection{Direct coefficients pruning}

To prune the weight matrices, we use the metric $\zeta_1(t)$, which depends on the randomness of the layer and the current pruning cycle $t$. It is defined as:
\begin{equation}
\label{zeta_equation}
\zeta_1(t) = \left[ (1 - \mu) \cdot \gamma \right]^{\frac{1.5}{t}} \cdot r \cdot \text{Card}\left\{ | W_{\ell} |_{ij} > 0 \right\}
\end{equation}
where:
\begin{itemize}
    \item $\mu$ is the MP Fit Metric of the layer.
    \item $\gamma$ is the spike metric of the layer.
    \item $r$ is the desired proportion of parameters to remove in each pruning cycle (set to $r = 6\%$).
    \item $t$ is the current pruning cycle (with total cycles $N_{\text{cycle}} = 19$).
    \item $\text{Card}\left\{ | W_{\ell} |_{ij} > 0 \right\}$ is the number of non-zero parameters in the weight matrix $W_{\ell}$.
\end{itemize}

The above equation for $\zeta_1(t)$ was derived through a process of trial and error, but it also embodies some common sense principles. In particular, weight matrices that are determined to be more random—based on the randomness criteria provided by the MP Fit Metric ($\mu$) and the spike metric ($\gamma$)—are pruned more aggressively in the initial pruning cycles. This approach leverages the idea that less structured (i.e., more random) weights are likely to be less critical to the network's performance.

The exponent $\frac{1.5}{t}$ ensures that as we progress through the pruning cycles (i.e., as $t$ increases), the influence of the randomness metrics $\mu$ and $\gamma$ diminishes. This design allows us to focus more on pruning layers that are identified as more random in the early cycles and shift towards a more uniform pruning strategy in later cycles.
Using this, each matrix is updated using the following rule:\begin{equation}
    W_{\ell}\leftarrow \text{Prune}(W_\ell,f\max\{3,5[(1-\mu)\gamma]^{\frac{1.5}{t}}\})
\end{equation}
where $f$ is the pruning factor. $f$ is found to be the smallest number so that the total amount of weights pruned in the matrix is larger or equal to $\zeta_1(t)$. In practice, $f$ is initialized at $f=1e-6$ and increased by steps of $5e-6$ until the total amount of weights pruned is more than $\zeta_1(t)$. 

\subsubsection{Regularization}
\label{regu}

Regularization helps mitigate potential accuracy loss due to pruning. It also helps in refining the weight matrices after pruning, promoting sparsity while maintaining the network's performance. 

After every pruning cycle, we minimize the loss function 
\[
\mathcal{L} = \mu_1 \| W \|_1 + \mu_2 \| W \|_2^2,
\]
with $\mu_1 = 5 \times 10^{-6}$ and $\mu_2 = 2 \times 10^{-6}$ using SGD and a learning rate of $0.05 \times 10^{-6}$ (without any training data), so that although an infinite number of minimization steps would drive all weights to zero, the finite number of epochs applied causes the weights to simply become smaller—with some small weights vanishing—thus allowing us to decay the weight magnitudes without directly pruning them. Again, note that no training data is used during this regularization, and there are no other constraints for this minimization.

\subsection{Pruning procedure overview}

Having defined the different steps used in our procedure, we may now describe the overall process. To progressively remove weights, we perform $N_{cycle}=19$ times the following pruning cycle. First, we \textbf{prune the singular vectors} (only every other cycle). Then, coefficients of the weight matrix are pruned by \textbf{direct coefficient pruning}. Finally, we perform \textbf{regularization} for $n_{reg}$ epochs. We start with $n_{reg}=15$ and increase by 5 epochs each cycle, up to a maximum
of 40 epochs.

\paragraph{Fine-tuning} Finally, after completing all pruning and regularization cycles, we fine-tune the model to recover any potential loss in accuracy. This fine-tuning is performed while keeping the pruned weights frozen, thus keeping the sparsification we obtained through the procedure described above. We use SGD with a momentum of $0.9$, a base learning rate of $3 \times 10^{-5}$ with the cosine annealing scheduler and a warmup phase of 2 epochs (learning rate divided by 100). The model is fine-tuned for 20 epochs over the entire training set using cross-entropy loss. Finally, the gradient norm is clipped at $1$ to prevent exploding gradients.\\
Results are presented before and after fine-tuning.

\subsection{ Results for the visual transformer Vit\_Base\_Patch16\_224 Model}

Our numerical simulations explore the pruning of a pre-trained \texttt{vit\_base\_patch16\_224} model using RMT. This model has approximately 86.5 million parameters and archives a $85.1\%$ top-1 accuracy on the ImageNet validation set. We reduce the DNN's size by 30-40\% and evaluate its accuracy. We assess the model's top-1 and top-5 accuracy on the ImageNet validation set, consisting of $ \approx 200,000$ objects. 

\begin{remark}
    Some layers of this DNN model fit the MP distribution (based on Algorithm \ref{Alignment_Evaluation}) with an error of $1\%$ while others fit the MP distribution with an error of over $50\%$, see Subsection \ref{alpha_and_metrics} for more.    The spike metric and MP fit metric depends on $\alpha$, see Subsection \ref{alpha_and_metrics}. When $\alpha=.25$, we have that LRM is $86.25$ and MP fit metric of this ViT model is $.18$.  
\end{remark}

As the pruning progresses, the process yields a series of models with varying levels of sparsity and performance metrics. The final model selection can be based on a trade-off between accuracy and complexity. See Fig. \ref{ViT_example1_sparsification_with_RMT_1_FT} for the accuracies of the ViT model vs the number of parameters kept (there is no fine-tuning yet). We see that the DNN that was pruned by approximately $30\%$ has a reduction in top-1 accuracy by only $1.1\%$. After 1 epoch of fine-tuning the ViT model on the ImageNet training set, the DNN accuracy improved to $84.5\%$, resulting in a total decrease in accuracy of only $.6\%$.

\begin{figure}[h!]%
\label{vit_pruning_spar}
    \centering
    \subfloat[\centering Top 1 accuracy vs params. kept ]{{\includegraphics[width=7cm]{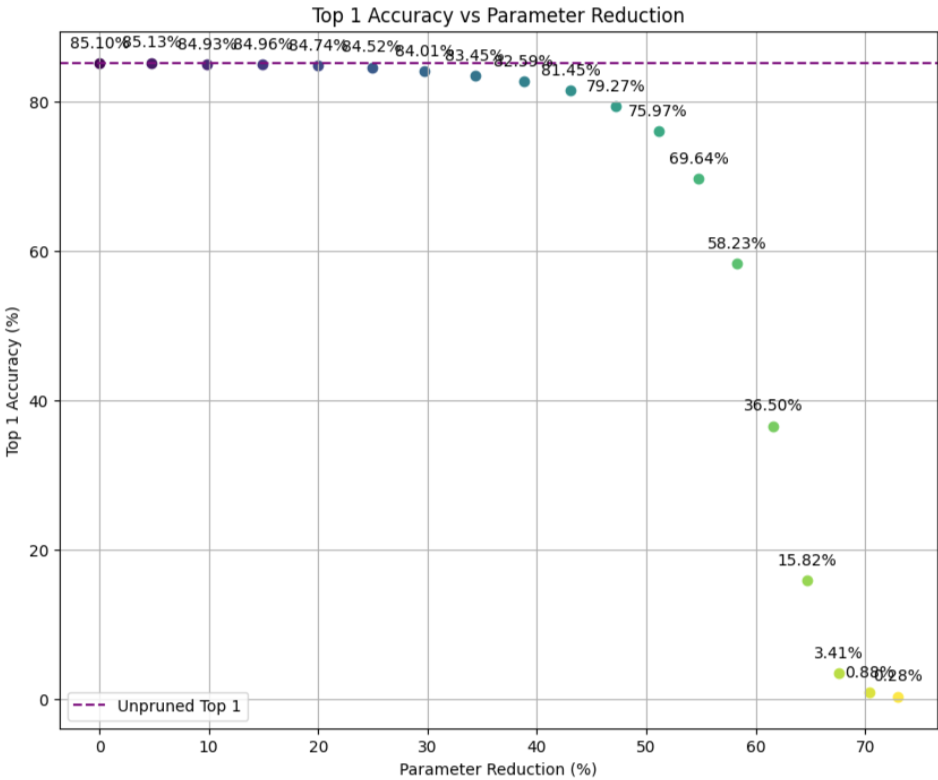} }}%
    \qquad
    \subfloat[\centering Top 5 accuracy vs params. kept ]{{\includegraphics[width=7cm]{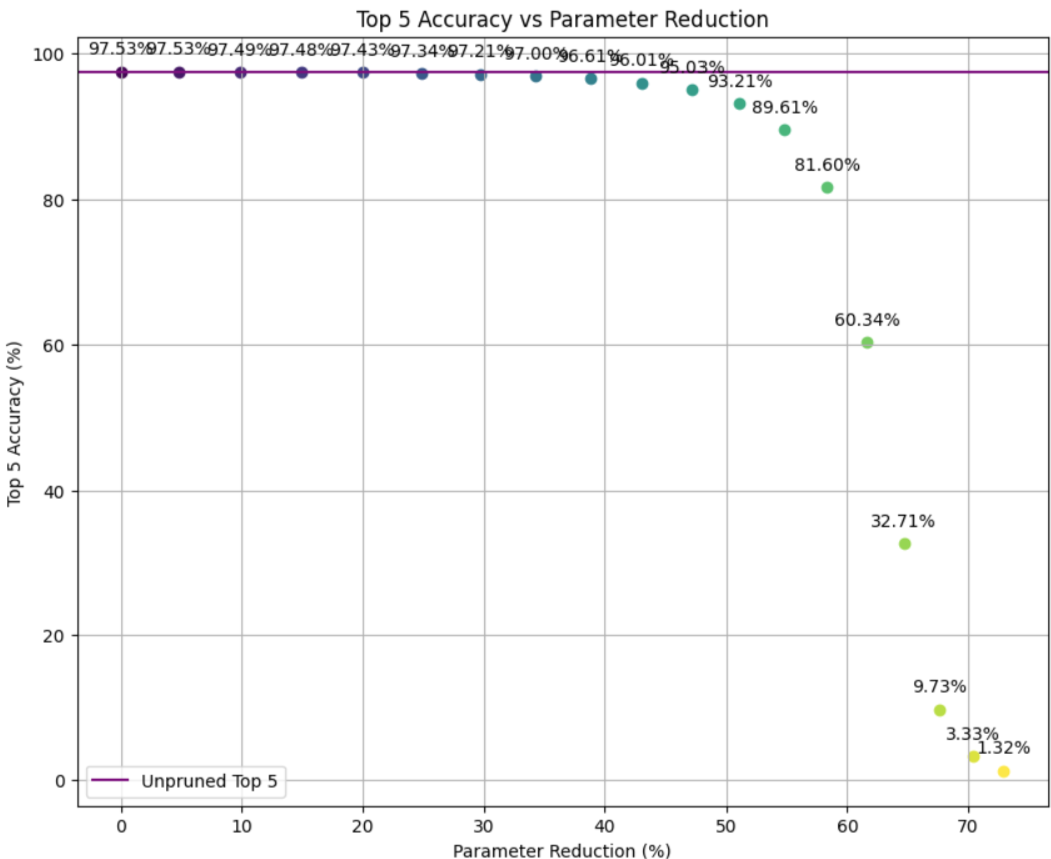} }}
    \caption{The top 1 accuracy and top-5 accuracy of ViT-base vs. percentage of parameters kept for pruning through RMT-based sparsification and no fine-tuning. The accuracy of the DNN is given as a percentage above the data points.}
    \label{ViT_example1_sparsification_with_RMT_1_FT}%
\end{figure}

\subsection{Pruning the visual transformer Vit\_large\_Patch16\_224 Model}
\label{large_vit_pruning_with_FT}

We performed the same pruning strategy on the \text{Vit\_large\_Patch16\_224 Model}. This ViT model is bigger than the one in the previous subsection, having approximately 300 million parameters. It achieves a $85.85\%$ accuracy on the ImageNet validation set. In Fig. \ref{ViT_larger}, we see the accuracy vs percentage of parameters kept for this pruning. After pruning $30\%$ of the DNN parameters, the accuracy only drops by $.8\%$. Again, after fine-tuning for only one epoch, we recover that full reduction in accuracy (i.e., has a 85.85\% accuracy). Finally, we also took the DNN for which $50\%$ of the parameters were pruned and fine-tuned for $12$ epochs, obtaining a $84.48\%$ accuracy after the fine-tuning.

\begin{figure}[h!]%
\label{vit_pruning_spar}
    \centering
    \subfloat[\centering Top 1 accuracy vs params. kept ]{{\includegraphics[width=7cm]{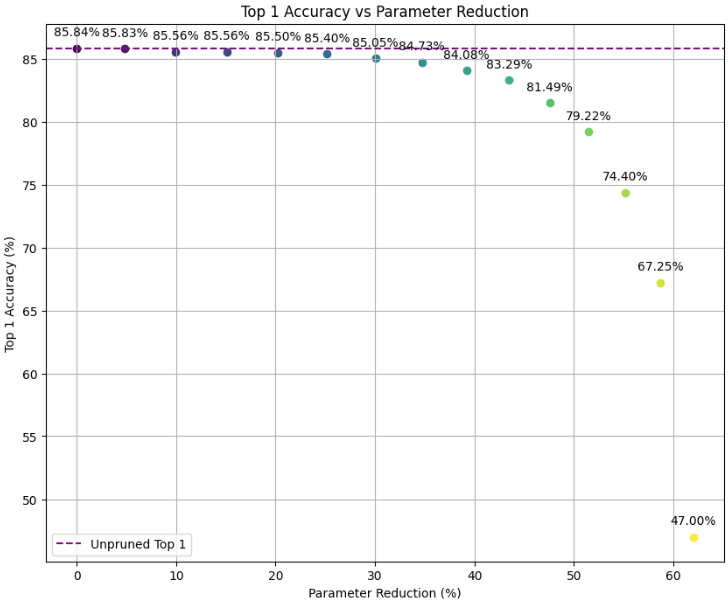} }}%
    \qquad
    \subfloat[\centering Top 5 accuracy vs params. kept ]{{\includegraphics[width=7cm]{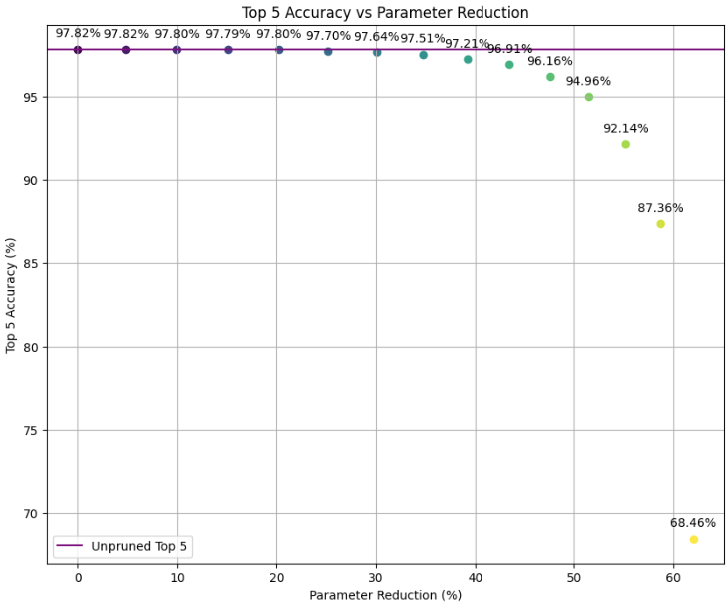} }}
    \caption{The top 1 accuracy and top-5 accuracy of ViT-large model vs. percentage of parameters kept for pruning through RMT-based sparsification and no fine-tuning. The accuracy of the DNN is given as a percentage above the data points.}
    \label{ViT_larger}%
\end{figure}

\subsection{Pruning results for ViT-B16/224 and ViT-L16/224 on ImageNet}

In this section, we compare our pruning results with CP-ViT \cite{song2022cp}, using the same architecture but different pre-trained weights. Our initial accuracies are higher: 85.1\% for ViT-B16/224 and 85.85\% for ViT-L16/224, compared to 77.91\% and 76.5\% in CP-ViT. Figure \ref{fig:res_comparison_largevit} shows the pruning results with FLOP reduction and accuracy reduction. Our fine-tuning was done for $20$ epochs, while the fine-tuning in \cite{song2022cp} was done with $30$ epochs.

\begin{figure}
    \centering
    \includegraphics[width=1\linewidth]{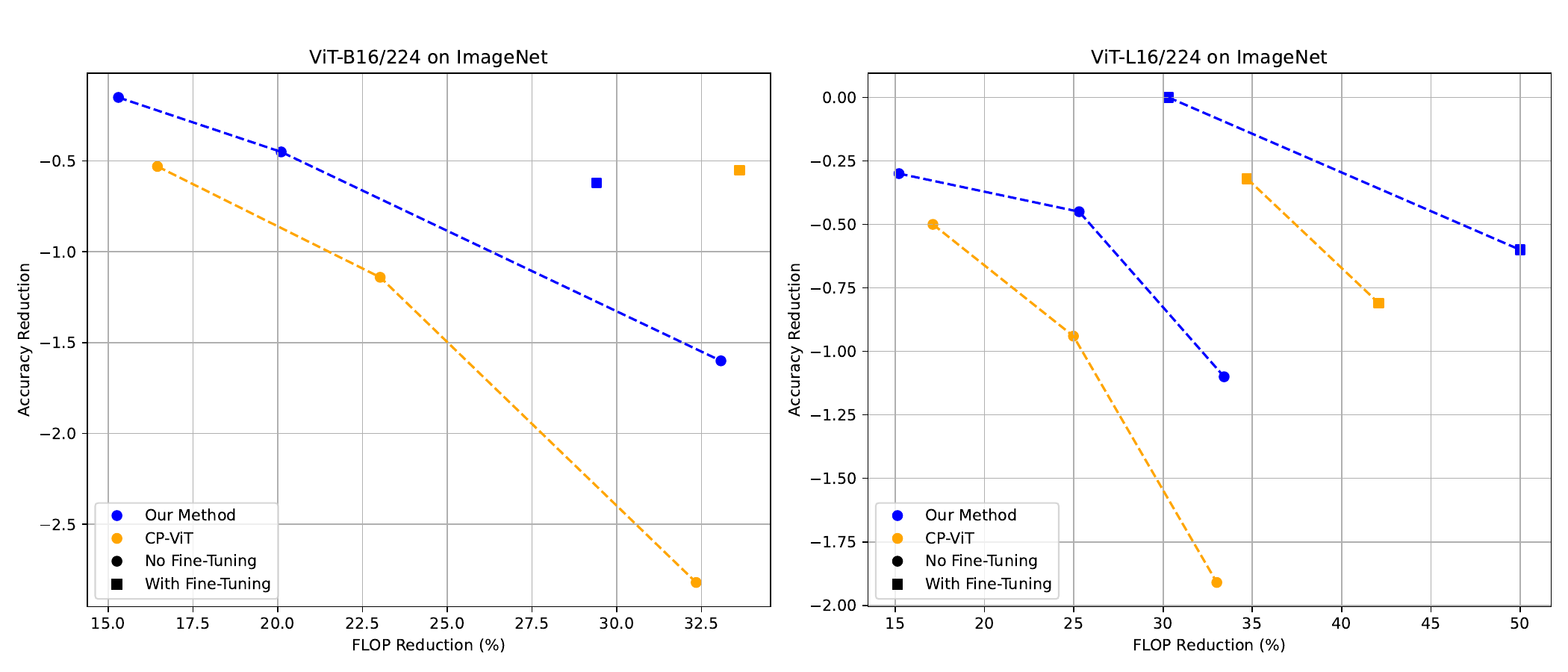}
    \caption{Comparison of pruning results between our method and CP-ViT \cite{song2022cp}}
    \label{fig:res_comparison_largevit}
\end{figure}

\subsection{RMT-based pruning of DeiT}

We perform the same pruning strategy with the same hyperparameters from Subsection \ref{pruning_strategy} on the three DeiT models, tiny, small and base. Table \ref{tab:comparison_table_2} shows the result of our RMT-based pruning method and compares them with other pruning methods, both with and without fine-tuning. For these models, we fine-tune for only $7$ epochs, which is much less than the fine-tuning done in the other works; for example, in \cite{song2022cp}, they fine-tune for $30$ epochs. From Table \ref{tab:comparison_table_2} and Fig. \ref{fig:res_comparison_largevit}, we see that we always outperform \cite{song2022cp} when there is no fine-tuning. Furthermore, Fig. \ref{fig:res_comparison_largevit} shows that we outperform \cite{song2022cp} for large DNNs even with fine-tuning, which makes sense from an RMT perspective given that the larger the DNN and the more parameters we have, the more RMT applies. Given the massive DNNs being trained these days, with over a trillion parameters, this RMT perspective becomes especially important.

\begin{table}[ht]
\centering
\caption{Comparison with different ViT pruning methods on the ImageNet dataset. The accuracy without finetuning was not mentioned in VTP, PoWER, and HVT but were replicated in \cite{song2022cp}. We fine-tune for only $7$ epochs on the ImageNet training set.}
\label{tab:comparison_table_2}
\begin{tabular}{lccccc}
\toprule
 & \multicolumn{2}{c}{Not Finetune} & \multicolumn{2}{c}{Finetune} \\
\cmidrule(lr){2-3} \cmidrule(lr){4-5}
Model & Top-1 Acc.(\%) & FLOPs Saving & Top-1 Acc.(\%) & FLOPs Saving \\
\midrule
\multicolumn{5}{c}{\textbf{DeiT-Ti 16/224 \cite{touvron2021training}}} \\
Baseline \cite{touvron2021training}  & 72.20 & -           & 72.20 & -           \\
VTP \cite{zhu2021visual}           & 69.37 (-2.83) & 21.68\%   & 70.55 (-1.65) & 45.32\%   \\
PoWER \cite{goyal2020powerbert}     & 69.56 (-2.64) & 20.32\%   & 70.05 (-2.15) & 41.26\%   \\
HVT \cite{pan2021scalable}          & 68.43 (-3.77) & 21.17\%   & 70.01 (-2.19) & 47.32\%   \\
CP-ViT \cite{song2022cp}           & 71.06 (-1.14) & 23.02\%   & 71.24 (-0.96) & 43.34\%   \\
RMT-based (ours)        & \textbf{71.09 (-1.11)} & 24.83\% &  &  \\
\midrule
\multicolumn{5}{c}{\textbf{DeiT-S 16/224 \cite{touvron2021training}}} \\
Baseline \cite{touvron2021training}  & 79.80 & -           & 79.80 & -           \\
VTP \cite{zhu2021visual}           & 77.35 (-2.45) & 20.74\%   & 78.24 (-1.56) & 42.52\%   \\
PoWER \cite{goyal2020powerbert}     & 77.02 (-2.78) & 21.46\%   & 78.30 (-1.50) & 41.36\%   \\
HVT \cite{pan2021scalable}          & 76.72 (-3.08) & 20.52\%   & 78.05 (-1.75) & 47.80\%   \\
CP-ViT \cite{song2022cp}           & 78.84 (-0.96) & 20.96\%   & 79.08 (-0.72) & 42.24\%   \\
RMT-based (ours)       & \textbf{78.91 (-0.89)} & 23.72\% & 78.34 (-1.46)  & 41.92\%   \\
\midrule
\multicolumn{5}{c}{\textbf{DeiT-B 16/224 \cite{touvron2021training}}} \\
Baseline \cite{touvron2021training}  & 81.82 & -           & 81.82 & -           \\
VTP \cite{zhu2021visual}           & 79.46 (-2.36) & 19.84\%   & 80.70 (-1.12) & 43.20\%   \\
PoWER \cite{goyal2020powerbert}     & 79.09 (-2.73) & 20.75\%   & 80.17 (-1.65) & 39.24\%   \\
HVT \cite{pan2021scalable}          & 78.88 (-2.94) & 20.14\%   & 79.94 (-1.88) & 44.78\%   \\
CP-ViT \cite{song2022cp}           & 80.91 (-0.91) & 22.16\%   & 81.13 (-0.69) & 41.62\%   \\
RMT-based (ours)      & \textbf{81.44 (-0.38)} & 21.38\% & 80.58 (-1.24) & 44\%  \\
\bottomrule
\end{tabular}
\end{table}

\section{Main theoretical results}
\label{Main_theory}

\subsection{Assumptions for main theorems: Gaussian case}
\label{assumptions}

For simplicity of the presentation, we choose a DNN with only three weight layer matrices and have other simplifying assumptions. However, many of the assumptions on the DNN and the weight layer matrices can be relaxed. For example, we can include arbitrary layers (see Subsection \ref{proof_main_theorem}),  components of $R_2$ can be i.i.d. from a distribution different then the normal distribution, and other activation functions can be used (such as Leaky ReLU).  In Section \ref{Main_theory_gen}, we present these results for more general DNN structures.  Recall the \(\ell_1\) norm of a matrix \(W\) with entries \(w_{ij}\):
\[
\|W\|_{1} = \max_{1 \leq j \leq n} \sum_{i=1}^m |w_{ij}|.
\]

\begin{assumption}
\label{as1}
 \textbf{Architecture of the DNN and bounds on the norms of weight layer matrices:} We consider the following DNN $\phi$:
\begin{align}
X(\alpha,s) &=  \lambda \circ W_3 \circ \lambda \circ W_2 \circ \lambda \circ W_1 s, \hspace{.4cm} s \in T,\\
\phi(\alpha,s)&=\rho \circ X(\alpha,s)
\end{align}
where $W_1, W_2, W_3$ are the weight layer matrices and $\lambda$ is the absolute value  or ReLU activation function, $T$ is the training set, $X(\alpha,s)$ outputs the components of the DNN, and $\rho$ is the softmax. We also denote $\{W_l(t)\}_{l=1}^3$ the weight layer matrices to highlight the dependence in training time $t$\\
We wish to study the behavior of the inner layer $W_2$ for large $N$, as training progresses. Thus, we assume the outer layers satisfy:
For all $ s $ in the training set $T$, the quantity 
\begin{equation}\label{eqans}
a(N,s):=  \frac{\| W_3 \|_1 \| W_1 s \|_2}{N^{1.5 / 4}}
\end{equation}
converges towards $ 0$ as $N \to \infty$ (see Subsection \ref{details_of_C1} for more details). 

\end{assumption}

\begin{assumption}
\label{as2}

\textbf{Deformed matrix form of matrix $W_2$:}

The $N \times N$ matrix $W_2(t)$, is such that: 
\begin{equation}
    W_2(t) = R_2(t) + S_2(t), \quad (\textit{deformed matrix})
\end{equation}
where $R_2(t)$ is a matrix with i.i.d. random normal entries that have mean zero and variance less than $\frac{1}{N}$, i.e. such that $R_2(t)_{i,j}\overset{i.i.d.}{\sim}\mathcal{N}(0,\frac{g(t)}{N})$, where $g(t) \leq 1$. The bound of $g(t)$ is unimportant and taken to be less than 1 for simplicity. It is necessary that $Var[R_2(t)_{i,j}]=\mathcal{O}(\frac{1}{N})$ so that $R_2(t)$ has a bounded spectral norm as $N\to\infty$.  
\begin{remark}Similar results will apply with a rectangular matrix $W_2(t)\in\mathbb{R}^{N\times M}$ such that\begin{equation}
    \frac{M}{N} \to c \in (0,+\infty)
\end{equation}
\end{remark}
\end{assumption}

\begin{assumption}
\label{as3}
    \textbf{ Low-rank nature of $S_2(t)$:}
    
    Let $\lambda_+$ be the rightmost edge of the Marchenko-Pastur distribution (see \eqref{lambda_parameters}) for the empirical spectral distribution of the Wishart matrix $R_2^T(t) R_2(t)$ as $N \to \infty$. Writing $S_2(t)$ in its singular value decomposition (SVD) form with singular values $\sigma_i(t)$ and singular vectors $u_i(t), v_i(t) \in \mathbb{R}^N$:
\begin{equation}
\label{low_rank}
S_2(t) = \sum_{i=1}^N \sigma_i(t) u_i(t) v_i^T(t) = \sum_{i=1}^r \sigma_i(t) u_i(t) v_i^T(t),
\end{equation}
the singular values $\sigma_i(t)$ satisfy:
\begin{align}
\sigma_i(t) &= 0 \quad (N \geq i > r) \quad \text{indicating low-rank nature of $S_b(t)$}\\
\sigma_i(t) &> \frac{\sqrt{\lambda_+}}{2}  \quad (1 \leq i \leq r) \quad \text{to ensure eigenvalues extend beyond $\lambda_+$}.
\end{align}
We assume that as $N \to \infty$, $r$ remains constant, ensuring the low rank of $S_2(t)$.
\end{assumption}

\begin{remark}
     For a detailed numerical analysis of  $a(N,s)$, see Subsection \ref{details_of_C1}.
\end{remark}

\subsection{A theorem on the asymptotic magnitude of the noise}

We consider the cross-entropy loss function with L2 regularization for a DNN $\phi(\cdot,\alpha(t))$, expressed as follows:
\begin{equation}
\label{loss_noise_det_1}
 L(\alpha(t))=-\frac{1}{|T|}\sum_{s\in T}\log\left(\phi_{C(s)}(s,\alpha(t))\right) + \mu \sum_{i=1}^L \|W_i(t)\|_F^2, \space  \mu >0,
\end{equation}
where $C(s)$ denotes the correct class of the object $s$. The term $\mu$ acts as a regularization hyperparameters and prevents overfitting in the DNN.

\begin{thm}
\label{main_result_reducing_noise}

Suppose Assumptions \ref{as1}, \ref{as2} and \ref{as3} hold. If the parameters $\alpha$ of the DNN converge as $t \to \infty$ to a local minimum of the loss \eqref{loss_noise_det_1}, denoted $\alpha^*(N)$, as $t\to\infty$, then $\forall \epsilon>0$:

\begin{equation}
\lim_{N \to \infty} \lim_{t\to \infty}\mathbb{P}(\|R_2\|_F \leq \epsilon) \to 1.
\end{equation}

\end{thm}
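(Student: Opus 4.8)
The plan is to show that the $L2$-regularized loss \eqref{loss_noise_det_1}, evaluated at $W_2 = R_2 + S_2$, can always be strictly decreased by shrinking $R_2$ unless $\|R_2\|_F$ is already small, so any local minimum $\alpha^*(N)$ must have a small noise component with high probability. The key is to compare the loss at $\alpha^*$ with the loss at a modified parameter vector in which $W_2$ is replaced by $S_2$ alone (i.e. $R_2$ removed), keeping all other layers fixed. By the invocation of Lemma~\ref{main_result_remove _R} (which states removing $R_l$ does not decrease accuracy, and whose cross-entropy estimate presumably relies on Assumption~\ref{as1} through the quantity $a(N,s)$ in \eqref{eqans}), the cross-entropy term changes by at most something that vanishes as $N\to\infty$. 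Meanwhile the regularization term changes by exactly $\mu(\|S_2\|_F^2 - \|W_2\|_F^2) = \mu(\|S_2\|_F^2 - \|R_2+S_2\|_F^2)$.

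First I would make the regularization bookkeeping precise. Since $R_2$ has i.i.d.\ $\mathcal{N}(0,g(t)/N)$ entries and $S_2$ has rank $r$ bounded independent of $N$, the cross term $\langle R_2, S_2\rangle_F$ is a sum of $rN$ independent mean-zero terms and concentrates: $|\langle R_2,S_2\rangle_F| = o(\|R_2\|_F^2)$ with high probability as $N\to\infty$, while $\|R_2\|_F^2 \to g\cdot$(something of order $1$)$\cdot$... more carefully, $\|R_2\|_F^2$ concentrates around $g(t)$ times the aspect-ratio constant, hence is $\Theta(1)$, not vanishing. Therefore $\|W_2\|_F^2 - \|S_2\|_F^2 = \|R_2\|_F^2 + 2\langle R_2,S_2\rangle_F$ is bounded below by a positive constant with high probability. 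So replacing $W_2$ by $S_2$ decreases the regularization term by a definite positive amount $\mu c_0 > 0$ (w.h.p.), while changing the cross-entropy term by at most $o_N(1)$. For $N$ large this is a net strict decrease, contradicting local minimality — unless $\|R_2\|_F$ was already below $\epsilon$.

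The argument then needs one more step: a genuine \emph{local} minimum only forbids nearby improving directions, and replacing $W_2$ by $S_2$ is an $O(1)$-sized move, not infinitesimal. To fix this I would instead consider the one-parameter family $W_2^{(\eta)} = S_2 + (1-\eta)R_2$ for $\eta \in [0,1]$ and differentiate the loss along it at $\eta = 0$: the derivative of the regularization term is $-2\mu\langle R_2, R_2 + S_2\rangle_F = -2\mu(\|R_2\|_F^2 + \langle R_2,S_2\rangle_F)$, which is strictly negative w.h.p.\ when $\|R_2\|_F \geq \epsilon$, while the derivative of the cross-entropy term along this direction is controlled (again via the bounds behind Assumption~\ref{as1} / Lemma~\ref{main_result_remove _R}, since shrinking $R_2$ moves the network output by a controlled amount scaling with $a(N,s)$). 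For $N$ large the regularization gradient dominates, so the directional derivative is negative — meaning $\alpha^*$ is not a local minimum. Taking $t\to\infty$ first (so $\alpha(t)\to\alpha^*(N)$ and the decomposition is at the minimizer) and then $N\to\infty$, we get $\mathbb{P}(\|R_2\|_F \leq \epsilon)\to 1$.

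The main obstacle is making the interplay between the two limits and the directional-derivative estimate rigorous: one must ensure that the cross-entropy directional derivative along the shrink-$R_2$ direction is $o_N(1)$ uniformly, which requires the Frobenius-to-operator-norm control of $R_2$ (bounded spectral norm from Assumption~\ref{as2}) to be fed through the Lipschitz constant of the composition $\lambda\circ W_3\circ\lambda\circ(\cdot)\circ\lambda\circ W_1 s$, and the vanishing of $a(N,s)$ from Assumption~\ref{as1} is exactly what delivers this. A secondary subtlety is that $g(t)$ itself could conceivably tend to $0$ as $t\to\infty$ (in which case $\|R_2\|_F$ is already small for reasons unrelated to the regularization trade-off); the statement is still true in that case, so I would split into the cases $\liminf_t g(t) = 0$ (trivial) and $\liminf_t g(t) > 0$ (the argument above), and handle the limiting decomposition at $\alpha^*(N)$ by a continuity/closedness argument on the noise-signal split.
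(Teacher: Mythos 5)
Your proposal is correct in its essential logic and follows the same contradiction scheme as the paper: assume $\|R_2\|_F$ is not small at the local minimum, shrink the random part, invoke the perturbation bound (Lemma~\ref{main_result_remove _R}, via $a(N,s)$ from Assumption~\ref{as1}) to show the cross-entropy term moves by only $o_N(1)$, and observe that the regularization term $\mu\|W_2\|_F^2$ strictly decreases — yielding a contradiction with local minimality. The paper's proof (Appendix~\ref{main_theorem_proof}) constructs a path $W_2(a) = R_2(a) + S_2(0)$ with $R_2(a)$ having variance $g(a)/N$ and $g(a) \to 0$, argues via Lemma~\ref{main_result_remove _R_loosandacc} that $L_{\cancel{\text{reg}}}$ is essentially unchanged along the path, and via Lemma~\ref{theorem_time_dependent_variance} that the singular values of $W_2(a)$ (hence $\|W_2(a)\|_F^2$) decrease monotonically.

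Where you genuinely add something: you notice, and fix, a gap that the paper glosses over. A local minimum only forbids \emph{nearby} improvements, yet replacing $W_2$ by $S_2$ outright is an $O(1)$ move. The paper's ``path'' $R_2(a)$ is described only by its marginal distribution, and a fresh draw $R_2(a)$ for $a$ close to $0$ is not close to $R_2(0)$ in norm, so the path need not stay near $\alpha^*$. Your parameterization $W_2^{(\eta)} = S_2 + (1-\eta)R_2$ — a literal rescaling of the fixed noise realization — makes the path continuous in $\eta$, and the directional-derivative computation at $\eta = 0$ (regularization derivative $-2\mu(\|R_2\|_F^2 + \langle R_2,S_2\rangle_F)$ negative and dominant, cross-entropy derivative $o_N(1)$) cleanly lands the contradiction. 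This is what the paper's argument implicitly requires and what the student's version supplies. Your case split on $\liminf_t g(t)=0$ is also a sensible precaution, though it is absorbed automatically once one works with the limiting decomposition at $\alpha^*(N)$.

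One factual slip that does not damage the argument but should be corrected: you assert $\|R_2\|_F^2$ concentrates at a $\Theta(1)$ level. Since $R_2$ is $N\times N$ with i.i.d.\ entries of variance $g/N$, actually $\mathbb{E}\|R_2\|_F^2 = N^2 \cdot g/N = Ng$, i.e.\ $\Theta(N)$ for fixed $g$; this is precisely the scaling recorded in \eqref{eq:frob_conv} of the paper. The error goes in your favor — the regularization drop is even larger relative to the $o_N(1)$ cross-entropy cost — but it also explains why the theorem's conclusion $\|R_2\|_F \leq \epsilon$ is a much stronger statement than ``$g$ is small'': it forces the effective noise variance at the local minimum to be $o(1/N)$. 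Keeping this scaling correct also makes the bound on the cross term sharp: $\langle R_2, S_2\rangle_F = \sum_{k=1}^r \sigma_k\, u_k^\top R_2 v_k$ is Gaussian with variance $g\|S_2\|_F^2/N = O(g/N)$, so it is $o(\|R_2\|_F^2)$ w.h.p.\ as needed.
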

\noindent The proof of this theorem can be found in Section \ref{proof_main_theorem}.

The theorem states that the random part of the weight matrix $W_2$ vanishes as the network is trained and its parameters converge towards a local minimum ($\alpha(t) \to \alpha^*$). 
 It is impossible to decompose  $W_2$  into  $R_2$ + $S_2$ , where  $R_2$  satisfies Assumption 2 and  $S_2$  satisfies Assumption 3. If such a decomposition were possible, the ESD of  $W_2$  would exhibit an MP bulk and spikes (see \cite{baik2005phase, benaych2011eigenvalues, couillet2022random} and Appendices \ref{singualr_values_simple_case} and \ref{finding_lambda_updated}). However, at the local minimum, as we will show,  $W_2$ ’s ESD lacks these features, ruling out any such decomposition.

It should be noted that for a matrix $R$ with i.i.d, centered Gaussian entries, 
\begin{align}
     \frac{1}{N} \lVert R\rVert_2^2\overset{a.s.}{\underset{N\to\infty}{\sim}} 4 \ Var[R_{ij}]\label{eq:spec_conv}\\
    \lVert R\rVert_F^2\overset{a.s.}{\underset{N\to\infty}{\sim}}N^2Var[R_{ij}].\label{eq:frob_conv}
\end{align} Through Assumption 2, we assume i.i.d Gaussianity with mean zero, and $Var[R_{ij}]=\mathcal{O}(\frac{1}{N})$. However, we may see from (\ref{eq:frob_conv}) that this does not straightforwardly imply $\lVert R\rVert_F\to 0$, as the theorem proves. Hence, we prove that the random fluctuations of the weight matrices have to be much smaller than originally assumed.

 One might assume that a local minimum of the loss is deterministic, and thus the random part of the weights would have to vanish. While intuitive, this reasoning is not accurate. First, the theorem states that for \textbf{any} decomposition of $W_2$ into $R_2+S_2$ satisfying the above assumptions we would have that $\lim_{N \to \infty} \lim_{t \to \infty} 
 \mathbb{P}\left( \| R_2 \|_F \leq \epsilon \right) \to 1$. This means that at a local minimum of the loss if one takes a nonzero random matrix $R_2$ and then sets $S_2=W_2-S_2$, we would have that $S_2$ will \textbf{not} satisfy Assumption 3 (with some probability depending on $N$). Otherwise, the theorem would be violated. Furthermore, the training data $T$ itself might have noise, and so the loss landscape, which depends on the training data, might have some noise/randomness. That is, the local minima might be somewhat random (meaning that the parameters $\alpha$ at the local min might be random).  However, when we train with regularization, so long as we can decompose $W_2$ into $R_2+S_2$ satisfying Assumptions 2 and 3, we have that $\lim_{N \to \infty} \lim_{t \to \infty} 
 \mathbb{P}\left( \| R_2 \|_F \leq \epsilon \right) \to 1$.

The role of the regularization in \eqref{loss_noise_det_1} is well known in Machine Learning. In particular, it limits the amount of noise from the data being learned, as the DNN's weight may otherwise reflect the random nature of the data, see \cite{staats2022boundary}. In our theory, it is essential in showing that $\lVert R\rVert_F\to 0$. Thus regularization successfully mitigates randomness in the network. It is important to note that Assumption 2, which ensures that the deterministic matrix $S_2(t)$ has no small nonzero singular values, essentially is a statement about the data $T$. It says that any noise from the data cannot be (in some sense) "much larger" than the deterministic part of the data (relative also to the hyperparameter $\mu$).

 It is interesting to note that this phenomenon does not necessarily occur when there is no regularization.  In fact, without regularization, one can replace the random matrix $R_2$ with a different random matrix (or add any random matrix to $W_2$), and the magnitude of the loss will not change. Thus, around every local minima, there would be a flat region of minima, all corresponding with parameters $W_2+R^*_2$, with $R^*_2$  being some random matrix satisfying Assumptions 2 and 3.

\subsection{A theorem on how pruning randomness reduces loss}

We can also show that reducing the randomness in the parameters reduces the loss. Specifically, we show that for $W_2=R_2+S_2$, if we replace $W_2$ in the DNN with $S_2$ (at some fixed time) then with some probability (which goes to 1 as $N \to \infty$) we have that the loss of the DNN with $S_2$ is smaller than the loss of the DNN with $W_2$ by the amount $\mu\|R_2\|^2_F$. 

\begin{thm}
\label{theorem_reduction_of_loss}
Suppose  Assumptions \ref{as1}, \ref{as2} and \ref{as3} hold. For $W_2=R_2+S_2$,  remove $R_2$ so that the second weight layer matrix is $S_2$ alone. Then $\forall \epsilon>0$ and any fixed training time t we have

\begin{equation}
\lim_{N \to \infty}\mathbb{P}(L(\alpha_{S_2}) < (1+\epsilon) (L(\alpha_{W_2})-  \mu\|R_2\|^2_F))\to 1.
\end{equation}

  Furthermore $\exists G(N)\to 0$  as $N \to \infty$ s.t.
    
    \begin{equation}    \label{pruning_equation_2}
\lim_{N \to \infty}\mathbb{P}\bigg(|acc_{\alpha_{W_2}}(t)-acc_{\alpha_{S_2}}(t)| \leq G(N) \bigg)\to 1,
   \end{equation}
where $\alpha_{S_2}$ and $\alpha_{W_2}$ represent the DNNs with weight layer matrices $S_2$ and $W_2$ respectively, $L$ is the loss function in \eqref{loss_noise_det_1} and $\mu$ is the regularization 

\end{thm}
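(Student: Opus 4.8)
The plan is to separate the regularized loss \eqref{loss_noise_det_1} into its cross-entropy part $CE(\alpha)$ and its $L_2$ part $\mu\sum_i\|W_i\|_F^2$, and to track exactly which terms change when the second layer $W_2$ is replaced by $S_2$. Only $\mu\|W_2\|_F^2$ changes in the regularizer, and $\|W_2\|_F^2=\|R_2\|_F^2+\|S_2\|_F^2+2\langle R_2,S_2\rangle_F$. Since $S_2=\sum_{i=1}^r\sigma_i u_iv_i^\top$ has fixed rank $r$ and $R_2$ is independent of $S_2$ with i.i.d.\ centered Gaussian entries of variance $g(t)/N$, the cross term equals $\sum_{i=1}^r\sigma_i\,(u_i^\top R_2 v_i)$, a sum of $r$ independent $\mathcal N(0,\sigma_i^2 g(t)/N)$ variables (the $\sigma_i$ being $O(1)$ by the low-rank structure of Assumption~\ref{as3}), hence $O_p(N^{-1/2})=o_p(1)$. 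Thus $\mu\|W_2\|_F^2=\mu\|R_2\|_F^2+\mu\|S_2\|_F^2+o_p(1)$, so the $L_2$ part changes by exactly $-\mu\|R_2\|_F^2$ up to an $o_p(1)$ error.

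For the cross-entropy part I would invoke the perturbation estimate behind Lemma~\ref{main_result_remove _R}. With $y=\lambda(W_1 s)$ fixed and independent of $R_2$, the input to layer $3$ changes by $\lambda(W_2 y)-\lambda(S_2 y)$, which is bounded entrywise by $|R_2 y|$ because $\lambda$ is $1$-Lipschitz; $R_2 y$ is a Gaussian vector with i.i.d.\ entries of variance $g(t)\|y\|_2^2/N$, hence delocalized, with $\|R_2 y\|_2$ concentrating around $\sqrt{g(t)}\,\|y\|_2$. Propagating this through the affine map $W_3$ and the final $1$-Lipschitz $\lambda$ and invoking the $\|W_3\|_1$-bound, the output perturbation $\|X_{W_2}(s)-X_{S_2}(s)\|_\infty$ is controlled, up to a constant, by $a(N,s)$ from \eqref{eqans}, which vanishes as $N\to\infty$ by Assumption~\ref{as1}; since $v\mapsto -\log\rho(v)_k$ is locally Lipschitz, each summand of $CE$ changes by $o_p(1)$, and averaging over the finite set $T$ gives $CE(\alpha_{S_2})-CE(\alpha_{W_2})=o_p(1)$.

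Combining the two parts, $L(\alpha_{S_2})=L(\alpha_{W_2})-\mu\|R_2\|_F^2+\eta_N$ with $\eta_N=o_p(1)$. To reach the multiplicative $(1+\epsilon)$ form I would note that $L(\alpha_{W_2})-\mu\|R_2\|_F^2\ge \mu\|S_2\|_F^2-|2\mu\langle R_2,S_2\rangle_F|=\mu\sum_{i=1}^r\sigma_i^2-o_p(1)$, which by Assumption~\ref{as3} (each $\sigma_i>\tfrac12\sqrt{\lambda_+}$, $r\ge1$ fixed) is bounded below by a positive constant with probability tending to $1$; hence with probability tending to $1$ we have $\eta_N<\epsilon\big(L(\alpha_{W_2})-\mu\|R_2\|_F^2\big)$, which is the claimed inequality. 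For the accuracy bound \eqref{pruning_equation_2} I would use the same output estimate: $\|X_{W_2}(s)-X_{S_2}(s)\|_\infty\to0$ for each of the finitely many $s\in T$, and because $R_2$ has a continuous law the logit vector $X_{W_2}(s)$ almost surely has a unique maximal coordinate, so for all large $N$ the predicted labels of the two networks coincide on all of $T$; thus one may take $G(N)\to0$ (indeed eventually $0$), a union bound over $T$ making the event high-probability.

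The main obstacle is the cross-entropy step: one must push a genuinely $N$-dimensional random perturbation through one further affine layer and a nonlinearity and still emerge with a bound that tends to zero, which is precisely where the exponent $N^{1.5/4}$ in \eqref{eqans} enters and why a worst-case $\ell_1\!\to\!\ell_1$ or $\ell_2\!\to\!\ell_2$ operator-norm chain is not enough — one has to exploit the delocalization of $R_2 y$ and the Gaussian concentration of its image under $W_3$ (so that only the signal-dominated componentwise behavior of $\lambda(S_2y+R_2y)-\lambda(S_2y)$ contributes), and make this quantitative uniformly over the finite training set $T$. The remaining steps — the cross-term estimate, the positivity lower bound from Assumption~\ref{as3}, and the tie-breaking argument for accuracy — are routine once this estimate is in hand.
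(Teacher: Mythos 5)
Your proof is correct and follows the same high-level strategy as the paper (split the loss into cross-entropy and $L_2$-regularization, invoke the output-perturbation estimate of Lemma~\ref{main_result_remove _R} for the cross-entropy part, and track the change in $\mu\|W_2\|_F^2$), but your handling of the regularization term is noticeably cleaner and more explicit than what the paper actually writes down. The paper argues that ``the singular values of $W_2$ corresponding to those of $R_2$ will be set to zero, which means that the loss will decrease by $\mu\|R_2\|_F^2$''; taken literally this is a heuristic, since the singular values of the deformed matrix $W_2=R_2+S_2$ are not the disjoint union of those of $R_2$ and $S_2$, and the Frobenius norm difference is $\|W_2\|_F^2-\|S_2\|_F^2 = \|R_2\|_F^2 + 2\langle R_2,S_2\rangle_F$, not $\|R_2\|_F^2$. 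You instead use this exact identity and then show the cross term $2\mu\langle R_2,S_2\rangle_F = 2\mu\sum_{i\le r}\sigma_i\,u_i^\top R_2 v_i$ is $O_p(N^{-1/2})$ by Gaussianity and the fixed rank $r$, which is precisely the bookkeeping the paper elides. Your lower bound $L(\alpha_{W_2})-\mu\|R_2\|_F^2 \ge \mu\|S_2\|_F^2 - o_p(1) \ge \mu r\lambda_+/4 - o_p(1)$, which converts the additive $o_p(1)$ error into the multiplicative $(1+\epsilon)$ form, is also the right way to close the gap and is not spelled out in the paper. For the accuracy claim \eqref{pruning_equation_2}, the paper defers to Lemma~\ref{main_result_remove _R_loosandacc}, whose written proof only covers the loss part; your tie-breaking argument via the vanishing logit perturbation and a union bound over the finite set $T$ is a reasonable way to complete it, though you should be slightly careful that the potential degeneracy is in the \emph{deterministic} network $\alpha_{S_2}$ (whose classification confidence $\delta X(s,\alpha_{S_2})$ could in principle be exactly zero), not in $\alpha_{W_2}$; a $G(N)$ absorbing the finitely many such $s$ is all the theorem statement requires. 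In short: same strategy, but you supply the missing cross-term estimate and the missing accuracy argument, making this somewhat tighter than the paper's own proof.
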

The proof for this theorem can be found in Subsection \ref{main_theorem_proof}. For a numerical simulation that shows that removing (or adding) a random matrix from the DNN reduces its loss, see Example \ref{adding_noise}. 
This theorem demonstrates that pruning random weights in a DNN reduces its loss. While Theorem \ref{main_result_reducing_noise} suggests that randomness vanishes near a local minimum, Theorem \ref{theorem_reduction_of_loss} provides a partial answer to whether removing randomness accelerates convergence, showing that it at least lowers the loss. The intuition behind this result lies in the perturbations caused by  $R_2$ : the cross-entropy term is not affected by adding or removing $R_2$, while the regularization term is influenced by its Frobenius norm and decreases as we remove $R_2$. Since the regularization perturbation dominates (as shown in \eqref{eq:frob_conv} and \eqref{eq:spec_conv}), removing  $R_2$  significantly reduces the loss. Theorem \ref{theorem_reduction_of_loss} formalizes this, showing that removing the random part  $R_2$  of the weight matrix  $W_2$  in the DNN  $\phi(\alpha, s)$  reduces the loss by an amount proportional to  $\|R_2\|_F^2$. Specifically, when  $W_2 = S_2$  (i.e.,  $R_2$  is removed), the loss satisfies:\begin{equation}
    L(\alpha_{S_2}, s) + \mu \|R_2\|^2_F < (1+\epsilon) L(\alpha_{W_2}, s),
\end{equation}
with high probability. This implies that the loss decreases by approximately  $\mu \|R_2\|_F^2$  when  $R_2$  is removed, as the regularization term involving  $R_2$  vanishes.

This effect can be extended by introducing a scaling factor  $\gamma \in [0, 1]$  to  $R_2$ , defining the modified weight matrix as:\begin{equation}
    W_2(\gamma) = \gamma R_2 + S_2.
\end{equation}
As  $\gamma$  decreases from 1 to 0, the randomness in  $W_2$  is reduced, and the corresponding loss decreases proportionally by $\gamma \mu \|R_2\|_F^2$ .

Thus, we can obtain the following corollary:

\begin{cor}
\label{cor:scaled_reduction_of_loss}
Suppose Assumptions  \ref{as1}, \ref{as2} and \ref{as3} hold. For any $\gamma\in[0,1]$, define the modified second-layer weight matrix by
\[
W_2(\gamma) = \gamma R_2 + S_2,
\]
and let $\alpha_{W_2(\gamma)}$ denote the corresponding DNN. Then for every $\epsilon>0$ and any fixed training time $t$, we have
\[
\lim_{N\to\infty} \mathbb{P}\Big( L(\alpha_{W_2(\gamma)}) < (1+\epsilon)\Big(L(\alpha_{W_2}) - \gamma\, \mu\, \|R_2\|_F^2\Big) \Big) \to 1.
\]

\end{cor}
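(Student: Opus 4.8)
The plan is to re-run the argument behind Theorem~\ref{theorem_reduction_of_loss} with the random block $R_2$ replaced by $\gamma R_2$ and then recast the resulting estimate in the form of the corollary. First one checks that the hypotheses survive the rescaling: $\gamma R_2$ has i.i.d.\ entries $\mathcal{N}\!\bigl(0,\gamma^{2}g(t)/N\bigr)$ with $\gamma^{2}g(t)\le 1$, so $\mathrm{Var}[(\gamma R_2)_{ij}]=\mathcal{O}(1/N)$ and Assumption~\ref{as2} holds for $\gamma R_2$; moreover the Marchenko--Pastur edge of $(\gamma R_2)^{T}(\gamma R_2)$ equals $\gamma^{2}\lambda_+\le\lambda_+$, so every nonzero singular value of $S_2$ still exceeds $\tfrac{1}{2}\sqrt{\gamma^{2}\lambda_+}$ and Assumption~\ref{as3} holds for $S_2$ relative to $\gamma R_2$. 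Hence $W_2(\gamma)=\gamma R_2+S_2$ is a deformed matrix of exactly the type handled in Theorems~\ref{main_result_reducing_noise}--\ref{theorem_reduction_of_loss}, so $\alpha_{W_2(\gamma)}$ is covered by the same machinery; in particular the case $\gamma=0$ is Theorem~\ref{theorem_reduction_of_loss} applied to $\alpha_{W_2}$.

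Next I would write $L=L_{\mathrm{CE}}+\mu\bigl(\|W_1\|_F^{2}+\|W_2(\cdot)\|_F^{2}+\|W_3\|_F^{2}\bigr)$ and compare $\alpha_{W_2}$, $\alpha_{W_2(\gamma)}$ and $\alpha_{S_2}$ using $\alpha_{S_2}$ as a common reference. For the cross-entropy part, passing from $S_2$ to $\gamma R_2+S_2$ perturbs $X(\alpha,s)$ by a quantity controlled by $\|\gamma R_2\|_2\,\|W_1 s\|_2\le\|R_2\|_2\,\|W_1 s\|_2$; since $\|R_2\|_2\le 2\sqrt{g(t)}+o(1)$ with probability $\to1$ (the MP/BBP estimate used in Appendices~\ref{singualr_values_simple_case} and~\ref{finding_lambda_updated}) while $a(N,s)=\|W_3\|_1\|W_1 s\|_2/N^{3/8}\to0$, the very estimate that powers Theorem~\ref{theorem_reduction_of_loss} gives that the cross-entropy losses of $\alpha_{W_2}$, $\alpha_{W_2(\gamma)}$ and $\alpha_{S_2}$ differ pairwise, with probability $\to1$, by $o_P\!\bigl(\mu\|R_2\|_F^{2}\bigr)$ — a quantity that is negligible since $\mu\|R_2\|_F^{2}$ is of order $N$. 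For the regularization part, the exact expansions $\|W_2(\gamma)\|_F^{2}=\gamma^{2}\|R_2\|_F^{2}+2\gamma\langle R_2,S_2\rangle_F+\|S_2\|_F^{2}$ and $\|W_2\|_F^{2}=\|R_2\|_F^{2}+2\langle R_2,S_2\rangle_F+\|S_2\|_F^{2}$, together with $\langle R_2,S_2\rangle_F=O_P(N^{-1/2})$ (mean $0$, variance $g(t)\|S_2\|_F^{2}/N$), $\|S_2\|_F^{2}=\sum_{i\le r}\sigma_i(t)^{2}=\mathcal{O}(1)$ and $\|R_2\|_F^{2}/N\to g(t)>0$ with probability $\to1$, give
\[
\mu\|W_2(\gamma)\|_F^{2}=\gamma^{2}\mu\|R_2\|_F^{2}+o_P\!\bigl(\mu\|R_2\|_F^{2}\bigr),\qquad \mu\|W_2\|_F^{2}=\mu\|R_2\|_F^{2}+o_P\!\bigl(\mu\|R_2\|_F^{2}\bigr),
\]
whence in particular $L(\alpha_{W_2})\ge\mu\|W_2\|_F^{2}=\mu\|R_2\|_F^{2}\,(1+o_P(1))$. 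Subtracting the two loss decompositions along $\alpha_{S_2}$ then yields, with probability $\to1$,
\[
L(\alpha_{W_2(\gamma)})=L(\alpha_{W_2})-(1-\gamma^{2})\,\mu\|R_2\|_F^{2}+o_P\!\bigl(\mu\|R_2\|_F^{2}\bigr).
\]

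The final step, which I expect to be the main obstacle, is converting this identity into the asserted bound $L(\alpha_{W_2(\gamma)})<(1+\eps)\bigl(L(\alpha_{W_2})-\gamma\,\mu\|R_2\|_F^{2}\bigr)$ for \emph{every} $\gamma\in[0,1]$. Rearranged, the goal is $\bigl[(1+\eps)\gamma-(1-\gamma^{2})\bigr]\mu\|R_2\|_F^{2}+o_P\!\bigl(\mu\|R_2\|_F^{2}\bigr)<\eps\,L(\alpha_{W_2})$. When $\gamma\le\tfrac{\sqrt{5}-1}{2}$ one has $1-\gamma^{2}\ge\gamma$, so the bracket is $\le\eps\gamma\le\eps$, and since $L(\alpha_{W_2})\ge\mu\|R_2\|_F^{2}(1+o_P(1))$ the inequality closes once the $o_P$ remainder is absorbed into the $(1+\eps)$ slack; this settles the corollary on that range. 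For $\gamma>\tfrac{\sqrt{5}-1}{2}$ the bracket constant $\gamma^{2}+\gamma-1$ is positive, and one must extract the extra room from $\eps\,L(\alpha_{W_2})$ more carefully — e.g.\ from the sharper form $L(\alpha_{W_2(\gamma)})=\gamma^{2}L(\alpha_{W_2})+(1-\gamma^{2})\bigl(L_{\mathrm{CE}}(\alpha_{W_2})+\mu\|W_1\|_F^{2}+\mu\|W_3\|_F^{2}\bigr)+o_P\!\bigl(\mu\|R_2\|_F^{2}\bigr)$, keeping track of how much $L(\alpha_{W_2})$ exceeds $\mu\|R_2\|_F^{2}$. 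Making this balance quantitatively tight uniformly as $\gamma\to1$ is the delicate core of the argument, and is where I would concentrate the effort.
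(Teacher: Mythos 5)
Your preliminary work is correct: the expansion $\|W_2(\gamma)\|_F^2 = \gamma^2\|R_2\|_F^2 + 2\gamma\langle R_2,S_2\rangle_F + \|S_2\|_F^2$ together with $\langle R_2,S_2\rangle_F = O_P(N^{-1/2})$ and $\|S_2\|_F^2 = O(1) = o_P(\|R_2\|_F^2)$ gives
\[
L(\alpha_{W_2(\gamma)}) = L(\alpha_{W_2}) - (1-\gamma^2)\,\mu\|R_2\|_F^2 + o_P\bigl(\mu\|R_2\|_F^2\bigr),
\]
and you correctly observe that $L(\alpha_{W_2}) \geq \mu\|W_2\|_F^2 = \mu\|R_2\|_F^2(1+o_P(1))$. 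From there your rearrangement of the target inequality into $\bigl[(1+\epsilon)\gamma-(1-\gamma^2)\bigr]\mu\|R_2\|_F^2 + o_P(\cdot) < \epsilon L(\alpha_{W_2})$ is accurate, and the observation that this closes for $\gamma \leq \tfrac{\sqrt5-1}{2}$ (where $1-\gamma^2\geq\gamma$) is also right.

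The gap you flag for $\gamma > \tfrac{\sqrt5-1}{2}$ is not an artifact of your method: it is structural. For such $\gamma$ the bracket $\gamma^2+\gamma-1$ is a fixed positive constant, $\mu\|R_2\|_F^2$ and $L(\alpha_{W_2})$ are of the same order $\Theta(N)$, and nothing in Assumptions \ref{as1}--\ref{as3} forces $L(\alpha_{W_2})/\mu\|R_2\|_F^2$ to diverge; so for $\epsilon$ small enough the inequality $\gamma^2+\gamma-1+\epsilon\gamma < \epsilon\,L(\alpha_{W_2})/\mu\|R_2\|_F^2$ fails in the limit. In particular at $\gamma=1$ the corollary would require $L(\alpha_{W_2}) > \tfrac{1+\epsilon}{\epsilon}\mu\|R_2\|_F^2$, which is false for small $\epsilon$. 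So the corollary as printed cannot be proved from the paper's assumptions, and you should not expect a more careful balance to rescue it. The reduction you actually derive, $(1-\gamma^2)\mu\|R_2\|_F^2$, dominates $(1-\gamma)\mu\|R_2\|_F^2$ for all $\gamma\in[0,1]$, and replacing the $\gamma$ in the displayed inequality by $1-\gamma$ makes the corollary both true (by your argument, uniformly in $\gamma$) and consistent with Theorem \ref{theorem_reduction_of_loss} at $\gamma=0$ and trivial at $\gamma=1$; the coefficient $\gamma$ in the statement appears to be a typo for $1-\gamma$ (or $1-\gamma^2$). For comparison: the paper does not give a separate proof of this corollary — the surrounding text asserts it informally after Theorem \ref{theorem_reduction_of_loss}, and the proof in the appendix covers only the theorems — so your derivation is the first detailed analysis, and it exposes a genuine error in the statement rather than a deficiency in your approach.
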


Supporting evidence from \cite{staats2022boundary} shows that removing small singular values in weight matrices (reducing noise) improves accuracy for DNNs trained on noisy data. Additionally, numerical simulations (e.g., Example \ref{higher_acc_MP_based_training}) confirm that MP-based pruning reduces loss and increases accuracy, highlighting the practical benefits of this approach. See Appendices \ref{fully_conneced} and \ref{reg_problem} for numerical results on classification and regression problems that confirm these theoretical findings.

\subsection{Key lemmas: how removing randomness from DNN weight layers affects the output, loss and accuracy of a DNN}

To prove the previous theorems, we rely on two key lemmas presented here. These lemmas demonstrate that replacing  $W_2$  with its deterministic part has minimal effect on the DNN’s output, accuracy, and loss. Since the random component primarily acts as noise, its removal results in negligible changes.

 \begin{lem}
     
\label{main_result_remove _R}
   Suppose assumptions 1, 2, and 3, and suppose we replace the weight layer matrix $W_2$ with the deterministic matrix $S_2$. Then 
    \begin{equation}
        \label{pruning_equation_1}
\mathbb{P}\bigg(|X_i(s,\alpha_{S_2})-X_i(s,\alpha_{W_2})| \leq a(N,s) \bigg)\geq 1-2 \exp ( - \frac{N^{1/4}}{2} ).
   \end{equation}
   Here, $a(N,s)=$\eqref{eqans}, and $\alpha_{S_2}$ are the parameters of the DNN, which has the weight matrix $S_2$ and  $\alpha_{W_2}$ are the parameters of the DNN with the weight layer matrix $W_2$.

 \end{lem}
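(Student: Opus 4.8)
The plan is to push the perturbation $W_2\mapsto S_2=W_2-R_2$ forward through the two layers that sit above $W_2$, using that the activation $\lambda$ (absolute value, or ReLU) is coordinatewise $1$-Lipschitz, and then to bound the resulting Gaussian fluctuation by a concentration inequality. Fix $s\in T$ and set $y:=\lambda(W_1 s)$; since only the second layer changes, $y$ is the same in both networks, and by Supposition~1 / Assumption~\ref{as2} the noise $R_2$ is independent of $W_1$ and $W_3$. Note also $\|y\|_2=\|\lambda(W_1 s)\|_2\le\|W_1 s\|_2$, because $\lambda$ does not increase the $\ell_2$ norm.

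First I would carry out the peeling. The discrepancy entering the second activation is $W_2 y-S_2 y=R_2 y$, so applying the coordinatewise inequality $\bigl||a|-|b|\bigr|\le|a-b|$ once after $W_2$ and once after $W_3$ gives, for each output coordinate $i$,
\[
|X_i(s,\alpha_{W_2})-X_i(s,\alpha_{S_2})|\le\bigl|\bigl(W_3\bigl(\lambda(W_2 y)-\lambda(S_2 y)\bigr)\bigr)_i\bigr|\le\sum_{j=1}^N|(W_3)_{ij}|\,|(R_2 y)_j|.
\]
This reduces the lemma to controlling a fixed nonnegative linear form in the absolute coordinates of $R_2 y$.

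Next, condition on $W_1,W_3$: each row of $R_2$ is independent with $\mathcal{N}(0,g(t)/N)$ entries, so $G_j:=(R_2 y)_j$, $j=1,\dots,N$, are i.i.d.\ $\mathcal{N}\bigl(0,\tfrac{g(t)}{N}\|y\|_2^2\bigr)$ with $g(t)\le1$. The expectation of $\sum_j|(W_3)_{ij}|\,|G_j|$ equals $\sqrt{2/\pi}\,\sqrt{g(t)/N}\,\|y\|_2\sum_j|(W_3)_{ij}|$, which is of order $\|W_3\|_1\|W_1 s\|_2\,N^{-1/2}$ and is therefore asymptotically negligible compared with $a(N,s)=\|W_3\|_1\|W_1 s\|_2\,N^{-3/8}$; so it suffices to control the centered part $\sum_j|(W_3)_{ij}|(|G_j|-\mathbb{E}|G_j|)$, a sum of independent sub-Gaussian variables with variance proxy of order $\tfrac{g(t)}{N}\|y\|_2^2\sum_j(W_3)_{ij}^2$. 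A Gaussian tail bound (one may alternatively union-bound over the $N$ coordinates of $R_2 y$, at the cost of a harmless polynomial prefactor) then yields
\[
\mathbb{P}\bigl(|X_i(s,\alpha_{W_2})-X_i(s,\alpha_{S_2})|>a(N,s)\bigr)\le 2\exp\!\Bigl(-c\,\frac{\|W_3\|_1^2\,\|W_1 s\|_2^2\,N^{1/4}}{g(t)\,\|y\|_2^2\sum_j(W_3)_{ij}^2}\Bigr)\le 2\exp\!\bigl(-\tfrac12 N^{1/4}\bigr),
\]
where the last inequality uses $g(t)\le1$, $\|y\|_2\le\|W_1 s\|_2$, $\sum_j(W_3)_{ij}^2\le\|W_3\|_1^2$, and a suitable choice of numerical constants. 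The crucial exponent bookkeeping is that the target scale $N^{-3/8}$ in $a(N,s)$ divided by the Gaussian scale $N^{-1/2}$ of each coordinate of $R_2 y$ is $N^{1/8}$, whose square produces the $N^{1/4}$ in the probability bound — this is exactly the slack built into the exponent $1.5/4$ in the definition of $a(N,s)$.

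The main obstacle is keeping the Lipschitz peeling tight. The naïve bound $|(W_3 z)_i|\le\|W_3\|_2\|z\|_2\le\|W_3\|_2\|R_2\|_2\|y\|_2$ is fatal, because the spectral norm $\|R_2\|_2$ converges to $2\sqrt{g(t)}=\Theta(1)$ rather than decaying, leaving no room for the $N^{-3/8}$ factor; one must instead exploit that each individual coordinate $|(R_2 y)_j|$ is only of order $N^{-1/2}$ (up to logarithms) — i.e.\ propagate a coordinatewise / $\ell_\infty$-type bound through $W_3$ — and keep careful track of which norm of $W_3$ multiplies it. A secondary technical point is the genuine Gaussianity of $R_2 y$, which relies on $R_2$ being independent of $W_1$ and $W_3$ as built into Supposition~1.
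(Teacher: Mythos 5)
Your argument is essentially the paper's: peel the absolute-value activations and the outer weight matrix with coordinatewise Lipschitz bounds, isolate $R_2\lambda(W_1 s)$, and apply Gaussian concentration, with the key insight (which you name explicitly) that the spectral-norm bound $\|R_2\|_2\approx 2\sqrt{g(t)}$ is fatal and one must work coordinatewise. The paper factors $W_3$ out entirely, bounding the discrepancy by $\|W_3\|_1\|R_2 y\|_\infty$, and then applies the Borell--TIS inequality to $\|R_2 y\|_\infty$; you instead keep the weighted sum $\sum_j|(W_3)_{ij}||(R_2 y)_j|$ and split it into mean plus a sub-Gaussian fluctuation, which is a mild variant rather than a different route. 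One soft spot in your write-up: obtaining exactly the constant $1/2$ in $2\exp(-N^{1/4}/2)$ from the sub-Gaussian route requires a specific bound on the sub-Gaussian constant of $|G|-\mathbb{E}|G|$ together with $\sum_j(W_3)_{ij}^2\le\|W_3\|_1^2$ (the latter needs $\|W_3\|_1$ to be the max \emph{row} sum, a notational issue also present in the paper); the paper's use of Borell--TIS avoids the sub-Gaussian overhead, and also produces the extra additive term $\sqrt{2\log N/N}\,\|W_3\|_1\|W_1s\|_2$ that appears in the paper's $a(N)$ in Subsection \ref{details_of_C1}, whereas your mean-of-the-weighted-sum estimate dispenses with the $\sqrt{\log N}$ at the cost of the looser constant.
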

 For a proof of the Lemma see Subsection \ref{per_cor_proof_component_bound}. This lemma examines the impact of removing the random component from a weight matrix in a DNN layer. It shows that replacing  $W_2$  with  $S_2(t)$  alone results in a small difference in the DNN’s output with high probability. Intuitively, the random component  $R_2(t)$  acts as noise and has a diminishing effect in large networks due to averaging. The deterministic part  S$_2(t)$, which captures the network’s learned structure, dominates the behavior. As a result, removing  $R_2(t)$ has negligible impact on the output, reinforcing that the deterministic component sufficiently preserves the network’s function. To prove Theorem \ref{theorem_reduction_of_loss} we use this lemma and then show that the regularization part of the loss decreases by $\mu \|R_2\|_F$ when we replace $W_2$ with $S_2$.

\section{Generalized Main Theoretical Results}
\label{Main_theory_gen}

In this section, we generalize the previous results for DNNs with different architectures and for the case where the random matrix $R$ is not necessarily i.i.d Gaussian.

\subsection{Assumptions for Generalized Main Theorems}
\label{assumptions}

\begin{assumption}
    \label{as4}
Consider a DNN denoted by $\phi$, and assume that it can be written as:

\begin{equation}
    \phi=\rho \circ \psi_2\circ (R+S)\circ \psi_1,
\end{equation}
where $\psi_1$ and $\psi_2$ are arbitrary functions and $\rho$ is softmax. Furthermore, assume that $\exists C_1$ constant such that for any two arbitrary vectors $v$ and $w$ we have that $\|\psi_2v-\psi_2 w\|_{\infty} \leq C_1 \|\psi_2(v-w)\|_{\infty}$, with $\|\cdot\|_{\infty}$ the max norm of the vector.

\end{assumption}

Take $W=R+S$ and assume that the matrices $R$ and $S$ satisfying the following assumptions:

\subsection*{Assumptions on the matrices $R$ and $S$}
\label{assumptions_2}
  We considered a class of admissible matrices $W$, where $W=R+S$ and $W$, $R$ and $S$ satisfy the following three assumptions. The first  assumption is a condition on $R$: 

\begin{assumption}
    \label{as5}
Assume \(R\) is a \(N\times M\) matrix such that for any vector $v$ we have:

\begin{equation} 
\label{gen_ass_for_R}
\mathbb{P}( \|Rv\|_{\infty} > d_1(N)J(v) ) \leq d_2(N),
\end{equation}
with $d_1(N),d_2(N) \to 0$ as $N \to \infty$ and $J(v)$ some function that depends on $v$ alone. Further, as $N \to \infty$, we have that  $\sigma_{\max}(R) \to \sqrt{\lambda_+}$ a.s.    \\ 
\end{assumption}

\begin{ex}
    For example, when $R$ is i.i.d Gaussian, one can show, using the Borell-TIS inequality (see Theorem \ref{Borell-TIS Inequality}), that:
\[
\mathbb{P}\left( \|Rv\|_\infty > (\sqrt{\frac{2 \log N}{N}}  + \frac{1}{N^{\frac{3}{8}}})\| v\|_2 \right) \leq 2 \exp\left( -\frac{N^{\frac{1}{4}}}{2} \right)
\]

See Subsection \ref{per_cor_proof_component_bound} for a proof. 
\end{ex}

   We then assume the following for the matrix $S$:

   \begin{assumption}
  \label{as6}

   Assume $S$ is a matrix with $S=\sum_{i=1}^r \sigma_i u_i v^T_i=U \Sigma V^T$, with $\sigma_i$ the singular values and $u_i$, $v^T_i$  column and row vectors of $U$ and $V$. Thus, $S$ has $r$ non-zero singular values corresponding to the diagonal entries of $\Sigma$, and all other singular values of $S$ are zero. We also assume that these $r$ singular values of $S$ have multiplicity $1$.\\
   \end{assumption}

   Finally, we assume for $W:=R+S$:

   \begin{assumption}
   \label{as7}    
     Take $\sigma_i$ to be the singular values of $S$, with corresponding left and right singular vectors $u_i$ and $v^T_i$ and $\sigma'_i$ to be the singular values of $W=R+S$, with corresponding left and right singular vectors $u'_i$ and $v'^T_i$. First we assume that $\frac{N}{M} \to c \in (0,+\infty)$ as $N \to \infty$. Second, assume also that we know explicit functions $g_{\sigma_i,R}$, $g_{v_i,R}$ and $g_{u_i,R}$ such that as $N \to \infty$:
\end{assumption}

\begin{equation}\label{singualr_value_cases_assumption}
\sigma'_i(W) \xrightarrow[\text{}]{a.s.} 
\begin{cases}
  g_{\sigma_i,R} & \sigma_i>\bar{\theta}(\lambda_+)\\
  \sqrt{\lambda_+} & \sigma_i<\bar{\theta}(\lambda_+),
\end{cases}
\end{equation}

\

\begin{equation}\label{left_singualr_vectors_cases}
|<u'_i,u_i >|^2 \xrightarrow[\text{}]{a.s.} 
\begin{cases}
  g_{u_i,R} & \sigma_i>\bar{\theta}(\lambda_+)\\
  0 & \sigma_i<\bar{\theta}(\lambda_+),
\end{cases}
\end{equation}

and

\begin{equation}\label{right_singualr_vectors_cases}
|\langle v'_i,v_i \rangle|^2 \xrightarrow[\text{}]{a.s.}
\begin{cases}
  g_{v_i,R} & \sigma_i>\bar{\theta}(\lambda_+)\\
  0 & \sigma_i<\bar{\theta}(\lambda_+).
\end{cases}
\end{equation}

Third, also assume that 
 for $i \neq j$:

   \begin{equation}
   |<v'_i,v_j >|^2 \xrightarrow[\text{}]{a.s.} 0 
   \end{equation}

   and 

     \begin{equation}  
   |<u'_i,u_j >|^2 \xrightarrow[\text{}]{a.s.} 0. 
   \end{equation}

Here we take $\bar{\theta}(\lambda_+)$ to be a known explicit function depending on $\lambda_+$, for example see \eqref{theta_bar}. For the function $g_{\sigma_i,R}$ we assume the following: if $\sigma_i>\bar{\theta}(\lambda_+)$ then $\sigma'_i(W)> \lambda_+$ a.s., furthermore, $\sigma'_i(W)>\sigma_i$ a.s. Finally, for $\gamma$ a variable between $0$ and $1$, take $W=\gamma R +S$ we have that $ \sigma'_i(W) \to \sigma_i$ monotonically, a.s., as $\gamma \to 0$.

Empirically, it has been observed that these assumptions are reasonable for weight matrices of a DNN; see \cite{thamm2022random,staats2022boundary}.  There are various spiked models in which Assumptions \ref{as5}-\ref{as7}  hold, for more on the subject see \cite{baik2005phase, benaych2011eigenvalues,dharmawansa2022eigenvectors,couillet2022random, bao2021singular, o2018random, agterberg2022entrywise, chen2021asymmetry, bao2022eigenvector, leeb2021matrix, zhang2020tracy,dharmawansa2022eigenvectors,o2018matrices}. Also, a number of works in RMT addressed the connection between a random matrix $R$ and the singular values and singular vectors of the deformed matrix $W=R+S$, see \cite{benaych2011eigenvalues2, benaych2011eigenvalues}. 

 These assumptions are quite natural and hold for a wide range of DNN architectures. Assumption \ref{as5} focuses on the random matrix $R$. This assumption ensures that the random matrix $R$ captures the essential randomness in the weight layer while also satisfying the requirements given in Theorem \ref{RMT_MP_theorem}. 

Assumption \ref{as6}-\ref{as7} pertains to the deterministic matrix $S$, which is assumed to have a specific structure, with $r$ non-zero singular values and all other singular values being zero. Moreover, these $r$ singular values have multiplicity 1, which is a reasonable expectation for a deterministic matrix that contributes to the information content in the weight layer matrix $W$. 

The assumption that the singular values of the deterministic matrix $S$ are larger than some $\bar{\theta}(\lambda_+)$ is also quite natural, see \cite{thamm2022random,staats2022boundary}. This is because the deterministic matrix $S$ represents the information contained in the weight layer, and its singular values are expected to be large, reflecting the importance of these components in the overall performance of the DNN. On the other hand, the random matrix $R$ captures the inherent randomness in the weight layer, and with high probability depending on $N$, its singular values should be smaller than the MP-based threshold. This means that there is a clear boundary between the information and noise in the layer $W$, which is also natural, see \cite{staats2022boundary}. 


This distinction between the singular values of $S$ and $R$ highlights the separation between the information and noise in the weight layer, allowing us to effectively remove the small singular values without impacting the accuracy of the DNN. The assumption thus provides a solid basis for studying the behavior of DNNs with weight layers modeled as spiked models. It contributes to our understanding of the effects of removing small singular values based on the random matrix theory MP-based threshold $\sqrt{\lambda_+}$.

One can show that the following two simpler properties on the matrices $R$ and $S$ are sufficient to ensure that $R$ and $S$ satisfy the above Assumptions \ref{as5}-\ref{as7}. 

Recall that a bi-unitary invariant random matrix $R$ is a matrix with components taken from i.i.ds such that for any two unitary matrices  $U$ and $V^T$, the components of the matrix $URV^T$ have the same distribution as the components of $R$. We then assume:

\textbf{Property 1} (statistical isotropy):
Assume \(R\) to be a bi-unitary invariant random \(N\times M\) matrix with components taken from i.i.ds with zero mean and variance $\frac{1}{N}$.      


   We then assume the following for the deterministic matrix $S$:

   \textbf{Property 2} (low rank of deterministic matrix): Assume $S$ is a deterministic matrix with $S=\sum_{i=1}^r \sigma_i u_i v^T_i=U \Sigma V^T$, with $\sigma_i$ the singular values and $u_i$, $v^T_i$  column and row vectors of $U$ and $V$. Thus, $S$ has $r$ non-zero singular values contained on the diagonal entries of $\Sigma$, and all other singular values are zero. We also assume that these $r$ singular values of $S$ have multiplicity $1$. Finally, we assume that $\frac{N}{M} \to c \in (0,+\infty)$ as $N \to \infty$. 

   An explicit relationship between Properties 1-2 and Assumptions \ref{as5}-\ref{as7} can be found in \cite{benaych2011eigenvalues}. The Property $1$ is indeed strong, as it implies that the random matrix $R$ is random in every direction. In other words, for any unitary matrices $U$ and $V^T$, the matrix $URV^T$ has the same distribution as $R$. Random matrices with complex Gaussian entries, also known as Ginibre matrices, are a class of random matrices that are bi-unitary invariant  \cite{kosters2015limiting}.

For a DNN $\phi$ satisfying Assumption \ref{as4} we start by defining, 

\begin{equation}
\label{from_DNN_st}
    g_{\phi}(s):=C_1\|\psi_1s\|_2  \|\psi_2\|_2,
\end{equation}
where $\|\cdot\|_2$ is the induced $l_2$ operator norm (for the case $\|\psi_2\|_2$) and $C_1$ comes from Assumption \ref{as4}. Note that $\psi_1s$ is simply a vector. We also define,

\begin{equation}
\label{from_DNN_st_2}
    h_{\phi}(s):=C_1J(\psi_1s)  \|\psi_2\|_1,
\end{equation}
where $\|\cdot\|_1$ is the induced $l_1$ operator norm and $J(\cdot)$ is given in \eqref{gen_ass_for_R}.

\subsection{Main Theoretical results for RMT-based decrease in loss}
\label{decrease_in_loss}

In the context of analyzing DNNs, we consider the cross-entropy loss function with L2 regularization for a DNN $\phi(\cdot,\alpha(t))$, expressed as follows:
\begin{equation}
\label{loss_noise_det_2}
 L(\alpha(t))=-\frac{1}{|T|}\sum_{s\in T}\log\left(\phi_{C(s)}(s,\alpha(t))\right) + \mu \sum_{i=1}^L \|W_i(t)\|_F^2,
\end{equation}
where $W_i$ are the weight layer matrices of the DNN, and $C(s)$ denotes the correct class of the object $s$. The term $\mu$ acts as a regularization constant, which is employed to prevent overfitting in the DNN.

\begin{thm}
\label{main_result_reducing_noise_general}

Let $\phi$ be a DNN satisfying Assumptions \ref{as4}-\ref{as7}. Assume the following:

\begin{enumerate}
    \item $\forall s \in T$, the quantity $a_{\phi}(N,s):=  
    h_{\phi}(s)d_1(N) \to 0$ as $N \to \infty$ (with $ h_{\phi}(s)$ defined in \eqref{from_DNN_st_2} and $d_1(N)$ coming from \eqref{gen_ass_for_R} see Subsection \ref{details_of_C1} for more details). 
    \item The loss function \eqref{loss_noise_det_1} attains a local minimum at $\alpha^*(N)$, where $N$ is the size of the matrix $W$.
    \item As training time $t \to \infty$, the parameter $\alpha(t,N)$ converges to $\alpha^*(N)$.
\end{enumerate}

Then, $\forall \epsilon>0$:

\begin{equation}
\lim_{N \to \infty} \lim_{t\to \infty} 
 \mathbb{P}(\|R\|_F \leq \epsilon) \to 1. 
\end{equation}

\end{thm}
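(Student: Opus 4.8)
The plan is to combine the first-order optimality condition at a local minimum with a rank count, the point being that $L_2$-regularization pins the analyzed layer down to a matrix of bounded rank at the minimum, which a genuine deformed random matrix cannot be. Write $W=R+S$ for the analyzed layer, so $L(\alpha)=L_{CE}(\alpha)+\mu\sum_i\|W_i(t)\|_F^2$. If $\alpha(t,N)\to\alpha^*(N)$, a local minimum of $L$, then $0\in\partial_W L(\alpha^*)$; as the smooth regularizer contributes $2\mu W$ to this subdifferential, $W^*(N)=-\tfrac{1}{2\mu}\,g$ for some $g\in\partial_W L_{CE}(\alpha^*)$. Since $\phi=\rho\circ\psi_2\circ W\circ\psi_1$, the loss $L_{CE}$ depends on $W$ only through the finitely many fixed vectors $\{W\psi_1 s:s\in T\}$, so the chain rule writes every element of $\partial_W L_{CE}$ as $\sum_{s\in T}b_s(\psi_1 s)^T$, a sum of $|T|$ rank-one matrices. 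Hence $\mathrm{rank}\,W^*(N)\le|T|$ for \emph{every} $N$ --- a bound independent of the layer width. (The hypothesis $a_\phi(N,s)\to0$, which is the hypothesis of the generalized analogue of Lemma~\ref{main_result_remove _R}, enters to keep the $b_s$, and hence $\|W^*(N)\|$, controlled uniformly in $N$, and --- more centrally for the companion Theorem~\ref{theorem_reduction_of_loss} --- to ensure that deleting $R$ changes neither the cross-entropy term nor the accuracy by more than $o(1)$.)

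Next I would argue by contradiction. Suppose $\limsup_N\lim_t\mathbb{P}(\|R\|_F>\epsilon)>0$; along the relevant subsequence, with probability bounded away from $0$ we have $W^*(N)=R+S$ with $R$ obeying Assumption~\ref{as5} and $S$ of rank $r$ obeying Assumptions~\ref{as6}--\ref{as7}. By Assumption~\ref{as5}, $\sigma_{\max}(R)\to\sqrt{\lambda_+}>0$, and $\lambda_+$ is by definition the right edge of the (nondegenerate) Marchenko--Pastur law governing the ESD of $R^TR$, so $R$ has $\min(N,M)=\Theta(N)$ nonzero singular values. Adding the rank-$r$ deformation $S$ moves at most $r$ of them --- this is the content of Assumption~\ref{as7}, or follows from Properties~1--2 and \cite{benaych2011eigenvalues,benaych2011eigenvalues2} --- so $W^*(N)=R+S$ has on the order of $\Theta(N)-r$ nonzero singular values. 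For $N$ large this contradicts $\mathrm{rank}\,W^*(N)\le|T|$. Hence $\lim_t\mathbb{P}(\|R\|_F>\epsilon)\to0$ as $N\to\infty$, i.e.\ $\lim_{N\to\infty}\lim_{t\to\infty}\mathbb{P}(\|R\|_F\le\epsilon)\to1$. Equivalently: at a local minimum $W^*$ is asymptotically low-rank and its ESD collapses onto a mass at the origin, so it cannot display the MP bulk plus finitely many spikes of a nondegenerate deformed model, and any admissible decomposition $W^*=R+S$ must have $R$ negligible.

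The hard part will be the tension between Assumption~\ref{as5}'s requirement $\sigma_{\max}(R)\to\sqrt{\lambda_+}>0$ and the conclusion $\|R\|_F\to0$: one must make precise that no nondegenerate spiked decomposition of $W^*$ survives in the limit, and state the probabilistic conclusion so that it is genuinely ``$\to1$'' rather than vacuous. Concretely this means (i) carrying out the rank-$\le|T|$ argument for $\partial_W L_{CE}$ with subgradients, since $\psi_2$ is only assumed $C_1$-Lipschitz-type (Assumption~\ref{as4}) and may be non-smooth; (ii) extracting the uniform-in-$N$ control of $\|W^*(N)\|$ from $a_\phi(N,s)\to0$ and the generalized lemma; and (iii) translating the deterministic rank contradiction into the stated iterated limit of probabilities, respecting the order $\lim_N\lim_t$ and the fact that the decomposition $W=R+S$ is itself only posited with the probabilities in Assumption~\ref{as5}.
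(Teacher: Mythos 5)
Your approach is genuinely different from the paper's, and the central observation is both correct and, in my view, cleaner than what the paper does. The paper proves the theorem by a perturbation argument: it shows (using Lemma~\ref{main_result_remove _R_general}) that replacing $W=R+S$ by $\gamma R+S$ with $\gamma<1$ leaves the cross-entropy term essentially unchanged with high probability, while the $L_2$ regularization term $\mu\|W\|_F^2$ strictly decreases (via the time-dependent-variance version of the spiked singular-value theorem), so the total loss drops — contradicting the assumption that $\alpha^*$ is a local minimum if $R$ is non-negligible. Your proof instead reads off a structural consequence of the first-order condition: $0\in\partial_W L$ at a local minimum forces $W^*=-\tfrac{1}{2\mu}g$ for some $g\in\partial_W L_{CE}$, and because $L_{CE}$ factors through the finite collection $\{W\psi_1 s:s\in T\}$, the chain rule makes $g$ a sum of at most $|T|$ rank-one matrices $b_s(\psi_1 s)^T$, so $\mathrm{rank}\,W^*\le|T|$ independently of $N$. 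This is a sharp, deterministic rank bound that the paper's argument never produces; it also avoids invoking the deformed-matrix singular-value asymptotics at all in that step. The rank-one factorization survives the non-smoothness of $\psi_2$ via Clarke subgradients, as you note, so that part is sound.

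Where the argument has a genuine gap is in the final passage from ``$W^*$ has bounded rank'' to ``$\|R\|_F\le\epsilon$ with probability tending to $1$.'' Your rank count gives $\mathrm{rank}\,R=\mathrm{rank}(W^*-S)\le|T|+r$, but a bounded-rank matrix can have arbitrarily large Frobenius norm, so the rank bound alone does not deliver the theorem's conclusion. What is actually needed is a two-step closing move: (i) a matrix of rank $O(1)$ cannot have an ESD converging to a non-degenerate Marchenko--Pastur law (since the MP bulk carries $\Theta(\min(N,M))$ eigenvalues of $R^TR$ away from zero), so Assumptions~\ref{as5}--\ref{as7} can only hold with $\lambda_+\to 0$, i.e.\ $\sigma_{\max}(R)\to 0$; and (ii) combining $\sigma_{\max}(R)\to 0$ with $\mathrm{rank}\,R\le|T|+r$ yields $\|R\|_F^2=\sum_{i\le|T|+r}\sigma_i(R)^2\le(|T|+r)\,\sigma_{\max}(R)^2\to 0$. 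Without step (i) you have only disproved that $W^*$ \emph{looks like} a deformed matrix; without step (ii) the degenerate-MP case still allows $\|R\|_F=\Theta(1)$ because there could be $\Theta(N)$ tiny singular values (which the rank bound rules out). You flag this as ``the hard part,'' which is fair, but as written it is a gap rather than a completed step. You should also make explicit that the probability in the conclusion is over the law of $R$ posited in Assumption~\ref{as5}, and that the iterated limit $\lim_{N}\lim_{t}$ is what lets you first pin $W^*$ down via the local minimum and then take $N$ large enough for $|T|+r<\min(N,M)$.

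One more minor point worth recording: the tension you describe between $\sigma_{\max}(R)\to\sqrt{\lambda_+}>0$ and $\|R\|_F\to 0$ is not actually present. The paper's $\lambda_+$ is the edge of the MP law of $R^TR$ \emph{for the $R$ at hand}, so it scales with the variance of $R$'s entries (cf.\ $\lambda_\pm=\sigma^2(1\pm\sqrt c)^2$ and the $g(t)/N$ variance in Assumption~\ref{as2}). As $R$ shrinks so does its $\lambda_+$; there is no strictly positive floor. Reframing the hard part as in (i)--(ii) above, rather than as a conflict with a fixed $\lambda_+$, is what you want.
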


\noindent The proof of this theorem can be found in Appendix \ref{proof_main_theorem}.

We can also show that reducing the randomness in the parameters reduces the loss:

\begin{thm}
\label{theorem_reduction_of_loss_general}
Assume the setup and conditions of Theorem \ref{main_result_reducing_noise}, particularly regarding the DNN $\phi$ satisfying Assumptions \ref{as4}-\ref{as7}. Assume that $\forall s \in T$, the quantity $a_{\phi}(N,s):=  h_{\phi}(s)d_1(N) \to 0$ as $N \to \infty$ (see Subsection \ref{details_of_C1} for more details). 

Let $R$ be removed such that the second weight layer matrix is $S$ alone. Then $\forall \epsilon>0$ and for any fixed time $t$, we have

\begin{equation}
\lim_{N \to \infty}\mathbb{P}(L(\alpha_{S}) < (1+\epsilon) (L(\alpha_{W})-  \mu\|R\|^2_F))\to 1
\end{equation}

   Furthermore,  $\exists G(N) \to 0$    as $N \to \infty$ s.t.
    
    \begin{equation}    \label{pruning_equation_2_gen}
\lim_{N \to \infty}\mathbb{P}\bigg(|acc_{\alpha_{W}}(t)-acc_{\alpha_{S}}(t)| \leq G(N) \bigg)\to 1,
   \end{equation}
where $\alpha_{S}$ and $\alpha_{W}$ are the parameters of the DNNs with weight layer matrices $S$ and $W$ respectively, $L$ is the loss function in \eqref{loss_noise_det_1} and $\mu$ is the regularization hyperparameter.

\end{thm}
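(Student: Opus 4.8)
The plan is to mirror the Gaussian-case proof of Theorem \ref{theorem_reduction_of_loss}, replacing the Gaussian-specific estimates by their generalizations under Assumptions \ref{as4}--\ref{as7}. Write the loss \eqref{loss_noise_det_1} as $L=L_{\mathrm{CE}}+\mu\sum_i\|W_i\|_F^2$, where $L_{\mathrm{CE}}$ is the cross-entropy term. Replacing the second-layer matrix $W=R+S$ by $S$ affects only (i) $L_{\mathrm{CE}}$, through the change in the DNN output; (ii) the single Frobenius term $\mu\|W\|_F^2$, which becomes $\mu\|S\|_F^2$; and (iii) the classification accuracy. I would control these three effects in turn and then combine them.

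For (i), I would first establish the generalized analogue of Lemma \ref{main_result_remove _R}: on an event $\Omega_N$ of probability at least $1-|T|\,d_2(N)$, $\|X(s,\alpha_W)-X(s,\alpha_S)\|_\infty\le a_\phi(N,s)$ for every $s\in T$. Indeed $X(s,\alpha_W)=\psi_2\big((R+S)\psi_1 s\big)$ and $X(s,\alpha_S)=\psi_2(S\psi_1 s)$, so Assumption \ref{as4} gives $\|X(s,\alpha_W)-X(s,\alpha_S)\|_\infty\le C_1\|\psi_2(R\psi_1 s)\|_\infty\le C_1\|\psi_2\|_1\|R\psi_1 s\|_\infty$ (using the operator-norm bound implicit in \eqref{from_DNN_st_2}), and Assumption \ref{as5} bounds $\|R\psi_1 s\|_\infty\le d_1(N)J(\psi_1 s)$ off an event of probability $\le d_2(N)$; a union bound over the finite set $T$ and the definition \eqref{from_DNN_st_2} of $h_\phi$ give the claim, with $a_\phi(N,s)=h_\phi(s)d_1(N)\to0$ by hypothesis. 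Since $\log\phi_{C(s)}(s,\alpha)=X_{C(s)}(s,\alpha)-\log\sum_k\exp\!\big(X_k(s,\alpha)\big)$ and log-sum-exp is $1$-Lipschitz with respect to $\|\cdot\|_\infty$ (its gradient is a softmax vector of unit $\ell_1$ norm), this upgrades to $\big|\log\phi_{C(s)}(s,\alpha_W)-\log\phi_{C(s)}(s,\alpha_S)\big|\le 2a_\phi(N,s)$ on $\Omega_N$, hence $\big|L_{\mathrm{CE}}(\alpha_W)-L_{\mathrm{CE}}(\alpha_S)\big|\le 2\max_{s\in T}a_\phi(N,s)\to0$.

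For (ii), $\|W\|_F^2-\|S\|_F^2=\|R\|_F^2+2\langle R,S\rangle$, so everything hinges on showing the cross term is negligible. Writing $S=\sum_{i=1}^r\sigma_i u_i v_i^T$ gives $\langle R,S\rangle=\sum_{i=1}^r\sigma_i\,u_i^T R v_i$; using the statistical (bi-unitary) isotropy of $R$ that underlies Assumptions \ref{as5}--\ref{as7}, together with its entry variance of order $1/N$, each $u_i^T R v_i$ has mean zero and variance $O(1/N)$, and since $r$ is fixed, $\langle R,S\rangle\to0$ in probability. Hence $\mu\|W\|_F^2=\mu\|R\|_F^2+\mu\|S\|_F^2+o_{\mathbb{P}}(1)$, and combining with (i),
\[
L(\alpha_S)-\big(L(\alpha_W)-\mu\|R\|_F^2\big)=\big(L_{\mathrm{CE}}(\alpha_S)-L_{\mathrm{CE}}(\alpha_W)\big)-2\mu\langle R,S\rangle\longrightarrow0
\]
in probability, while $L(\alpha_W)-\mu\|R\|_F^2=L_{\mathrm{CE}}(\alpha_W)+\mu\|S\|_F^2+2\mu\langle R,S\rangle+\mu\sum_{i\neq2}\|W_i\|_F^2\ge\mu\|S\|_F^2-o_{\mathbb{P}}(1)$ exceeds a fixed positive constant with probability tending to $1$ (the DNN being non-degenerate, i.e. $S\neq0$). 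Therefore, for every $\epsilon>0$, on $\Omega_N$ and for $N$ large the left-hand quantity above is $<\epsilon$ times the right-hand one with probability $\to1$, which is exactly $L(\alpha_S)<(1+\epsilon)\big(L(\alpha_W)-\mu\|R\|_F^2\big)$.

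For (iii), the label assigned to $s$ is the index of the largest component of $X(s,\alpha)$, and it can differ between $\alpha_W$ and $\alpha_S$ only when the gap between the top two components of $X(s,\alpha_W)$ is $\le 2a_\phi(N,s)$; hence on $\Omega_N$ we get $|acc_{\alpha_W}(t)-acc_{\alpha_S}(t)|\le\tfrac{1}{|T|}\mathrm{Card}\{s\in T:\ \mathrm{gap}(s)\le 2a_\phi(N,s)\}=:G(N)$, and $G(N)\to0$ since $a_\phi(N,s)\to0$ and, for $N$ large, these gaps exceed $2a_\phi(N,s)$ (the limiting output $X(\cdot,\alpha_S)$ having, generically, a strict top class). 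The main obstacle is the cross term in (ii): the crude bound $|\langle R,S\rangle|\le\|R\|_F\|S\|_F=O(\sqrt N)$ is far from $o(1)$, so one genuinely needs the rotation invariance of $R$ (Property 1, or the spiked-model structure behind Assumptions \ref{as5}--\ref{as7}) to conclude that $R$ is asymptotically orthogonal to the fixed-rank matrix $S$; a secondary subtlety is making the accuracy estimate uniform in $N$, which is why $G(N)$ is permitted to be an unspecified vanishing sequence rather than an explicit one.
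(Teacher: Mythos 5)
Your proposal follows the same overall route as the paper's proof — split the loss into the cross-entropy part and the $L^2$ regularization term, control the former via the generalized analogue of Lemma \ref{main_result_remove _R}, control the latter via the Frobenius norm, and combine — but your handling of the regularization term is appreciably more explicit than the paper's and surfaces a point the paper glosses over. The paper's argument in Subsection \ref{main_theorem_proof} for why $\mu\|W\|_F^2$ drops by $\mu\|R\|_F^2$ is asserted by appealing to singular-value asymptotics: the MP bulk of $W$ is declared to disappear and its squared singular values are identified with $\|R\|_F^2$, without ever tracking the cross-term. You instead use the exact algebraic identity $\|W\|_F^2=\|R\|_F^2+\|S\|_F^2+2\langle R,S\rangle$ and isolate that everything hinges on $\langle R,S\rangle\to 0$. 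This is cleaner and more precise, and your variance computation $\mathrm{Var}(u_i^T R v_i)=O(1/N)$ for fixed-rank $S$ is exactly the right way to close it for the Gaussian case; your accuracy argument via the top-two logit gap is likewise more explicit than the paper's.

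The one place you need to be careful — and to your credit, you flag it yourself — is that for Theorem \ref{theorem_reduction_of_loss_general} the controlling assumptions are \ref{as4}–\ref{as7}, which do \emph{not} by themselves guarantee the statistical isotropy you invoke. Assumption \ref{as5} controls $\|Rv\|_\infty$, but the crude pass-through $|u^TRv|\le\|u\|_1\|Rv\|_\infty\le\sqrt{N}\,d_1(N)J(v)$ does not vanish for the Gaussian-scale $d_1(N)\sim\sqrt{\log N/N}$; one really needs bi-unitary invariance (Property 1) or an equivalent directional decorrelation of $R$ from the fixed singular directions of $S$. In effect your proof of the cross-term step is for the Gaussian/rotation-invariant subclass, not for all matrices satisfying Assumptions \ref{as4}–\ref{as7}. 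This is a genuine hole in the generality claimed by the theorem — but it is the same hole present in the paper's own proof (which implicitly identifies the MP-bulk squared-singular-value mass with $\|R\|_F^2$, an identification that also requires $\langle R,S\rangle\to 0$). So your write-up is, if anything, more honest about where the argument relies on structure beyond the stated assumptions.
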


\subsection{Key Lemmas: how removing randomness from DNN weight layers affects the output, loss and accuracy of a DNN}

In order to prove the main result, we will consider a few critical lemmas. These lemmas describe how the output, accuracy, and loss of a DNN remain largely unaffected when we replace the matrix $W_2$ with its deterministic part only. In other words, the random component of $W_2$ does not significantly affect the loss, accuracy, and output of the DNN. This outcome should be expected, as the random part should act more like noise, without causing substantial changes in the loss and accuracy. The following lemmas illustrate this concept.

\textbf{Recall:}
We define the final output of the DNN before softmax as: 
\begin{equation}
\label{final_before_out}
X = \psi_2\circ (R+S)\circ \psi_1(s).    
\end{equation}

 \begin{lem}
     
\label{main_result_remove _R_general}
   Given \eqref{final_before_out}, let $W+R+S$ be a $N \times N$ matrix such that $W(t)=R(t)+S(t)$, with $R(t)$ a random matrix with components i.i.d. and $S(t)$ a deterministic matrix satisfying Assumptions \ref{as4}-\ref{as7}.
  
    Suppose we replace the weight layer matrix $W$ with the deterministic matrix $S$. Then 
    \begin{equation}
        \label{pruning_equation_1}
\mathbb{P}\bigg(|X_i(s,\alpha_{S})-X_i(s,\alpha_{W})| \leq d_1(N)h_{\phi}(s) \bigg)\geq 1-d_2(N) ).
   \end{equation}
   See \eqref{gen_ass_for_R} for the details on $d_1(N)$ and $d_2(N)$, and \eqref{from_DNN_st_2} for the details of $h_{\phi}(s)$. Here, $\alpha_{S}$ are the parameters of the DNN, which has the weight matrix $S$ and  $\alpha_{W}$ are the parameters of the DNN with the weight layer matrix $W$.

 \end{lem}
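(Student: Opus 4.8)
The plan is to compare the two pre-softmax feature vectors directly, pushing all of the probabilistic content into a single application of Assumption \ref{as5}. First I would write $X(s,\alpha_W) = \psi_2\bigl((R+S)\,\psi_1(s)\bigr)$ and $X(s,\alpha_S) = \psi_2\bigl(S\,\psi_1(s)\bigr)$, and set $v := (R+S)\,\psi_1(s)$, $w := S\,\psi_1(s)$, so that $v-w = R\,\psi_1(s)$. Applying Assumption \ref{as4} to this pair gives
\[
\bigl\|X(s,\alpha_W) - X(s,\alpha_S)\bigr\|_\infty \;=\; \bigl\|\psi_2 v - \psi_2 w\bigr\|_\infty \;\le\; C_1\,\bigl\|\psi_2\bigl(R\,\psi_1(s)\bigr)\bigr\|_\infty .
\]
This is the step that strips the (possibly nonlinear) $\psi_2$ away from around the difference and replaces it by $\psi_2$ acting on the purely random vector $R\,\psi_1(s)$.

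Next I would bound $\psi_2$ by its induced operator norm, $\bigl\|\psi_2\bigl(R\,\psi_1(s)\bigr)\bigr\|_\infty \le \|\psi_2\|_1\,\bigl\|R\,\psi_1(s)\bigr\|_\infty$, and recall that $h_\phi(s) = C_1\,J(\psi_1(s))\,\|\psi_2\|_1$ from \eqref{from_DNN_st_2}; the lemma then reduces to the event $\bigl\{\,\|R\,\psi_1(s)\|_\infty \le d_1(N)\,J(\psi_1(s))\,\bigr\}$. But this is exactly the event controlled by Assumption \ref{as5} applied to the fixed vector $\psi_1(s)$: its complement has probability at most $d_2(N)$. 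On the good event we get $\|X(s,\alpha_W)-X(s,\alpha_S)\|_\infty \le C_1\,\|\psi_2\|_1\,d_1(N)\,J(\psi_1(s)) = d_1(N)\,h_\phi(s)$, and since an $\ell_\infty$ bound dominates each coordinate, this is precisely the claimed bound on $|X_i(s,\alpha_S)-X_i(s,\alpha_W)|$. I would also note that Assumptions \ref{as6}--\ref{as7} (the spiked-model structure of $S$ and $W$) play no role in this particular estimate; they enter only in the theorems built on top of it.

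The main thing requiring care is the norm bookkeeping in the second step, together with checking that Assumption \ref{as4} genuinely applies with a finite $C_1$ for the architectures of interest. For $\psi_2$ of the concrete form $\lambda\circ(\text{affine})$ with $\lambda$ the absolute value or ReLU activation, Assumption \ref{as4} holds with $C_1=1$, since $|\,|x|-|y|\,|\le|x-y|$ coordinatewise (and similarly for ReLU) yields $\|\lambda(Au+b)-\lambda(Au'+b)\|_\infty \le \|A(u-u')\|_\infty = \|\lambda(A(u-u'))\|_\infty$, which is exactly the hypothesis; I would make this explicit. Verifying that $\|\psi_2\|_1$ as used in \eqref{from_DNN_st_2} is indeed the operator norm that dominates $\|\psi_2(\cdot)\|_\infty/\|\cdot\|_\infty$ --- so that the definition of $h_\phi$ matches what the argument actually produces --- is the routine but non-negligible detail. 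Once it is in place, the proof is a one-line consequence of Assumption \ref{as5}; the same argument, with $J(v)=\|v\|_2$ and $d_1,d_2$ supplied by the Borell--TIS inequality, recovers the Gaussian case of Lemma \ref{main_result_remove _R}.
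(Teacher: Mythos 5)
Your proof is correct and follows essentially the same route as the paper's: apply Assumption~\ref{as4} to pull $\psi_2$ off the difference, bound by the $\ell_1$-operator norm of $\psi_2$ to reduce to $\|R\psi_1(s)\|_\infty$, and then invoke Assumption~\ref{as5} once to control that event. Your added remarks — that Assumptions~\ref{as6}--\ref{as7} are not used in this lemma, that $C_1=1$ for $\psi_2$ of the form $\lambda\circ(\text{affine})$ with $\lambda$ absolute value or ReLU, and that the Gaussian case of Lemma~\ref{main_result_remove _R} is recovered via Borell--TIS — are all accurate and go slightly beyond what the paper writes out, but the core argument is identical.
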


 For a proof of the Lemma see Subsection \ref{per_cor_proof_component_bound}.  
   
\subsection*{Acknowledgments}
LB, HO and YS acknowledge support from NASA via the AIST program (Kernel Flows: Emulating Complex Models for Massive Data Sets). This work started when LB was on a sabbatical stay at Caltech hosted by H. Owhadi. Both LB and YS are grateful for the hospitality during their visit, supported by NASA.   

The work of LB was partially supported by NSF grant DMS-2005262 and NSF grant IMPRESS-U
2401227. TB and HO acknowledge support from the Air Force Office of Scientific Research under MURI awards number FA9550-20-1-0358 (Machine Learning and Physics-Based Modeling and Simulation), FOA-AFRL-AFOSR-2023-0004 (Mathematics of Digital Twins), by the Department of Energy under award number DE-SC0023163 (SEA-CROGS: Scalable, Efficient, and Accelerated Causal Reasoning Operators, Graphs and Spikes for Earth and Embedded Systems). Additionally, HO acknowledges support from the DoD Vannevar Bush Faculty Fellowship Program.

\bibliographystyle{plainnat}

\bibliography{DNNstability_biblio} 

\appendix

\section{Other Numerical Results}
\label{fully_conneced}

This appendix provides some numerical results which shows that MP-based pruning leads to a reduction in loss and an increase in accuracy, which is relevant to Theorem \ref{theorem_reduction_of_loss}. We also confirm that adding (which in this context is the same as removing) and random matrix to the weight layer matrices of the DNN does not lead to a change in accuracy or cross-entropy loss but does lead to a change in the $L_2$ loss, numerically confirming Theorem \ref{theorem_reduction_of_loss} and Lemma \ref{main_result_remove _R}.

\begin{ex}
    \label{higher_acc_MP_based_training}
      In this example, we trained on Fashion MNIST a fully connected DNN with ReLU activation and inner dimensions as the following sequence, which we call its topology:$$[784,3000,3000,3000,3000,500, 10],$$ This means our network has 6 layers followed by activation functions, with layer $k$ being a linear layer $L_k:\mathbb{R}^{T_k}\to\mathbb{R}^{T_{k+1}}$. The training was performed multiple times, with no regularization and no PM-based pruning, with $L_1$ regularization, $L_2$ regularization, $L_1+L_2$ regularization and MP pruning (see Algorithm \ref{algo_1}) and so on to compare these different forms of training. We also compare our approach with stable rank regularization, that is adding the sum over $l$ of:
      \begin{equation}
         \text{stable}(W_l) = \frac{\|W_l\|^2_F}{ \|W_l\|^2_2}\end{equation}
      to the loss function, see \cite{xiao2023heavy} for more. This regularization is supposed to help induce heavy tail in the weight layer matrices, making the weight layer matrices "less random". 
      
      In all cases, the training was done for $100$ epochs. MP-pruning based training essentially involved pruning the small singular values in the weight layer matrices based on the MP distribution, and using this reduction in rank to reduce the number of parameters in the DNN, see Appendix \ref{algo_1} and \cite{shmalo2023deep, berlyand2023enhancing} for more information.  The ReLU activation function was applied after every layer, including the final layer (which is not typical). While it might be easier to train DNNs without applying the activation function to the final layer, we found that we obtained the highest accuracies when training with the affirmation structure while using MP-based pruning. For example, using $L1+L2$ regularization and MP-pruning, we obtained a $91.12\%$ accuracy on the Fashion MNIST test set, which is the highest accuracy we observed on the data set using a fully connected DNN (see Example \ref{higher_acc_MP_based_training_2}). Without the activation function being applied to all layers, the accuracy was less than $90\%$ (both with or without MP-pruning). MP-based pruning also increases the accuracy of fully connected DNNs that do not have an activation function on the final layer; for example, see Subsection \ref{reg_problem}.

      Every $4$ epochs, we kept a portion of  the smallest singular values (singular values less than $\sqrt{\lambda_+}$ given in Assumption $2$ in Subsection \ref{assumptions}) based on the formula:

\begin{equation}
\label{slow_prune}
   f(\text{{epoch}}) = \max\left(0, -\frac{1}{200} \cdot \text{{epoch}} + 1\right).
\end{equation}

By doing this, we are essentially removing "some" of the random matrices $R_l$ during training. Once the training is finished, we sparsify the weight layer matrices of the DNN by removing the weights smaller than some threshold $\xi$, see Algorithm \ref{spars_algo}. The results are given in  Fig. \ref{fig:result_after_spar} and Table \ref{full_connected_acc_results}.  In \cite{berlyand2023enhancing}, the authors showed that one can reduce the number of parameters by a much larger amount (a reduction in parameters by up to $99.98\%$) if we remove 

\begin{equation}
   f(\text{{epoch}}) = \max\left(0, -\frac{1}{100} \cdot \text{{epoch}} + 1\right)
\end{equation}
of the smallest singular values and sparsify the matrix once the training is complete. However, the accuracy that is obtained is also lower (see \cite{berlyand2023enhancing} for more on the relationship between MP-based pruning and sparsification).  In Fig. \ref{fig:result_after_spar}, we show how accuracy drops as we sparsify the DNNs.

      \begin{table}[ht]
\centering
\begin{tabular}{|c|c|c|}
\hline
Training method & Final accuracy on training set &  Final accuracy on testing set 

\\ \hline
\cancel{MP-based pruning}, \cancel{$L_1$}, \cancel{$L_2$}       & 78.03\%                      & 71.64\%                     \\ \hline
MP-based pruning, \cancel{$L_1$}, \cancel{$L_2$}         & 88.12\%                      & 81.22\%                     \\ \hline
MP-based pruning, $L_1$, $L_2$        &  \textbf{99.82\%}                       & \textbf{90.53\%}

\\ \hline
\cancel{MP-based pruning}, $L_1$, $L_2$        & 87.57\%                       & 81.70\%   

\\ \hline
\cancel{MP-based pruning}, $L_1$, \cancel{$L_2$}        & 87.42\%                       & 81.46\%

\\ \hline
\cancel{MP-based pruning}, \cancel{$L_1$}, $L_2$        & 87.37\%                       & 81.31\%

\\ \hline
MP-based pruning, $L_1$, $L_2$, stable rank         & \textbf{99.97\%}                       & 90.16\%  

\\ \hline
\cancel{MP-based pruning}, $L_1$, $L_2$, stable rank         & 79.01\%                       & 71.38\%  

\\ \hline
MP-based pruning, $L_2$         & \textbf{99.98}\%                       & 90.16\%

\\ \hline
\end{tabular}
\caption{Performance of fully connected DNN at final epoch for various training strategies.}
\label{full_connected_acc_results}
\end{table}

   \begin{table}[ht]
\centering
\begin{tabular}{|c|c|}
\hline
Training method & Final loss on training set 

\\ \hline
\cancel{MP-based pruning}, \cancel{$L_1$}, \cancel{$L_2$}       &  0.629775                                       \\ \hline
MP-based pruning, \cancel{$L_1$}, \cancel{$L_2$}         & 0.170905                                        \\ \hline
MP-based pruning, $L_1$, $L_2$        & \textbf{0.003104}

\\ \hline
\cancel{MP-based pruning}, $L_1$, $L_2$        & 0.325156                       

\\ \hline
\cancel{MP-based pruning}, $L_1$, \cancel{$L_2$}        &  0.399908

\\ \hline
\cancel{MP-based pruning}, \cancel{$L_1$}, $L_2$        & .431001

\\ \hline
MP-based pruning, $L_1$, $L_2$, stable rank         & \textbf{.001134}                      

\\ \hline
\cancel{MP-based pruning}, $L_1$, $L_2$, stable rank         & .476791                     

\\ \hline
MP-based pruning, $L_2$         & \textbf{0.001588}

\\ \hline
\end{tabular}
\caption{Performance of fully connected DNN at final epoch for various training strategies.}
\label{full_connected_acc_results}
\end{table}

The hyperparameters for the training are given in Table \ref{hyper_for_full_connected_training}.

\begin{table}[h]
\centering
\begin{tabular}{|l|l|}
\hline
\textbf{hyperparameters}     & \textbf{Value}      \\ \hline
Stable rank (SR)           & 0.000001
 \\ \hline
Every how many epochs the SR was used in the loss             & 6              \\ \hline
$L_1$          & 0.000001            \\ \hline
$L_2$           & 0.000001            \\ \hline
Every how many epochs the MP pruning was done           & 4            \\ \hline

Momentum           & 0.9           \\\hline

Learning rate          & 0.01 (decay by .96 every 4 epochs)          \\\hline 
\end{tabular}
\caption{hyperparameter values}
\label{hyper_for_full_connected_training}
\end{table}

      \begin{figure}[ht]
    \centering
    \begin{subfigure}[b]{0.3\textwidth}
        \includegraphics[width=\textwidth]{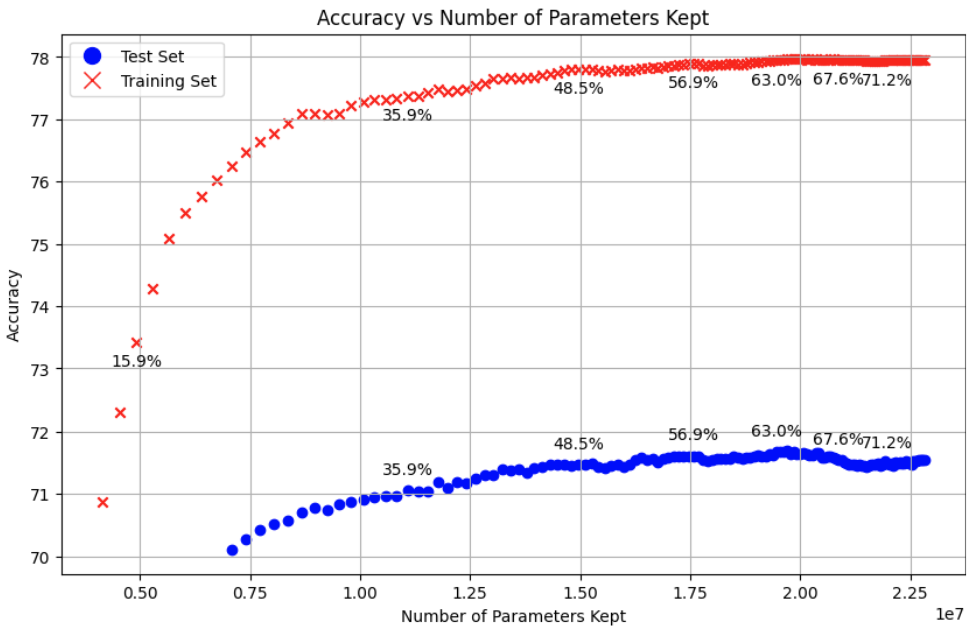}
        \caption{No MP-based pruning, and no $L_1$ or $L_2$ regularization.}
        \label{fig:sub1}
    \end{subfigure}
    \hfill
    \begin{subfigure}[b]{0.3\textwidth}
        \includegraphics[width=\textwidth]{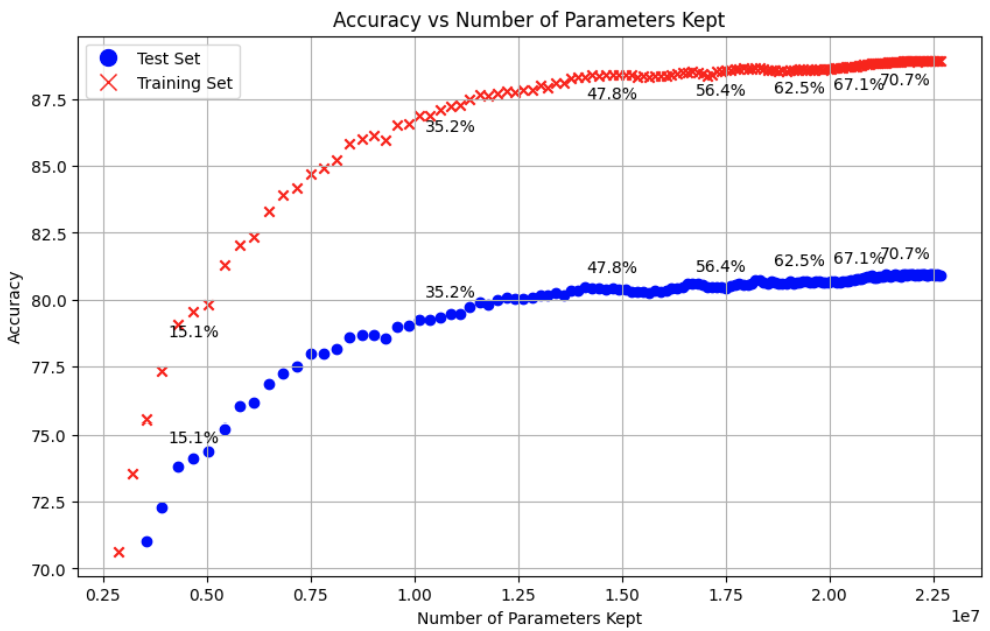}
        \caption{MP--based pruning, no $L_1$ or $L_2$ regularization.}
        \label{fig:sub2}
    \end{subfigure}
    \hfill
    \begin{subfigure}[b]{0.3\textwidth}
        \includegraphics[width=\textwidth]{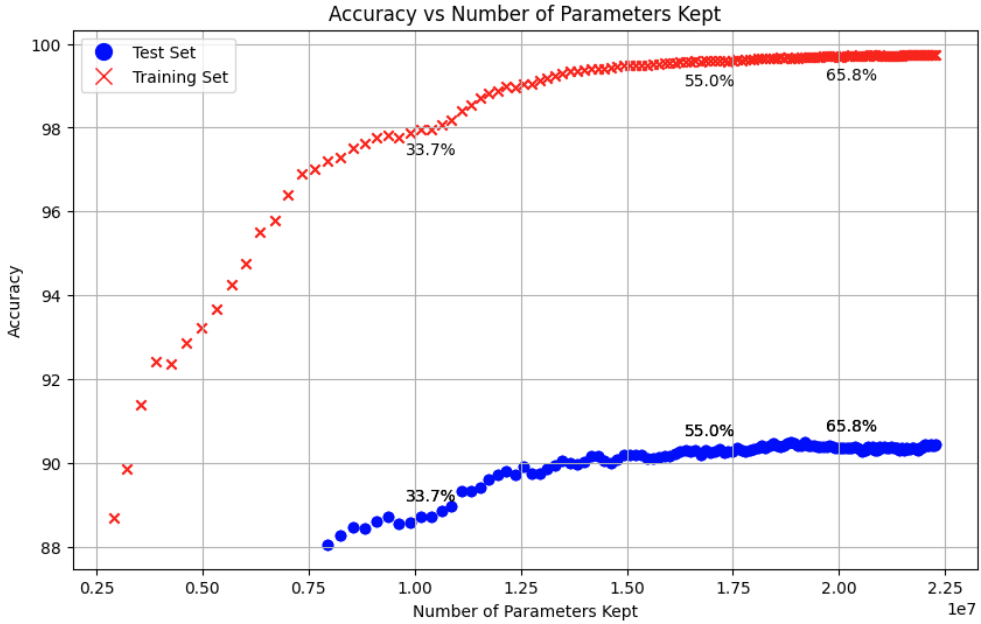}
        \caption{MP-based training + $L_1$ and $L_2$ regularization.}
        \label{fig:sub3}
    \end{subfigure}
    \newline
    \begin{subfigure}[b]{0.3\textwidth}
        \includegraphics[width=\textwidth]{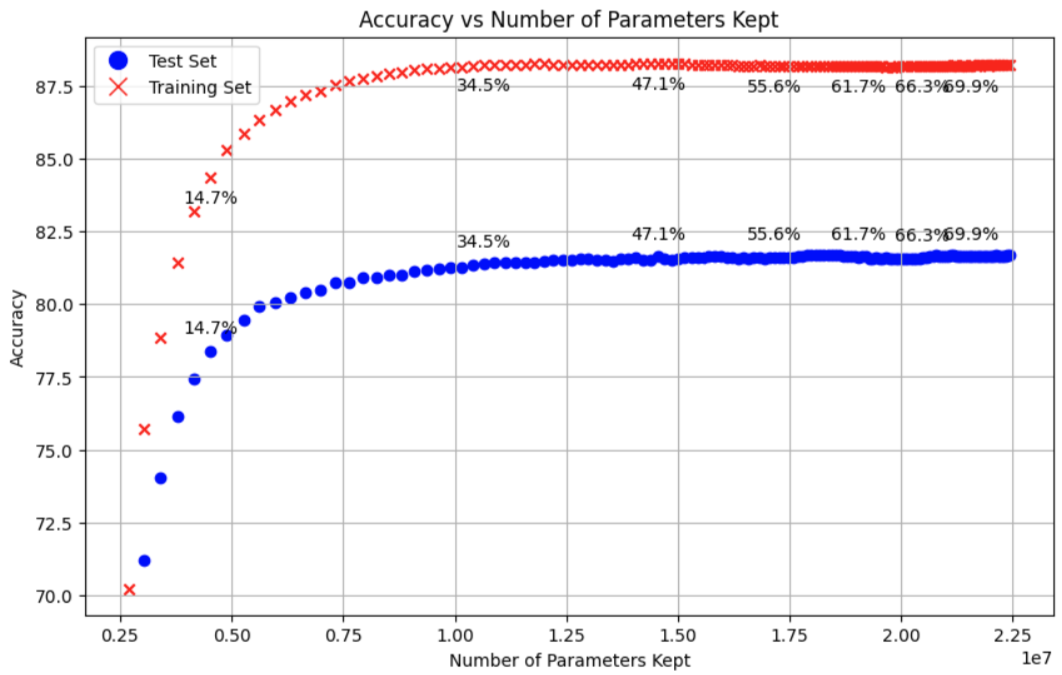}
        \caption{No MP-based pruning, used $L_1 + L_2$ regularization.}
        \label{fig:sub4}
    \end{subfigure}
    \hfill
    \begin{subfigure}[b]{0.3\textwidth}
        \includegraphics[width=\textwidth]{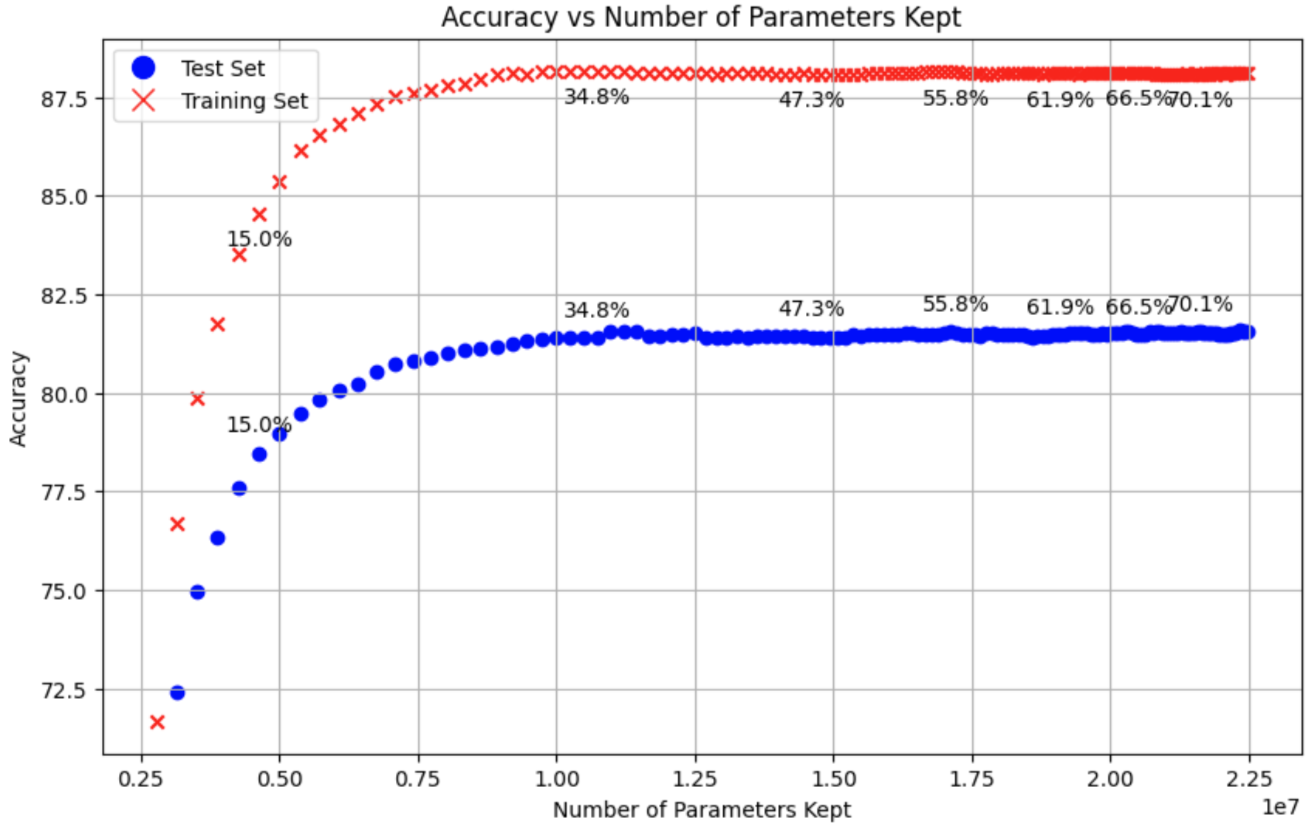}
        \caption{Only used $L_1$ regularization.}
        \label{fig:sub5}
    \end{subfigure}
    \hfill
    \begin{subfigure}[b]{0.3\textwidth}
        \includegraphics[width=\textwidth]{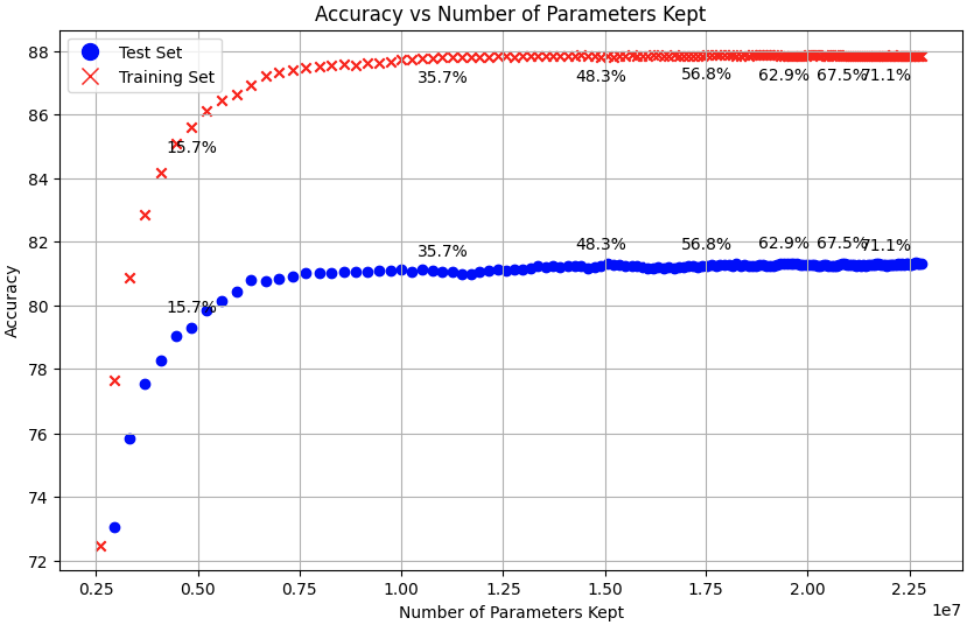}
        \caption{Only used $L_2$ regularization.}
        \label{fig:sub6}
    \end{subfigure}    
    \hfill
    \begin{subfigure}[b]{0.3\textwidth}
        \includegraphics[width=\textwidth]{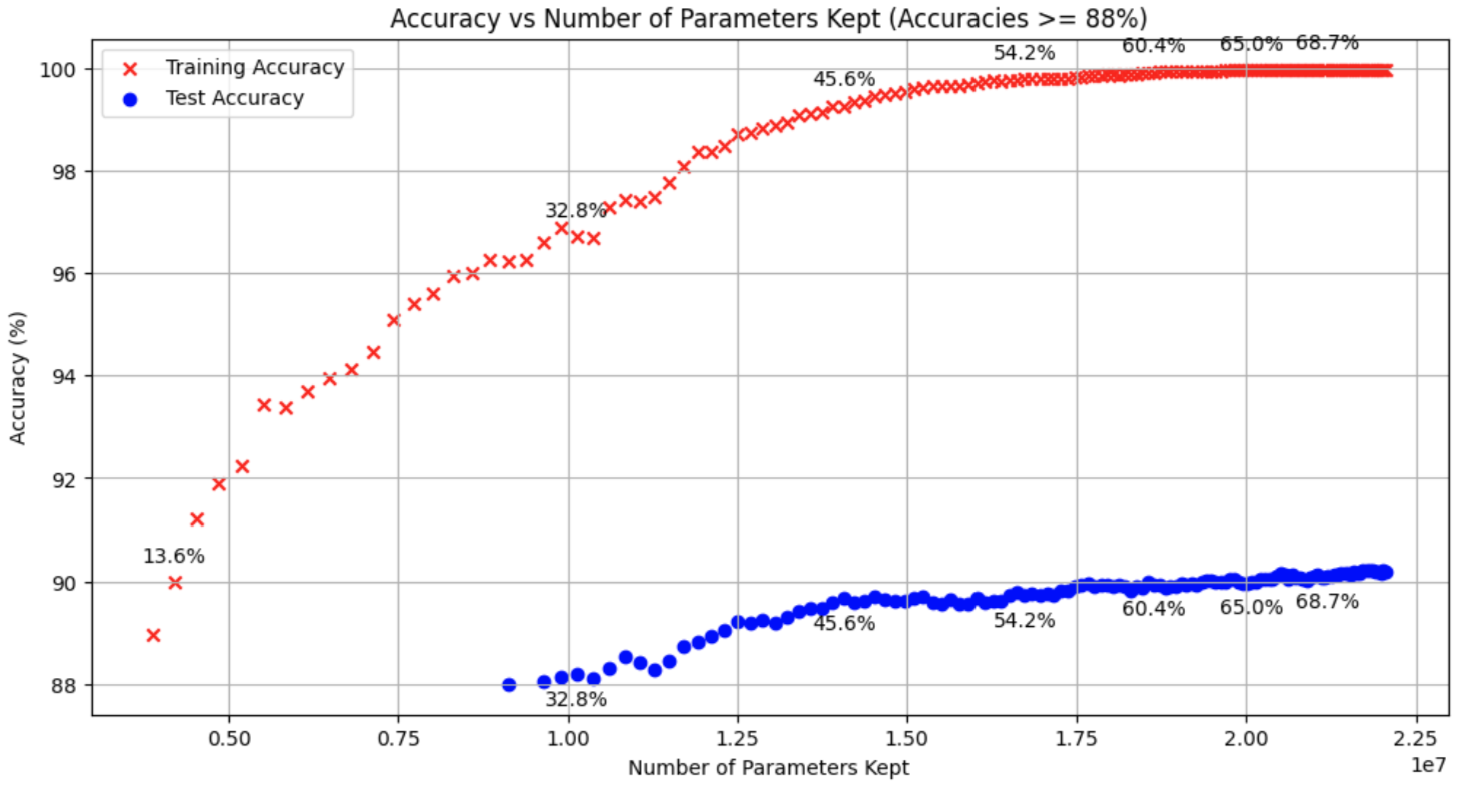}
        \caption{MP-based pruning, $L_1$, $L_2$, stable rank.        }
        \label{fig:sub6}
    \end{subfigure}
\hfill
    \begin{subfigure}[b]{0.3\textwidth}  \includegraphics[width=\textwidth]{MP_pruning_with_RMT_reg.png}
        \caption{$L_1$, $L_2$ and stable rank.        }
        \label{fig:sub6}
    \end{subfigure}
    \hfill
    \begin{subfigure}[b]{0.3\textwidth}  \includegraphics[width=\textwidth]{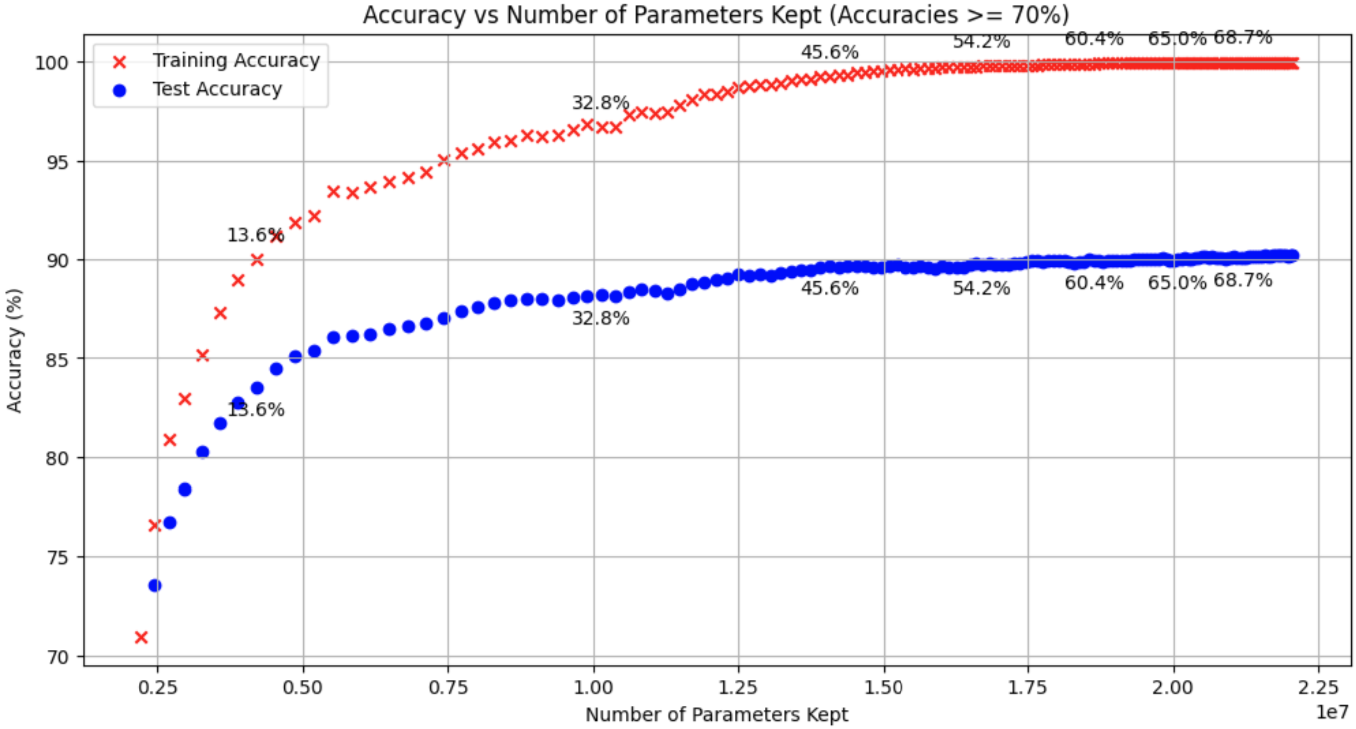}
        \caption{MP-based pruning with $L_2$.        }
        \label{fig:sub6}
    \end{subfigure}
    \hfill
    \caption{DNN performance vs. sparsification}
    \label{fig:result_after_spar}
   
\end{figure}
    \end{ex}

\begin{ex}
    \label{higher_acc_MP_based_training_2}
      In this example, we trained on Fashion MNIST a fully connected DNN with ReLU activation (on all but the final layer) and topology:$$[784, 5000, 5000, 5000, 5000,5000,5000,5000, 10].$$ This means our network has 7 layers followed by activation functions (excluding the final layer), with layer $k$ being a linear layer $L_k:\mathbb{R}^{T_k}\to\mathbb{R}^{T_{k+1}}$. The training was performed with $L_1+L_2$ regularization and PM-based pruning (see Algorithm \ref{algo_1}) and compared with using regularization alone. 
      
      In all cases, the training was done for $1000$ epochs. The ReLU activation function was applied after every layer, excluding the final layer (which is typical). Without MP-based pruning, the accuracy on the test set was lower than $90\%$. For the MP-based pruning, every $12$ epochs we kept a portion of  the smallest singular values (singular values less than $\sqrt{\lambda_+}$ given in Assumption $2$ in Subsection \ref{assumptions}) based on the formula:

\begin{equation}
\label{slow_prune}
   f(\text{{epoch}}) = \max\left(0, -\frac{1}{1000} \cdot \text{{epoch}} + 1\right).
\end{equation}

We used SGD together with a momentum of .97 and a learning rate of $.001$ with the CosineAnnealingLR scheduler. The L1 and L2 regularization hyperparameters were  $0.0000005$. Using MP-based pruning, the test set accuracy was $91.37$.
\end{ex}

\begin{ex}
\label{adding_noise}

In this example, we trained on Fashion MNIST a fully connected DNN with relu activation and  topology:
$$T=[784,3000,3000,3000,3000,500, 10],$$
This means our network has 6 layers followed by activation functions, with layer $k$ being a linear layer $L_k:\mathbb{R}^{T_k}\to\mathbb{R}^{T_{k+1}}$. We use the MP-pruning approach that can be found in Example \ref{higher_acc_MP_based_training}, to achieve a $90.5\%$ accuracy on the test set. Note that in Lemma \ref{main_result_remove _R_loosandacc}, the method for training the DNN does not matter; only the loss and accuracy of the DNN at time $t$ is what matters. Furthermore, this lemma is equally valid if instead of replacing $W_2$ with $S_2$, we replace the weight layer matrix $W_2$ with a new matrix $W_2+R'_2$, with $R'_2$ a random matrix with components i.i.d. taken from $N(1,\epsilon)$ with $\epsilon \sim \frac{1}{N}$. Thus, to all but the final layer matrix of the DNN we add a random matrix $R'_b$ with components i.i.d. from the distribution $N(0,\epsilon)$ and, in Fig. \ref{acc_loss_noise}, we plot the loss (both with and without L2 regularization) and accuracy of the DNN as we vary $\epsilon$. We see that for $\epsilon \sim \frac{1}{N}$ we have that the cross-entropy loss and accuracy don't change much, which numerically confirms Lemma \ref{main_result_remove _R_loosandacc}.

\begin{figure}[h]
    \centering
    \begin{subfigure}{0.41\textwidth}
        \includegraphics[width=1.3\textwidth]{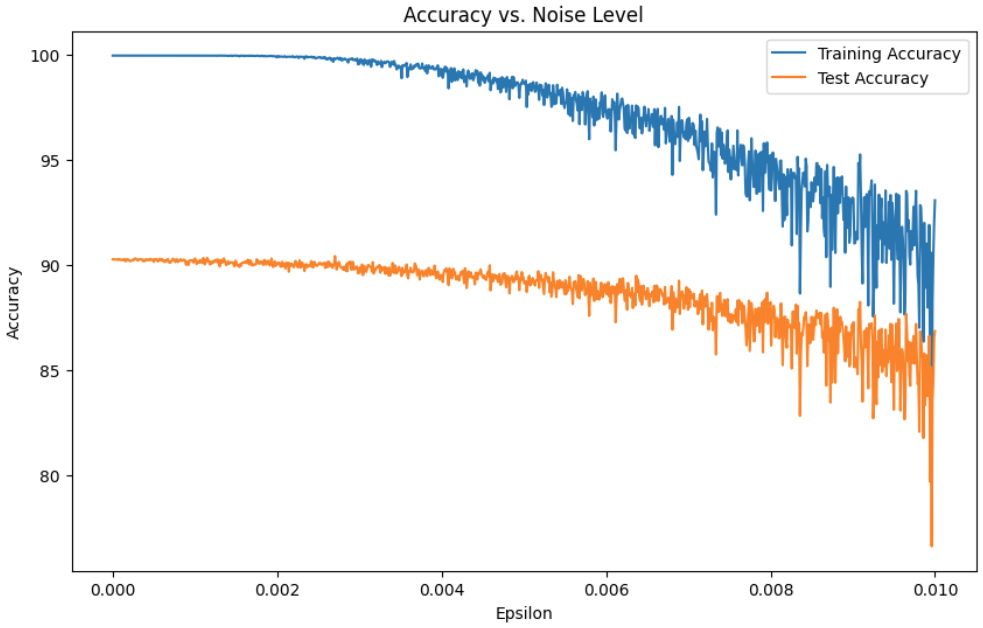}
        \caption{Accuracy of the DNN on the training and test sets, vs. $\epsilon$.}
        \label{fig:acc_vs_noise}
    \end{subfigure}
    \hfill
    \begin{subfigure}{0.41\textwidth}
        \includegraphics[width=1.3\textwidth]{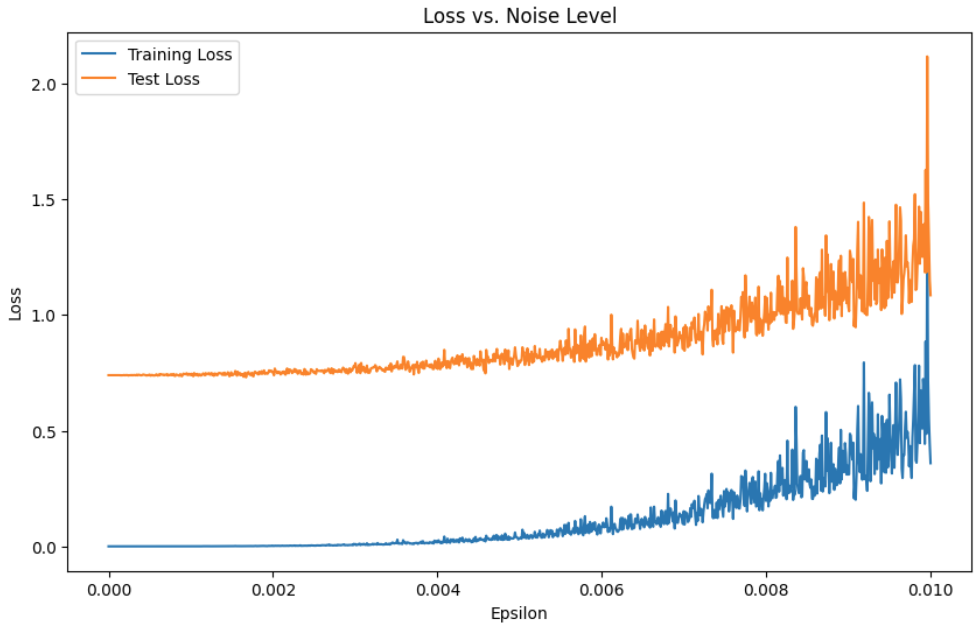}
        \caption{DNN loss (with no regularization) on training and test set vs. $\epsilon$.}
        \label{fig:res101top5}
    \end{subfigure}

    \begin{subfigure}{0.41\textwidth}
        \includegraphics[width=1.3\textwidth]{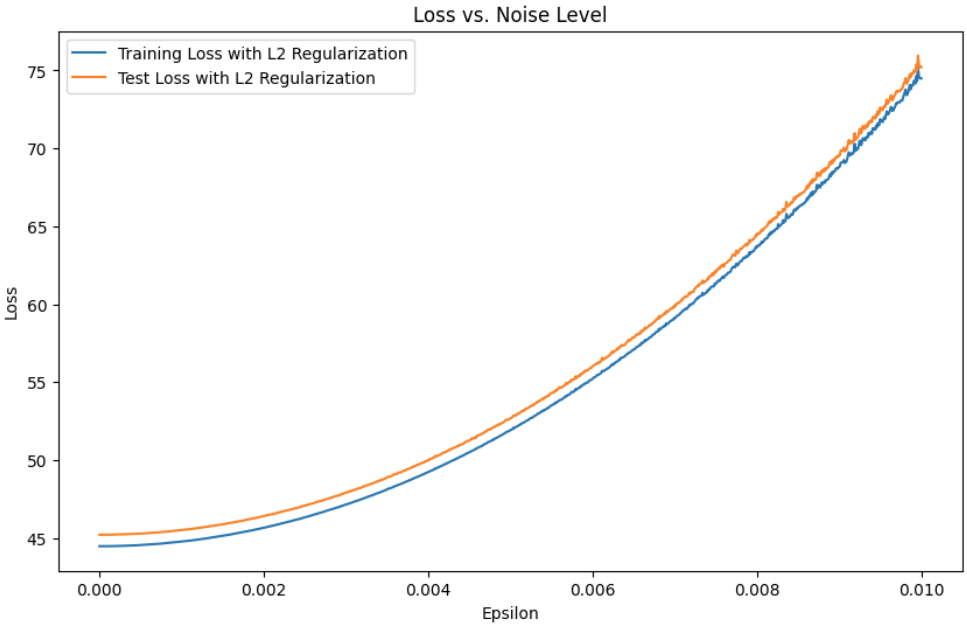}
        \caption{DNN loss with $L_2$ regularization on training and test set vs. $\epsilon$. The hyperparameter $\mu$ in the regularization is $.01$.}
        \label{fig:res101top5}
    \end{subfigure}
    
    \caption{Accuracy and loss as we add noise to the weight layers of the DNN. }
    \label{acc_loss_noise}
\end{figure}

\end{ex}

\begin{algorithm}
\caption{Computing Parameter $\lambda_+$ using MP and Tracy-Widom Distribution}
\label{BEMA_algo}
\begin{algorithmic}[1]
\Procedure{}{}
\State Pick parameters $\alpha \in (0, 1/2), \beta \in (0,1)$.
\For{each $\alpha N \leq k \leq (1-\alpha)N$}
    \State Calculate $q_k$, the $(k/N)$ upper-quantile of the MP distribution with $\sigma^2=1$ and $c = 1$.
    \State $q_k$ satisfies $\int\limits_{0}^{q_k} \frac{1}{2\pi}\frac{\sqrt{(4-\lambda)\lambda}}{\lambda} = k/N$.
\EndFor
\State Evaluate $\hat{\sigma}^2 = \frac{\sum_{\alpha N \leq k \leq (1-\alpha)N}q_k \lambda_k}{\sum_{\alpha N \leq k \leq (1-\alpha)N}q_k^2}$.
\State Derive $t_{1-\beta}$, the $(1-\beta)$ quantile of the Tracy-Widom distribution.
\State Conclude with $\lambda_+ = \hat{\sigma}^2[4+2^{4/3}t_{1-\beta}\cdot N^{-2/3}]$.
\EndProcedure
\end{algorithmic}
\end{algorithm}

 \begin{algorithm}
\caption{Spectral Analysis using BEMA Technique}
\begin{algorithmic}[1]
\Procedure{}{}
\State Take \(X = \frac{1}{N} W^T W\) as input where \(W\) is \(N \times M\).
\State Derive the spectrum of \(X = \{\sigma_1, \dots, \sigma_M\}\).
\State Ascertain the observed cumulative spectral distribution of \(X\), denoted \(F_X\).
\State Apply the BEMA technique with parameters \(\alpha\) and \(\beta\) to discern \(\hat\sigma^2\), the expected coordinate variance of $W$.
\State Determine \(0 \leq i_{\text{min}} < i_{\text{max}} \leq M\) with:
\begin{itemize}
    \item \(i_{\text{min}}\) as the least integer for which \(\frac{i_{\text{min}}}{M} \geq \alpha\).
    \item \(i_{\text{max}}\) as the greatest integer for which \(\frac{i_{\text{max}}}{M} \leq 1 - \alpha\).
\end{itemize}
\State Describe \(F_X'\) as the theoretical CDF for the MP distribution using parameters \(\hat \sigma^2\) and \(\lambda = N/M\).
\State Compute \(s = \max_{i \in [i_{\text{min}}, i_{\text{max}}]} \left |F_X(i) - F_X'(i) \right |\).
\State If \(s > \gamma\), rule out the idea that \(X\) is governed by the stated distribution. Else, if \(s \leq \gamma\), retain this notion.
\EndProcedure
\end{algorithmic}
\end{algorithm}

\begin{algorithm}
\caption{MP-based pruning algorithm}
\label{algo_1}
\begin{algorithmic}[1]
\Require $\ell$, a predetermined number of epochs; $\tau$, a threshold for the MP fit criteria in Subsection \ref{Alignment_Evaluation}; $f(\text{epoch})$, a monotonically decreasing function from $1$ to $0$ (i.e. \eqref{slow_prune}) and for each $1\leq l \leq L$ and weight layer matrix $W_l$ \textbf{state} $split_{l}=false$.
\State Initialize: Train the DNN for $\ell$ epochs. Take $\text{epoch}:=\ell$.

\While{a predefined training condition is met (i.e. $\text{epoch}\leq 100)$}
    \For{each $l$, if $split_{l}=false$ then for weight matrix $W_l$ in the DNN $\phi$}
        \State Perform SVD on $W_l$ to obtain $W_l=U_l\Sigma_lV_l^T$.
        \State Calculate eigenvalues of $ W_l^T W_l$.
        \State Apply BEMA algorithm (see Subsection \ref{finding_lambda_updated}) to find the best fit MP distribution for ESD of $X =  W_l^T W_l$ and corresponding $\lambda_+$.
        \State Check if ESD of $X$ fits the MP distribution using MP fit criteria from Subsection \ref{Conformance_Assessment} and  threshold $\tau$.
        \If{ESD fits the MP distribution}
            \State Eliminate the portion ($1-f(\text{epoch})$) of singular values smaller than $\sqrt{\lambda_+}$ to obtain $\Sigma'$ and form $W'_l=U_l\Sigma'_lV_l^T$.
            \State Use $\Sigma'$ to create $W'_{1,l}=U_l\sqrt{\Sigma'_l}$ and $W'_{2,l}=\sqrt{\Sigma'_l}V_l^T$.
            \If{$W'_{1,l}$ and $W'_{2,l}$ together have fewer parameters than $W'_l$}
            \State Replace $W_l$ in the DNN $\phi$ with  $W'_{1,l}W'_{2,l}$, change $split_l=true$.
            \Else
              \State
                Replace $W_l$ in the DNN $\phi$ with  $W'_l$.
            \EndIf
        \Else
            \State Don't replace $W_l$.
        \EndIf
    \EndFor
    \State Train the DNN for $\ell$ epochs. Take $\text{epoch}:=\text{epoch}+\ell$.
    \For{each $l$, if $split_{l}=true$}
  
    \If{for $W_l:=W'_{1,l}W'_{2,l}$ the ESD of $X_l$ fits the MP distribution with thresholds $\tau$ and $\lambda_+$ \textbf{and} if, we (hypothetically) applied steps 4-12 to $W_l$, the number of parameters in the DNN $\phi$ would decrease}
    \State replace $W'_{1,l}W'_{2,l}$ with $W_l$ and $split_l=false$.
    \Else
    \State Don't change anything.
    \EndIf 
    \EndFor
\EndWhile
\end{algorithmic}
\end{algorithm}

\begin{algorithm}
\caption{Sparsify Singular Vectors }
\label{spar_sing_vectors}
\begin{algorithmic}[1]
\State Let $W_l$ denote the $l$-th layer matrix of a DNN of size $N \times N$ and an initial threshold $\theta$.
\State Perform SVD on $W_l$: $W_l = U \Sigma V^T$.
\State Given threshold $\sqrt{\lambda_+}$.

\For{$i = 1, \ldots, N$}
    \If{$\Sigma_{ii} < \sqrt{\lambda_+}$}
        \State $M_i = 1$
    \Else
        \State $M_i = 0$
    \EndIf
    \State $T_i(\sigma) = \theta \cdot \left(1 - \frac{\sigma}{\sqrt{\lambda_+}}\right)^{30}$
    \If{$M_i = 1$}
        \State $U_{:,i} = \text{apply\_sparsity}(U_{:,i}, T_i(\Sigma_{ii}))$
        \State $V_{i,:} = \text{apply\_sparsity}(V_{i,:}, T_i(\Sigma_{ii}))$
    \EndIf
\EndFor

\State Extra universal sparsification step:
\State Given a threshold, $\theta$.
\For{$i = 1, \ldots, N$}
    \State $U_{:,i} = \text{apply\_sparsity}(U_{:,i}, \frac{\theta}{750})$
    \State $V_{i,:} = \text{apply\_sparsity}(V_{i,:}, \frac{\theta}{750})$
\EndFor

\State Recompose matrix: $W_l' = U \Sigma V^T$.

\end{algorithmic}
\end{algorithm}

\begin{algorithm}
\caption{Element-wise Sparsification}
\label{spars_algo}
\begin{algorithmic}[1]

\State \textbf{Input:} Matrix $W_l$, threshold $\xi$
\State \textbf{Output:} Sparsified matrix $W_l'$

\For{each element $w_{ij}$ in $W_l$}
    \If{$w_{ij} < \xi$}
        \State $w_{ij}' = 0$
    \Else
        \State $w_{ij}' = w_{ij}$
    \EndIf
\EndFor

\State \Return $W_l'$

\end{algorithmic}
\end{algorithm}

\section{A simple illustration of the relationship between RMT-based pruning and regularization}
\label{reg_problem}
The core concept underlying the developed pruning method relies on the observation that Neural Network weight matrices frequently exhibit characteristics akin to those of a low-rank matrix addition to a random matrix component. 
The idea of the pruning method developed relies on the observation that oftentimes, the weight matrices of Neural Networks behave like a very low-rank matrix with an added random matrix. The origin of the randomness is two-fold. Firstly, Neural Networks undergo training through various iterations of stochastic gradient descent. Consequently, the resultant trained network serves as an approximate minimizer of the associated loss function, with its weight distribution potentially reflecting the inherent stochastic nature of the optimization algorithm employed. Secondly, the sample data used to train the network introduces randomness to the training process, particularly when the dataset incorporates random noise. The act of pruning can be conceptualized as a regularization technique operating across the spectrum aimed at promoting the recovery of this low-rank matrix structure.\\
In order to better understand the implications of pruning, we present a simple problem designed to illuminate the effect outcomes of pruning in comparison to alternative forms of regularization.

\subsection{Regularization in a simple regression setting}

Having a low-rank weight matrix implies that the DNN has found a feature map representation of the task that is lower dimensional than the set dimension of the weight matrix. In order to emulate this situation, we introduce a simple regression model where an optimal representation is known. We sample $M$ random function $y^{(k)}\in Span\{\cos(\pi f \cdot),\sin(\pi f \cdot),\text{ for } f\in\{0,0.2,..,2\}\}$, and take random, noisy samples $(X_j,Y_j)\in\mathbb{R}\times\mathbb{R}^M$ s.t. $(Y_j)_k = y^{(k)}(X_j) + 7\epsilon_j$, with $\epsilon_j \sim \mathcal{N}(0, 1)$. Our objective entails the recovery of the $y^{(k)}$ functions from this dataset. An illustrative example of such data along one coordinate is provided in Figure \ref{fig:data_example}. Because of the structure of this problem, an efficient feature map representation of the data is through the Fourier transform, as $y$ belongs to a vector space of low frequency. It has been argued that Neural Networks can be interpreted as a kernel with a learned feature map based on the data. 
For the sake of simplicity in this context, we assume a fixed feature map, the Fourier feature map. Accordingly, the Neural Network materializes as a linear layer on top of the selected feature map, equivalent to the kernel regression defined by the said feature map.

\begin{figure}[h]
    \centering
    \includegraphics[width=0.5\textwidth]{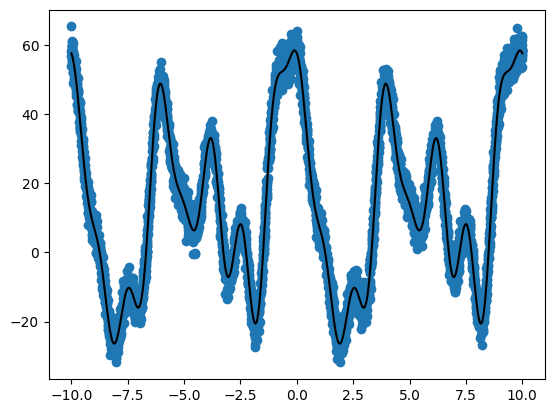}
    \caption{Example of data generated from the problem statement on one axis. The blue line is the function $y_0$, and the blue dots are the data points.}
    \label{fig:data_example}
\end{figure}

In the following paragraphs, we will study the impact of different regularizations, including pruning, on the singular values of the linear layer's weight matrix. Given the significant variations in these singular values on a logarithmic scale, we visualize the cumulative distribution of singular values rather than conventional histograms.

\paragraph{Ideal case} In the absence of noise, the optimal weight matrix has a low rank, as $y$ belongs to a functional space with a finite number of distinct frequencies.
Using kernel regression theory, we find the optimal weight matrix and visualize its singular values in figure \ref{fig:ideal_spectrum}. Most singular values, barring 16, approach zero (numerically below $10^{-15}$). 
This confirms that the optimal weight matrix is inherently low-rank in the absence of noise. Subsequently, our regression task pertains to recovering this low-dimensional structure when confronted with noisy data.

\begin{figure}[h]
    \centering
    \includegraphics[width=0.5\textwidth]{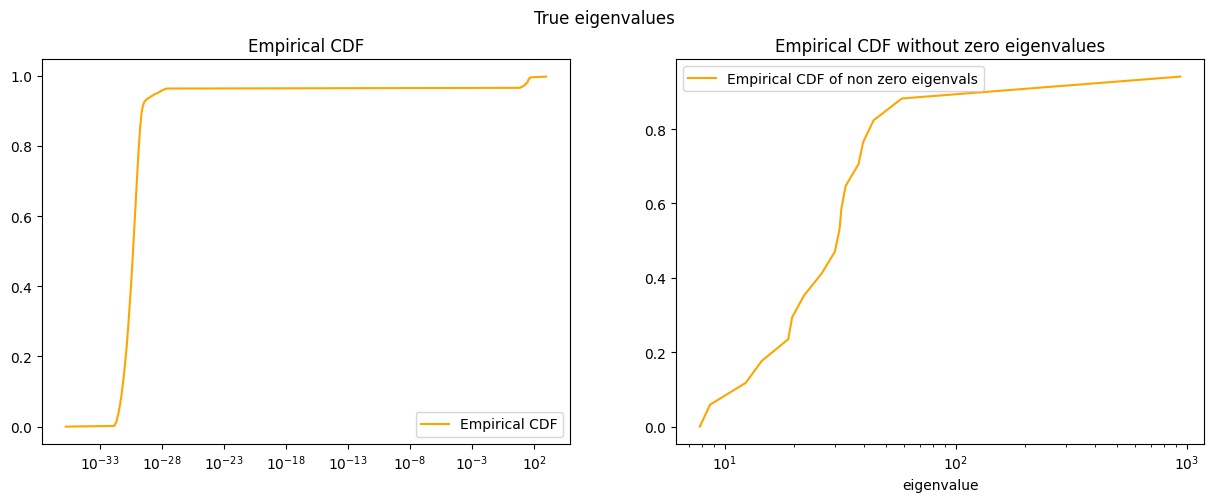}
    \caption{Cumulative distribution of singular values of the optimal weight matrix}
    \label{fig:ideal_spectrum}
\end{figure}

We will now try the pruning regularisation and compare it to two classical regularisations, $L^2$ and $L^1$ regularisation.

\paragraph{$L^2$ regularisation in the Presence of Noise}  Employing Kernel Ridge Regression, we obtain a $L^2$ regularised weight matrix. 
We plot the singular values of this matrix in figure \ref{fig:l2_spectrum}. We can see a large number of minimal singular values.
By running the BEMA algorithm on this matrix, we identify a spectrum resembling the Marchenko-Pastur distribution. 
Moreover, the algorithm finds an upper bound for random singular values consistent with the ideal spectrum we observed above. 
This alignment with the Marchenko-Pastur distribution implies that the predominant source of randomness in our weight matrices is data noise rather than the stochastic nature of the optimization algorithm. Indeed, the weight matrix obtained here is deterministic when conditioned on the data, as the loss admits a minimizer with a closed-form solution. 

\begin{figure}[h]
    \centering
    \includegraphics[width=0.5\textwidth]{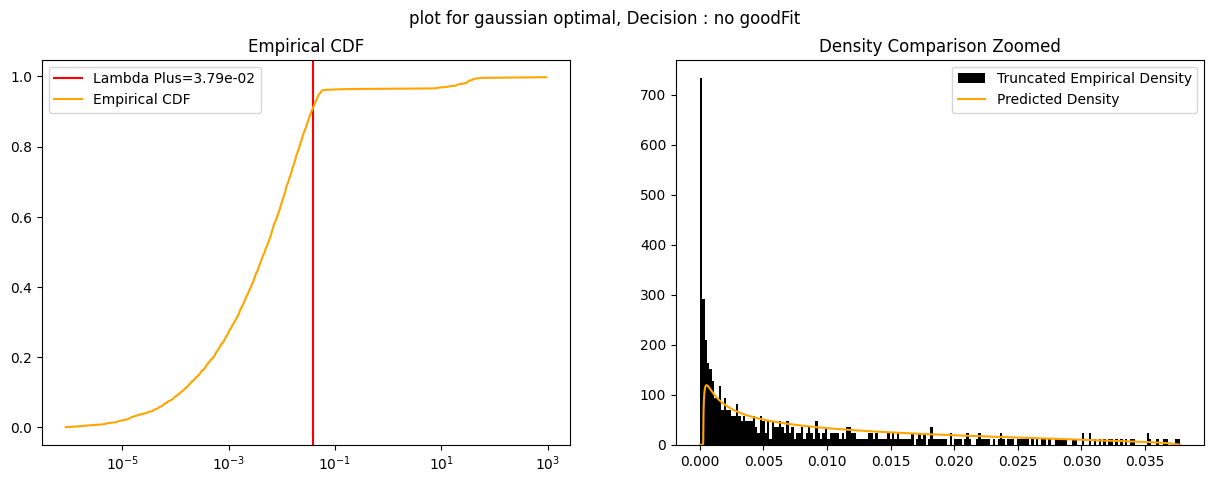}
    \caption{Cumulative distribution of singular values of the $L^2$ regularised weight matrix}
    \label{fig:l2_spectrum}
\end{figure}

\paragraph{$L^1$ regularisation in the Presence of Noise}Turning to $L^1$ regularization, using kernelized Lasso Regression, we plot the singular values of the resulting matrix in Figure \ref{fig:l1_spectrum}. 
Notably, 40\% of the singular values have been zeroed out compared to the $L^2$ case. However, the remaining spectrum approximates an MP distribution, underscoring that the noise-induced randomness persists to some degree. It must be pointed out that $L^1$ regularisation tends to nullify coefficients of the weight matrix instead of singular values. In this scenario, our selected ideal feature map endows the optimal weight matrix with sparsity, thus rendering the recovery of a sparse weight matrix equivalent to the recovery of a low-rank one. 
It warrants emphasis, however, that with an alternate feature map, $L^1$ regularization might prove less effective in revealing the weight matrix's low-rank structure and could even significantly diminish performance if weighed too heavily.

\begin{figure}[h]
    \centering
    \includegraphics[width=0.5\textwidth]{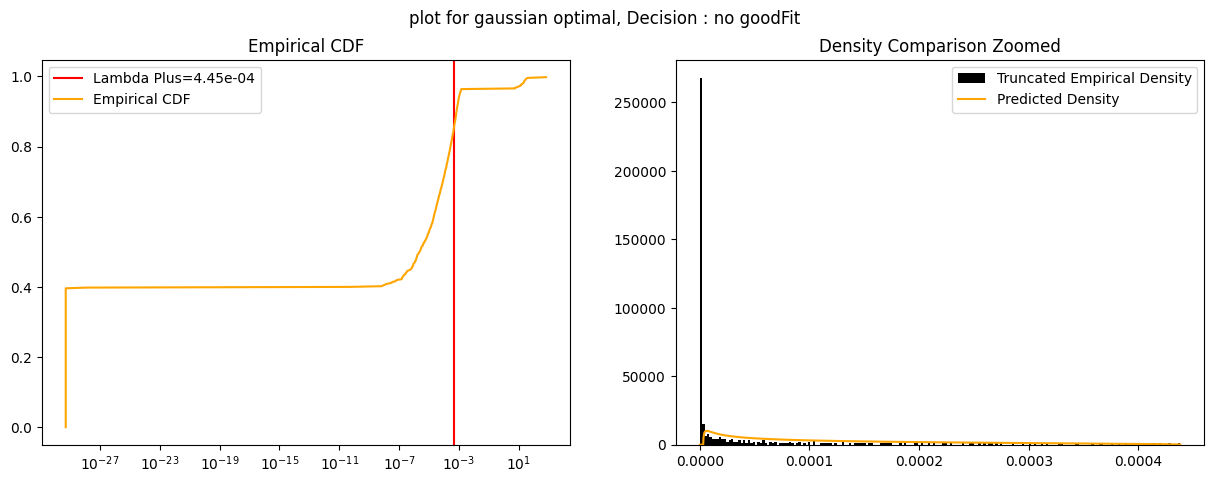}
    \caption{Cumulative distribution of singular values of the $L^1$ regularised weight matrix}
    \label{fig:l1_spectrum}
\end{figure}

\paragraph{Pruning in the Presence of Noise} We visualize the singular values of the unregularised matrix in Figure \ref{fig:pruning_spectrum}.
We see that this spectrum is very similar to the spectrum of the $L^2$ regularised weight matrix. This confirms the intuition that $L^2$ is not an efficient spectrum regularisation. 
Nonetheless, the BEMA algorithm successfully identifies a precise upper bound for the matrix's random singular values. This shows the capacity of pruning regularization to eliminate the noise-driven randomness from the weight matrix.

\begin{figure}[h]
    \centering
    \includegraphics[width=0.5\textwidth]{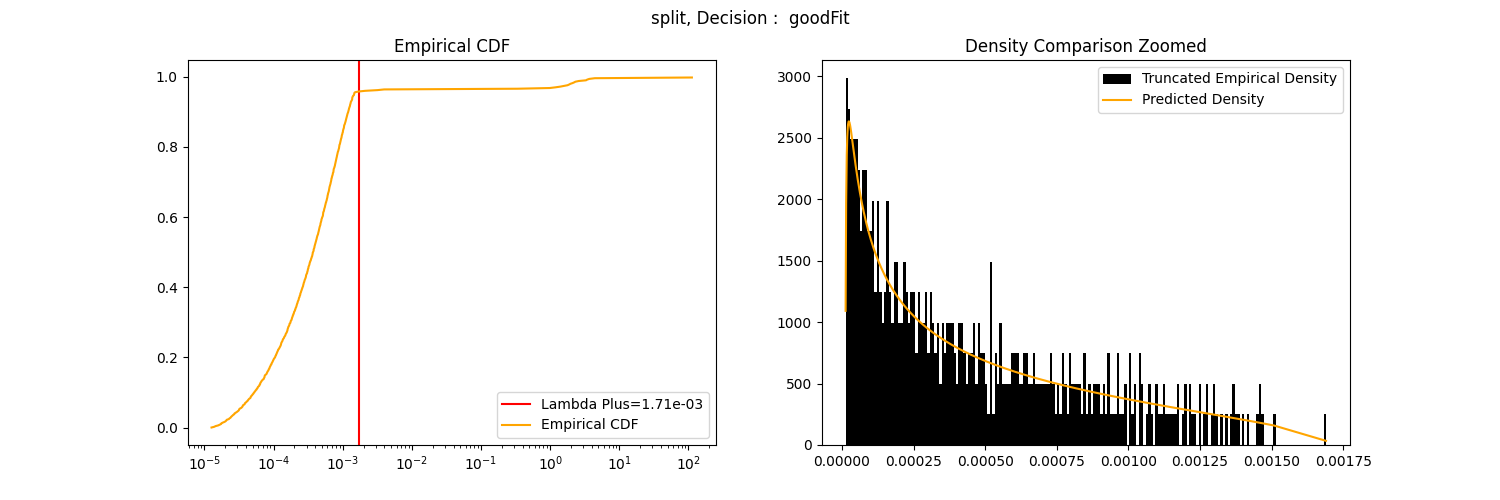}
    \caption{Cumulative distribution of singular values of the weight matrix prior to pruning}
    \label{fig:pruning_spectrum}
\end{figure}

In this simple illustrative instance, pruning regularization emerges as an effective tool for sparsifying the weight matrix's spectrum, thereby facilitating the recuperation of the underlying low-rank structure intrinsic to the problem at hand. Table \ref{tab:mse} reports the effects of the different regularizations on the mean squared error. We see that successfully removing the random part of the weight matrix spectrum allows better accuracy. 

\begin{table}[h]
\centering
\begin{tabular}{|c|c|}
\hline
\textbf{Model} & \textbf{Mean Squared Error} \\ \hline
No Regularization & 2.5735 \\ \hline
L2 Regularization & 2.5627 \\ \hline
L1 Regularization & 0.4157 \\ \hline
Pruning & 0.1491 \\ \hline
\end{tabular}
\caption{Mean Squared Errors of Different Models for the simple regression task}
\label{tab:mse}
\end{table}

\subsection{Proof of Lemma \ref{main_result_remove _R}}
\label{per_cor_proof_component_bound}

We want to bound the probability of a single component of the output of the DNN when a random matrix \( R \) is included. Specifically, we aim to bound the probability that the \( i \)-th component of the following expression exceeds \( t \):

\begin{proof}
\textbf{Step 1: Apply the Triangle Inequality}

By the triangle inequality, we can remove the activation functions (absolute value) and simplify the terms inside the norm:
\begin{align}
Z = \mathbb{P} \left( |W_3 \circ \lambda \circ (R+S) \circ \lambda \circ W_1 s - W_3 \circ \lambda \circ S \circ \lambda \circ W_1 s|_i > t \right)
\end{align}

 \textbf{Step 2: Factor Out \( W_3 \)}

We can now factor out \( W_3 \), using the inequality \( |(Av)_i| \leq \|A\|_1 \|v\|_{\infty} \), to move \( W_3 \) outside of the norm:
\begin{align}
Z = \mathbb{P} \left( \|W_3\|_1 \cdot \|   (R+S) \circ \lambda \circ W_1 s - S \circ \lambda \circ W_1 s)\|_{\infty} > t \right)
\end{align}

 \textbf{Step 3: Focus on \( R \)}

We now separate the terms involving \( R \) and \( S \), leaving us with:
\begin{align}
Z \leq \mathbb{P} \left(  \|W_3\|_1 \cdot \| (R \circ \lambda \circ W_1 s)\|_{\infty} > t \right)
\end{align}

\textbf{Step 4: Focus on the Random Matrix $R$}
Let $Y = R \circ \lambda \circ W_1 s$. Each element of $Y$ is a weighted sum of independent Gaussian random variables:
\[
Y_i = \sum_j R_{ij} (\lambda \circ W_1 s)_j
\]

The variance of each $Y_i$ is:
\[
\text{Var}(Y_i) = \frac{1}{N} \sum_j (\lambda \circ W_1 s)_j^2 = \frac{\| W_1 s\|_2^2}{N}
\]

\textbf{Step 5: Apply Borell-TIS Inequality}
We can now apply the Borell-TIS inequality to $Y$. Note that each $Y_i$ is a Gaussian random variable with mean 0 and variance $\frac{\| W_1 s\|_2^2}{N}$. Let $s_Y^2 = \frac{\| W_1 s\|_2^2}{N}$.

From the Borell-TIS inequality, for any $t > 0$:
\[
\mathbb{P}\left( \|Y\|_\infty > \sqrt{\frac{2 \log N}{N}} \| W_1 s\|_2 + k \right) \leq 2 \exp\left( -\frac{N k^2}{2\| W_1 s\|_2^2} \right)
\]

\textbf{Step 6: Incorporate Deterministic Matrix \( W_3 \)}

Next, incorporate the effect of the deterministic matrix \( W_3 \). Using the properties of matrix norms, we have:
\[
\|W_3 Y\|_i \leq \|W_3\|_1 \|Y\|_\infty
\]

Therefore, the probability bound becomes:
\begin{align}
Z &\leq \mathbb{P}\left( \|W_3\|_1 \|Y\|_\infty > (\sqrt{\frac{2 \log N}{N}} \| W_1 s\|_2 + k)\|W_3\|_1   \right) \\
  &\leq  2 \exp\left( -\frac{N k^2}{2\| W_1 s\|_2^2}  \right),
\end{align}

with $t=(\sqrt{\frac{2 \log N}{N}} \| W_1 s\|_2 + k)\|W_3\|_1$.

\textbf{Step 7: Final Bound}

Substituting \( k = \frac{ \|W_1 s\|_2}{N^{1.5/4}} \) results in:
\[
Z \leq 2 \exp \left( - \frac{N^{1 - 1.5/2}}{2} \right) = 2 \exp \left( - \frac{N^{1/4}}{2} \right)
\]

\textbf{Conclusion:}
\begin{align}
\label{main_equation_5}
Z = \mathbb{P} \Big( \big| \lambda \circ W_3 \circ \lambda \circ (R + S) \circ \lambda \circ W_1 s 
    - \lambda \circ W_3 \circ \lambda \circ S \circ \lambda \circ W_1 s \big|_i > \\
    \sqrt{\frac{2 \log N}{N}} \| W_1 s\|_2 \|W_3\|_1 +  \frac{\|W_3\|_1 \|W_1 s\|_2}{N^{1.5/4}} \Big) 
    \leq 2 \exp \left( - \frac{N^{1/4}}{2} \right)
\end{align}

\end{proof}

For a DNN with more than three-layer matrices, the proof would be the same. For example, in the case of four-layer matrices, we have:

\begin{proof}
\textbf{Step 1: Express the Problem}
We start with the probability we want to bound, now with four weight layers:

\begin{align}
Z = \mathbb{P} \left(|\lambda \circ W_4 \circ \lambda \circ W_3 \circ \lambda \circ (S + R) \circ \lambda \circ W_1 s - \lambda \circ W_4 \circ \lambda \circ W_3 \circ \lambda \circ S \circ \lambda \circ W_1 s|_i > t \right)
\end{align}

\textbf{Step 2: Use Matrix Norm Properties}
We can bound this using the $\ell_\infty$ norm and properties of matrix norms:

\begin{align}
Z \leq \mathbb{P} \left(\|W_4\|_1 \cdot \|W_3\|_1 \cdot \|R \circ \lambda \circ W_1 s\|_\infty > t \right)
\end{align}

The rest of the proof then proceeds as above. 
\end{proof}

For a more general DNN, we first restate the relevant assumptions:

\textbf{Assumption 4:}

Consider a DNN denoted by $\phi$, and assume that it can be written as:

\begin{equation}
    \phi=\rho \circ \psi_2\circ (R+S)\circ \psi_1,
\end{equation}
where $\psi_1$ and $\psi_2$ are arbitrary functions and $\rho$ is softmax. Furthermore, assume that $\exists C_1$ constant such that for any two arbitrary vectors $v$ and $w$ we have that $\|\psi_2v-\psi_2 w\|_{\infty} \leq C_1 \|\psi_2(v-w)\|_{\infty}$, with $\|\cdot\|_{\infty}$ the max norm of the vector. Take $W=R+S$ and assume that the matrices $R$ and $S$ satisfying the following assumptions:

\subsection*{Assumptions on the matrices $R$ and $S$}
\label{assumptions_2}
  We considered a class of admissible matrices $W$, where $W=R+S$ and $W$, $R$ and $S$ satisfy the following three assumptions. The first  assumption is a condition on $R$: 

\textbf{Assumption 5}:
Assume \(R\) is a \(N\times M\) matrix such that for any vector $v$ we have:

\begin{equation} 
\label{gen_ass_for_R}
\mathbb{P}( \|Rv\|_{\infty} > d_1(N)J(v) ) \leq d_2(N),
\end{equation}
with $d_1(N),d_2(N) \to 0$ as $N \to \infty$ and $J(v)$ some function that depends on $v$ alone. Further, as $N \to \infty$, we have that  $\sigma_{\max}(R) \to \sqrt{\lambda_+}$ a.s.    \\

\begin{ex}
    As mentioned, when $R$ is i.i.d Gaussian one can show, using the Borell-TIS inequality (see Theorem \ref{Borell-TIS Inequality}), that:
\[
\mathbb{P}\left( \|Rv\|_\infty > (\sqrt{\frac{2 \log N}{N}}  + \frac{1}{N^{\frac{3}{8}}})\| v\|_2 \right) \leq 2 \exp\left( -\frac{N^{\frac{1}{4}}}{2} \right)
\]

\end{ex}

We now prove Lemma \ref{main_result_remove _R_general}.

\begin{proof}
\textbf{Step 1: Apply Assumption 4}

Take

\begin{equation} 
Z := \mathbb{P} \left( |\psi_2 \circ (R+S) \circ \psi_1 s - \psi_2 \circ (S) \circ \psi_1 s |_i > t \right)
\end{equation}

By Assumption 4, we have:
\begin{align}
Z \leq \mathbb{P} \left( C_1\|\psi_2 \circ ((R+S) \circ \psi_1 s -  (S) \circ \psi_1 s) \|_{\infty} > t \right)
\end{align}

 \textbf{Step 2: Factor Out \( \psi_2 \)}

We can now factor out \( \psi_2 \), using the inequality \( |(\psi_2v)_i| \leq \|\psi_2\|_1 \|v\|_{\infty} \), to move \( \psi_2 \) outside of the norm:
\begin{align}
Z \leq \mathbb{P} \left(C_1 \|\psi_2\|_1 \cdot \|   (R+S) \circ \psi_1 s - S \circ \psi_1 s)\|_{\infty} > t \right)
\end{align}

 \textbf{Step 3: Focus on \( R \)}

We now separate the terms involving \( R \) and \( S \), leaving us with:
\begin{align}
Z \leq \mathbb{P} \left(  C_1 \|\psi_2\|_1 \cdot \| (R \circ \psi_1 s)\|_{\infty} > t \right)
\end{align}

\textbf{Step 4: Focus on the Random Matrix $R$}
From Assumption 5, for any $t > 0$:
\[
\mathbb{P}\left( \| R \circ \psi_1 s\|_\infty > d_1(N)J(\psi_1 s))) \leq d_2(N) \right)
\]

\textbf{Step 6: Incorporate \( \psi_2 \)}

Next, incorporate the effect of  \( \psi_2 \). Using the properties of matrix norms, we have:

\begin{align}
Z \leq \mathbb{P}( C_1\|\psi_2\|_1 \| R \circ \psi_1 s\|_{\infty}   > C_1 \|\psi_2\|_1 d_1(N)J(\psi_1s)   ) \leq d_2(N) ),
\end{align}

with $t=C_1\|\psi_2\|_1d_1(N)J(\psi_1s)$.

\end{proof}

\subsection*{Borell-TIS Inequality for i.i.d. Gaussian Variables}

\begin{thm}[Borell-TIS Inequality]
\label{Borell-TIS Inequality}
Let \( X_1, X_2, \ldots, X_n \) be i.i.d. centered Gaussian random variables with \( X_i \sim N(0, \sigma^2) \). Set \( s_X^2 := \max_{i=1,\ldots,n} \mathbb{E}(X_i^2) = \sigma^2 \). Then for each \( t > 0 \):

\[
P\left( \max_{i=1, \ldots, n} |X_i| - \mathbb{E}\left[ \max_{i=1, \ldots, n} X_i \right] > t \right) \leq \exp\left( -\frac{t^2}{2s_X^2} \right).
\]

For the absolute value bound:

\[
P\left( |\max_{i=1, \ldots, n} |X_i| - \mathbb{E}\left[ \max_{i=1, \ldots, n} X_i \right]| > t \right) \leq 2 \exp\left( -\frac{t^2}{2s_X^2} \right).
\]

In particular, if \( X_i \sim N(0, \frac{1}{N}) \):

\[
s_X^2 = \frac{1}{N},
\]

and for any \( t > 0 \):

\[
P\left( \max_{i=1, \ldots, n} |X_i| > \sqrt{\frac{2 \log n}{N}} + t \right) \leq 2 \exp\left( -\frac{N t^2}{2} \right).
\]
\end{thm}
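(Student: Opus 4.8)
The statement is the classical Gaussian concentration (Borell--Tsirelson--Ibragimov--Sudakov) inequality specialized to the two $1$-Lipschitz functionals $x\mapsto\max_i x_i$ and $x\mapsto\|x\|_\infty=\max_i|x_i|$, combined with the elementary sub-Gaussian bound on the expectation of a Gaussian maximum. The plan is to isolate the one non-elementary ingredient, prove it (or cite it), and then obtain the three displayed inequalities by rescaling to variance $\sigma^2$ and a short union-bound argument. The core lemma I would establish first is: \emph{if $Z\sim\mathcal N(0,I_m)$ and $f:\mathbb R^m\to\mathbb R$ is $1$-Lipschitz for the Euclidean norm, then $\mathbb P\big(f(Z)-\mathbb E f(Z)>u\big)\le e^{-u^2/2}$ for all $u>0$, and hence $\mathbb P\big(|f(Z)-\mathbb E f(Z)|>u\big)\le 2e^{-u^2/2}$ by applying the one-sided bound to $-f$ as well.}

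For the core lemma I would use the Herbst argument from the Gaussian logarithmic Sobolev inequality. After mollifying $f$ so that $\|\nabla f\|\le 1$ almost everywhere, apply the log-Sobolev inequality $\operatorname{Ent}(g^2)\le 2\,\mathbb E\|\nabla g\|^2$ to $g=e^{\lambda f/2}$ to get $\lambda H'(\lambda)-H(\lambda)\log H(\lambda)\le\tfrac{\lambda^2}{2}H(\lambda)$, where $H(\lambda):=\mathbb E e^{\lambda f(Z)}$; recognize the left-hand side as $\lambda^2\,\tfrac{d}{d\lambda}\big(\lambda^{-1}\log H(\lambda)\big)$, integrate from $0$ using $\lim_{\lambda\to 0}\lambda^{-1}\log H(\lambda)=\mathbb E f(Z)$ to obtain $\mathbb E e^{\lambda(f(Z)-\mathbb E f(Z))}\le e^{\lambda^2/2}$, and conclude with Markov's inequality optimized at $\lambda=u$. (Nothing below depends on this route; Borell's isoperimetric proof, or a direct citation, would serve equally well.)

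Next, rescale. Writing $X=\sigma Z$ with $Z\sim\mathcal N(0,I_n)$ gives $\max_i|X_i|=\sigma\|Z\|_\infty$ and $\max_i X_i=\sigma\max_i Z_i$, and both $z\mapsto\|z\|_\infty$ and $z\mapsto\max_i z_i$ are $1$-Lipschitz since $\big|\,\|z\|_\infty-\|z'\|_\infty\,\big|\le\|z-z'\|_\infty\le\|z-z'\|_2$. Applying the core lemma with $u=t/\sigma$ yields $\mathbb P\big(\max_i|X_i|-\mathbb E[\max_i|X_i|]>t\big)\le e^{-t^2/(2\sigma^2)}$ and its two-sided form with a factor $2$, which are the first two displays with $s_X^2=\sigma^2$; here the honest centering constant is $\mathbb E[\max_i|X_i|]$, which may be replaced by the smaller quantity $\mathbb E[\max_i X_i]$ at the cost of a harmless constant factor — precisely the mechanism used for the last display.

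Finally, the expectation bound and the ``in particular'' claim. For centered jointly Gaussian $Y_1,\dots,Y_m$ with $\operatorname{Var}(Y_j)\le\sigma^2$, Jensen's inequality together with a union over moment generating functions gives $e^{\lambda\,\mathbb E[\max_j Y_j]}\le\sum_j\mathbb E e^{\lambda Y_j}\le m\,e^{\lambda^2\sigma^2/2}$, and optimizing $\lambda=\sqrt{2\log m}/\sigma$ gives $\mathbb E[\max_j Y_j]\le\sigma\sqrt{2\log m}$; with $m=n$ and $\sigma^2=1/N$ this reads $\mathbb E[\max_i X_i]\le\sqrt{2\log n/N}$, and the same for $\max_i(-X_i)$, which has the identical law. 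Since $\max_i|X_i|=\max\big(\max_i X_i,\max_i(-X_i)\big)$, a union bound followed by the rescaled core lemma applied to $\max_i X_i$ and to $\max_i(-X_i)$ gives
\[
\mathbb P\Big(\max_i|X_i|>\sqrt{\tfrac{2\log n}{N}}+t\Big)\le 2\,\mathbb P\Big(\max_i X_i-\mathbb E[\max_i X_i]>t\Big)\le 2\,e^{-Nt^2/2},
\]
which is the last display. The only substantive obstacle is the core lemma: the dimension-free Gaussian concentration of Lipschitz functionals with the sharp constant $1/2$ is not elementary and rests on Gaussian isoperimetry or the log-Sobolev inequality; everything else (rescaling, the $1$-Lipschitz property of $\|\cdot\|_\infty$, the moment-generating-function union bound for $\mathbb E[\max]$, and the union bound over $\pm X_i$) is routine.
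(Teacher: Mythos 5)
The paper does not prove this theorem at all: it is stated in the appendix as a citation of the classical Borell--Tsirelson--Ibragimov--Sudakov inequality and used as a black box in the proof of Lemma~\ref{main_result_remove _R} (Step~5), so there is no ``paper's own proof'' to compare against. Your proposal is a correct and standard derivation of that black box, via the Gaussian log-Sobolev/Herbst route for dimension-free concentration of $1$-Lipschitz functionals, rescaling, the MGF/Jensen bound $\mathbb{E}[\max_i X_i]\le\sigma\sqrt{2\log n}$, and a union bound over $\pm X_i$. The last display, which is the only part the paper actually invokes (to get $\mathbb{P}(\|Rv\|_\infty>(\sqrt{2\log N/N}+1/N^{3/8})\|v\|_2)\le 2e^{-N^{1/4}/2}$ after substituting $t=1/N^{3/8}$, $n=N$), comes out exactly as you argue with the factor $2$ accounted for by the two $\pm$ events.

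One thing worth flagging, which you already noticed but are too charitable about: the first displayed inequality in the theorem as written is centered at $\mathbb{E}[\max_i X_i]$ while the random variable being bounded is $\max_i|X_i|$. Borell--TIS applied to the $1$-Lipschitz functional $\|\cdot\|_\infty$ gives concentration around $\mathbb{E}[\max_i|X_i|]$, which is strictly larger, so the one-sided bound with constant $1$ and this mixed centering does not follow from the concentration lemma alone. The honest statement either centers at $\mathbb{E}[\max_i|X_i|]$, or replaces $\max_i|X_i|$ by $\max_i X_i$, or keeps the mixed centering at the price of a factor $2$ via the union bound $\{\max_i|X_i|>c\}\subseteq\{\max_i X_i>c\}\cup\{\max_i(-X_i)>c\}$ --- which is exactly what you do for the final display, where it is the correct move. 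This is a cosmetic flaw in the paper's transcription of the theorem rather than a flaw in your argument; your proof of the ``in particular'' inequality that the paper uses is sound.
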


\subsection*{Generalized Chernoff Bound}

\begin{thm}[Chernoff Bound]
\label{Chernoff Bound}
 For any random variable \( X \) with moment-generating function \( M_X(t) = \mathbb{E}[e^{tX}] \) and for any \( t > 0 \):

\[
P(X \geq a) = P(e^{tX} \geq e^{ta}) \leq M_X(t) e^{-ta}.
\]

For the absolute value bound:

\[
P(|X| \geq a) \leq M_X(t) e^{-ta} + M_{-X}(t) e^{-ta}.
\]

If the distribution is symmetric, such that \( M_X(t) = M_{-X}(t) \):

\[
P(|X| \geq a) \leq 2M_X(t) e^{-ta}.
\]
\end{thm}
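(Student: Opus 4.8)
The plan is to obtain all three inequalities from a single application of Markov's inequality to an exponential moment. First I would fix $t>0$ and use that $x\mapsto e^{tx}$ is strictly increasing, so the events $\{X\ge a\}$ and $\{e^{tX}\ge e^{ta}\}$ coincide; hence $P(X\ge a)=P(e^{tX}\ge e^{ta})$. Since $e^{tX}\ge 0$, Markov's inequality gives $P(e^{tX}\ge e^{ta})\le \mathbb{E}[e^{tX}]\,e^{-ta}=M_X(t)e^{-ta}$, which is the first assertion. If $M_X(t)=+\infty$ the bound is vacuous, so one may as well assume $M_X(t)<\infty$.

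For the two-sided bound I would decompose $\{|X|\ge a\}=\{X\ge a\}\cup\{X\le -a\}$ and apply subadditivity of $P$. The first term is controlled by the bound just established. For the second, rewrite $\{X\le -a\}=\{-X\ge a\}$ and apply the identical argument to the random variable $-X$, whose moment generating function is $M_{-X}(t)=\mathbb{E}[e^{-tX}]$; this yields $P(X\le -a)\le M_{-X}(t)e^{-ta}$. Adding the two estimates gives $P(|X|\ge a)\le M_X(t)e^{-ta}+M_{-X}(t)e^{-ta}$. Finally, when the law of $X$ is symmetric about $0$, $-X$ has the same distribution as $X$, so $M_{-X}(t)=M_X(t)$ and the bound collapses to $2M_X(t)e^{-ta}$.

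I do not expect any genuine obstacle: the whole argument is the standard exponential-Markov trick, and $t>0$ plays the role of a free parameter one could later optimize over (the statement as written is the unoptimized form). The only two points worth a line of care are the implicit finiteness of $M_X(t)$ at the chosen $t$ — without which the right-hand sides are $+\infty$ and nothing is claimed — and the remark that, for $a>0$, the events $\{X\ge a\}$ and $\{X\le -a\}$ are disjoint, so the union bound used for $\{|X|\ge a\}$ is in fact an equality and loses nothing in the regime of interest.
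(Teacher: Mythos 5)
Your proof is correct and is the standard exponential-Markov argument. The paper itself gives no proof of this theorem — it is stated without proof in the appendix as a well-known result, so there is no paper proof to compare against; your argument is the canonical one that such a statement implicitly invokes, and your two remarks (finiteness of $M_X(t)$ and disjointness of $\{X\ge a\}$, $\{X\le -a\}$ for $a>0$) are both accurate and appropriately flagged as minor.
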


\subsection*{Chernoff Bound for Normal Distribution}

\begin{thm}[Chernoff Bound: normal distribution]
\label{Chernoff_Bound_normal_distribution}
For a random variable \( X \) that is normally distributed with \( X \sim N(0, 1/N) \), the moment-generating function is \( M_X(t) = \mathbb{E}[e^{tX}] = e^{\frac{1}{2N} t^2} \). For any \( t > 0 \):

\[
P(X \geq a) \leq e^{\frac{1}{2N} t^2} e^{-ta}.
\]

By choosing \( t = Na \):

\[
P(X \geq a) \leq e^{-\frac{N a^2}{2}}.
\]

For the absolute value bound:

\[
P(|X| \geq a) \leq 2e^{-\frac{N a^2}{2}}.
\]
\end{thm}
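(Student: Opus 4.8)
The plan is to obtain this statement as a direct specialization of the Generalized Chernoff Bound (Theorem~\ref{Chernoff Bound}) together with an elementary one-variable optimization over the free parameter $t$.

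First I would record the moment-generating function of a centered Gaussian: for $X \sim N(0,1/N)$ one has $M_X(t) = \mathbb{E}[e^{tX}] = e^{t^2/(2N)}$, which follows by completing the square in the Gaussian integral defining $\mathbb{E}[e^{tX}]$. Substituting this into the one-sided bound $P(X \ge a) \le M_X(t)e^{-ta}$ supplied by Theorem~\ref{Chernoff Bound} gives, for every $t>0$,
\[
P(X \ge a) \le \exp\!\Big(\tfrac{t^2}{2N} - ta\Big).
\]

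Next I would minimize the exponent over $t$. Writing $\varphi(t) = t^2/(2N) - ta$, we have $\varphi'(t) = t/N - a$, so the unique stationary point is $t^* = Na$; since the bound is only of interest when $a>0$, this $t^*$ is admissible ($t^*>0$), and $\varphi''(t) = 1/N > 0$ makes it the global minimizer, with $\varphi(t^*) = (Na)^2/(2N) - Na\cdot a = -Na^2/2$. This yields $P(X \ge a) \le e^{-Na^2/2}$. For the two-sided estimate I would use that the Gaussian is symmetric, so $M_X(t) = M_{-X}(t)$, invoke the symmetric case of Theorem~\ref{Chernoff Bound} to get $P(|X|\ge a) \le 2 M_X(t)e^{-ta}$, and repeat the identical minimization at $t^* = Na$, which simply carries the extra factor $2$ through; equivalently one may note directly that $P(|X|\ge a) = P(X\ge a) + P(X \le -a) = 2 P(X\ge a)$.

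The argument is a routine computation with no genuine obstacle; the only points deserving a line of care are that the bound (and the optimal $t^*>0$) is meaningful only for $a>0$, and that the same optimal choice $t^* = Na$ is used in both the one- and two-sided versions, so that the absolute-value bound is exactly twice the one-sided bound.
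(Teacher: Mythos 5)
Your proposal is correct and follows exactly the derivation that the paper embeds directly in the theorem statement (compute the Gaussian MGF, substitute into the generalized Chernoff bound, pick the minimizing $t^*=Na$, double by symmetry for the two-sided bound); the paper gives no separate proof, and your added checks ($\varphi''>0$, admissibility of $t^*$ requiring $a>0$, and the equivalent observation $P(|X|\ge a)=2P(X\ge a)$) are routine confirmations of the same argument.
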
 

\subsection{Training of a DNN with three weight layers using L1 and L2 regularization with varying layer widths}
\label{details_of_C1}
The goal of this subsection is to check how $a(N)=\sqrt{\frac{2 \log N}{N}} \| W_1 s\|_2 \|W_3\|_1 +  \frac{\|W_3\|_1 \|W_1 s\|_2}{N^{1.5/4}}$ depends on $N$ and the training process. From equation \eqref{main_equation_5}, we know that removing the random matrix $R_2$ from the DNN will change the components of the outputs $X$ by less than $a(N)$. We want to see if this change in the output is small in practice. 
We train a DNN with three fully connected weight layers: $\mathbf{W_1}, \mathbf{W_2}, \mathbf{W_3}$. The middle layer matrix $\mathbf{W_2}$ is of variable width $N$, which we vary across different experiments. We aim to analyze how varying $N$ affects both the network's accuracy and certain key norms of the weight matrices. 

We use both L1 and L2 regularization during training, with the regularization strengths kept small ($.0000001$) so that they minimally affect the training process. Additionally, the learning rate is set to a small value of $0.0001$ to ensure that the weight matrices do not change significantly during the training period.

\subsubsection*{Normalization of the Fashion MNIST Dataset}
The Fashion MNIST dataset consists of grayscale images of size $28 \times 28$. To normalize the data, we ensure that the largest element across the entire dataset has an L2 norm of $0.1$. This normalization is essential for controlling the magnitude of the input vector, which ensures that $a(N)$ stays small. 

The normalization procedure involves the following steps:
\begin{enumerate}
    \item Compute the L2 norm of each input vector (each image flattened to a 1D array).
    \item Find the maximum norm among all vectors in the dataset.
    \item Scale all input vectors so that the largest vector has norm $0.1$ .
\end{enumerate}

\subsubsection*{Training procedure}
 The network is trained for 10 epochs, and we keep the learning rate at $0.0001$. The weights are initialized from $N(0,1/N)$, with $N$ the number of column vectors. 

\subsubsection*{Analysis and metrics}

For each value of $N$ (the layer weight matrix $W_2$  has size $N \times N$), we compute two key quantities:
\begin{itemize}
    \item $a(N)$: This is a metric based on the norms of the weight matrices, calculated as:
    \[
    a(N) = \sqrt{\frac{2 \log N}{N}} \| W_1 s \|_2 \| W_3 \|_1 + \frac{\| W_3 \|_1 \| W_1 s \|_2}{N^{1.5 / 4}}
    \]
    where $\|W_3\|_1$ is the L1 norm of the final weight layer and $\|W_1 s\|_2$ is the largest L2 norm of the transformed input data.
    
    \item Test Accuracy: The percentage of correctly classified test samples, which allows us to track how network performance evolves as the width $N$ increases.
\end{itemize}

We analyze the behavior of both $a(N)$ and the test accuracy as functions of the layer weight matrix $W_2$  of size $N \times N$. The values of $N$ that we test for are $[500, 1000, 1500, 2000, 3000, 4000, 5000, 7000, 10000,20000]$. Fig. \ref{a(N)} shows that as we increase the size $W_2$, we have that $a(N)$ goes to zero quickly.

\begin{figure}[h!]
    \centering
    \includegraphics[width=0.8\textwidth]{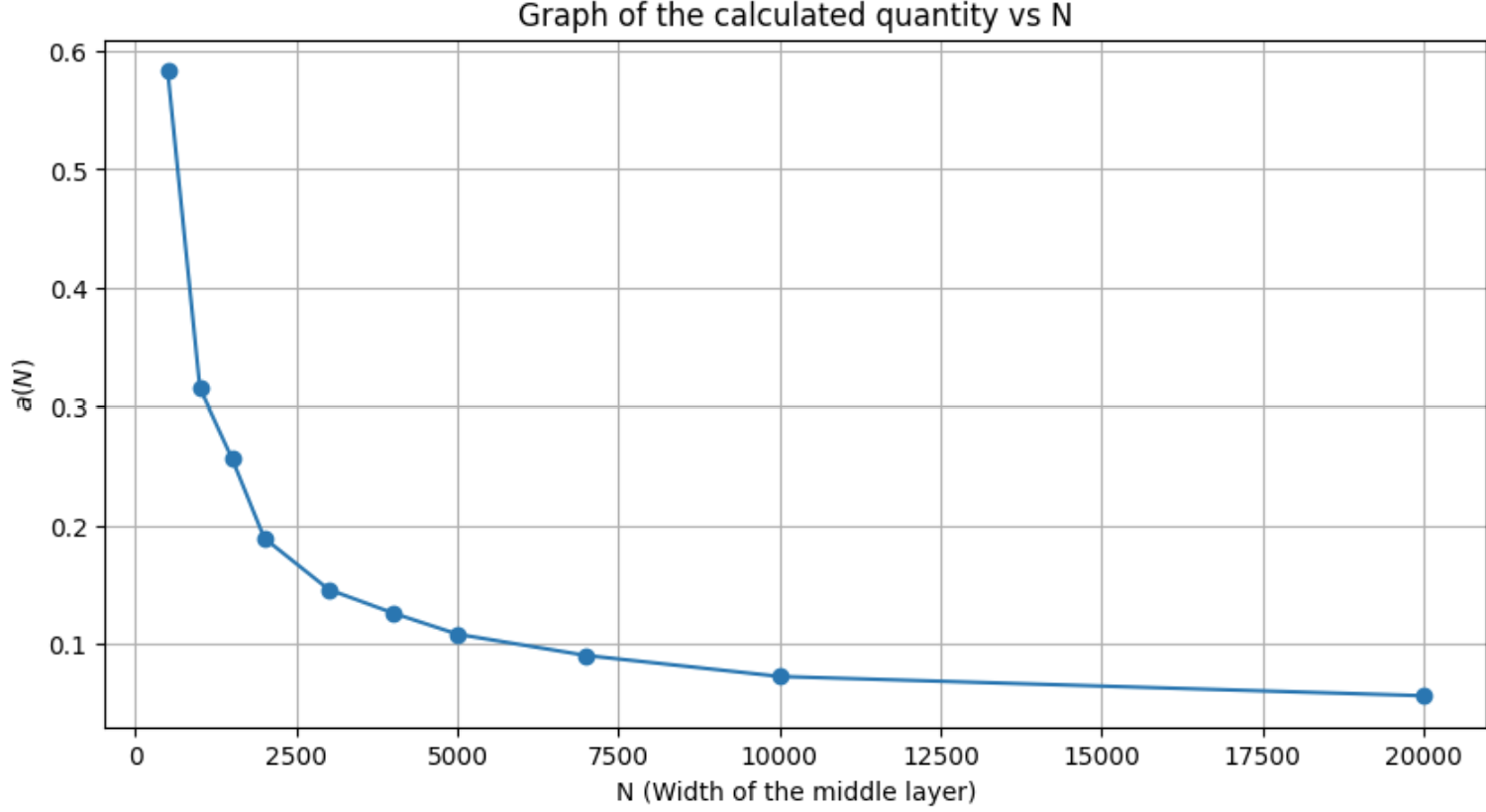}
    \caption{Plot of $a(N)$ vs. the width $N$ of the middle layer $\mathbf{W_2}$. This plot shows how the complexity-related metric $a(N)$ behaves as the width $N$ is increased, reflecting changes in the norms of the weight matrices and their dependence on $N$.}
    \label{a(N)}
\end{figure}

\begin{figure}[h!]
    \centering
    \includegraphics[width=0.8\textwidth]{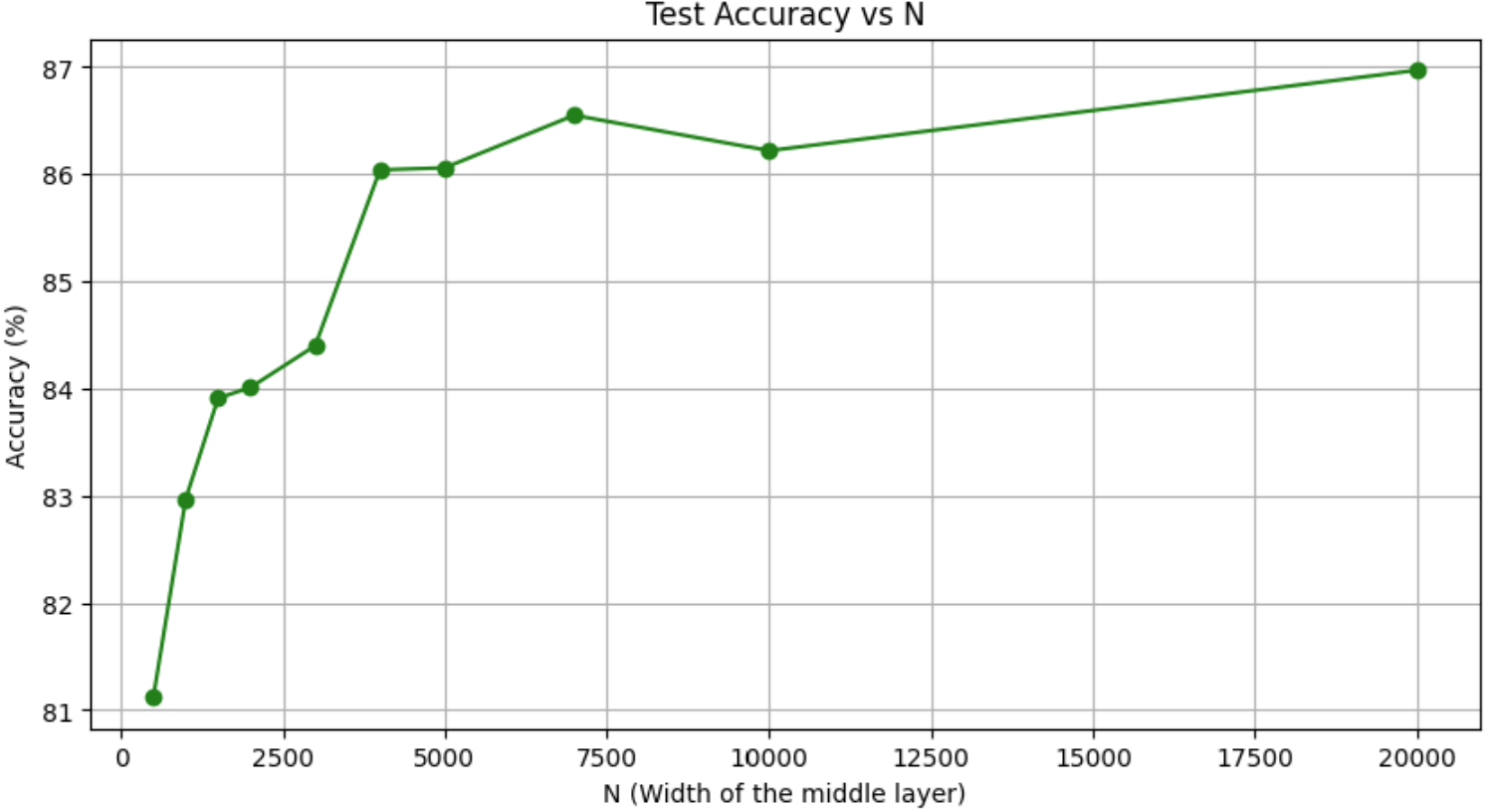}
    \caption{Plot of test accuracy vs. the width $N$ of the middle layer $\mathbf{W_2}$. This plot demonstrates how the performance of the DNN changes with increasing $N$, showing the relationship between network width and generalization ability on the Fashion MNIST dataset.}
\end{figure}

\subsubsection*{Normalization and Initialization of Weight Matrices}
In this experiment, we normalize the input vectors such that the largest input vector has a norm of $10$. The normalization process is identical to that described previously, but instead of scaling the largest vector to $0.1$, we scale it to $1$.

Additionally, the weight matrices for each layer are initialized such that each entry is sampled from a normal distribution with zero mean and variance $1/N^2$, where $N$ is the number of columns in the weight matrix. 

\subsubsection{Analysis of $b(N)$}
Similar to the previous metric $a(N)$, we introduce a new metric $b(N)$, which is computed exactly like $a(N)$, but each occurrence of $N$ is squared in the formula. Specifically, $b(N)$ is defined as:
\[
b(N) = \sqrt{\frac{2 \log N^2}{N^2}} \| W_1 s \|_2 \| W_3 \|_1 + \frac{\| W_3 \|_1 \| W_1 s \|_2}{(N^2)^{1.5 / 4}}
\]
where $\|W_3\|_1$ is the L1 norm of the final weight layer and $\|W_1 s\|_2$ is the largest L2 norm of the transformed input data.

Note, given the above initialization, 
equation \ref{main_equation_5} has $b(N)$ on the LHS rather then $a(N)$.

\subsubsection*{Results and Observations}
We analyze the behavior of both $b(N)$ and the test accuracy as functions of the width $N$ of the middle weight matrix $\mathbf{W_2}$. The values of $N$ tested are $[500, 1000, 1500, 2000, 3000, 4000, 5000, 7000, 10000, 20000]$. Here, the DNN was trained for $20$ epochs with a learning rate of $.001$.

\begin{figure}[h!]
    \centering
    \includegraphics[width=0.8\textwidth]{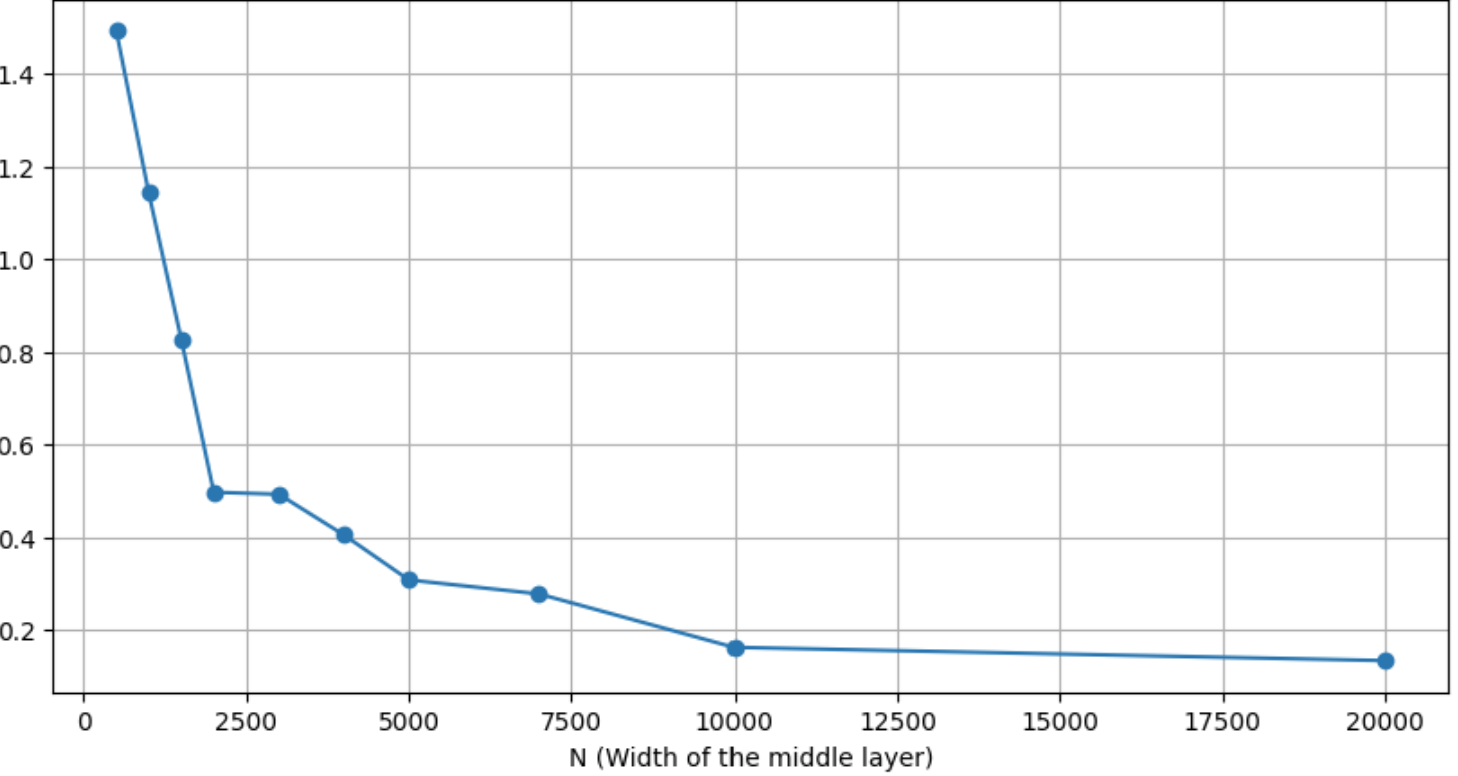}
    \caption{Plot of $b(N)$ vs. the width $N$ of the middle layer $\mathbf{W_2}$. This plot shows how the metric $b(N)$ behaves as the size of the middle layer increases, incorporating a stronger dependence on $N^2$.}
    \label{b(N)}
\end{figure}

\begin{figure}[h!]
    \centering
    \includegraphics[width=0.8\textwidth]{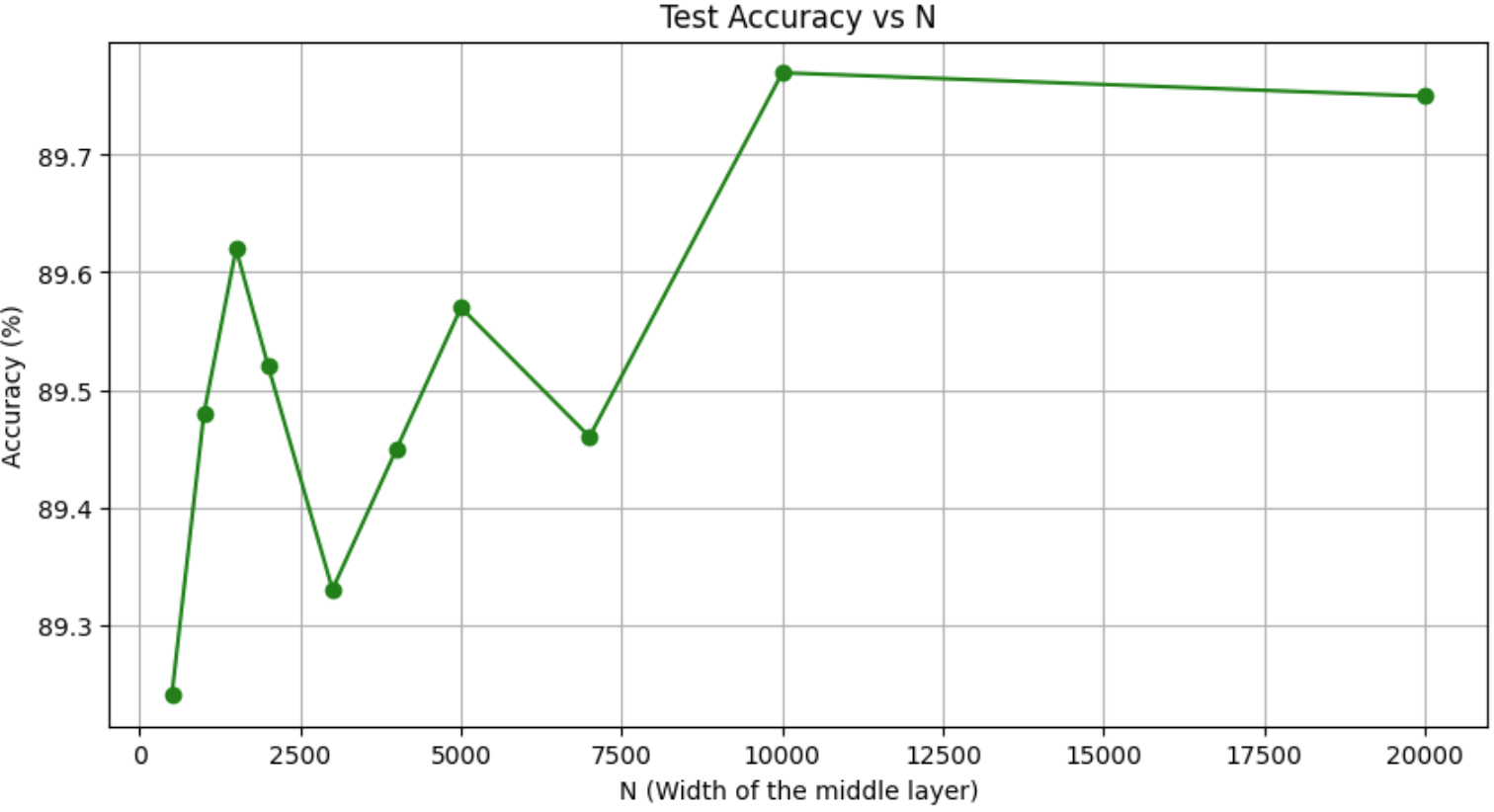}
    \caption{Plot of test accuracy vs. the width $N$ of the middle layer $\mathbf{W_2}$. This plot demonstrates how the network's accuracy changes as $N$ increases, with weight matrices initialized using $1/N^2$.}
    \label{accuracy_vs_N_bN}
\end{figure}

From Fig. \ref{b(N)}, we observe that as we increase the size of $W_2$, the metric $b(N)$ converges toward zero at a faster rate compared to $a(N)$. This suggests that the dependence on \( N^2 \) significantly impacts the network's behavior.

Additionally, in Fig. \ref{accuracy_vs_N_bN}, we see the test accuracy improving as the width $N$ increases, consistent with the expectations from larger model capacity, though the gains diminish at higher values of $N$. The training period for each network was kept constant at 10 epochs with a learning rate of $0.0001$, ensuring consistent comparisons between the different experiments.

\subsection{Some key lemmas: how accuracy and loss are affected when removing the random matrix $R_2$}

Next, we introduce a lemma that describes how the loss, without regularization, and the accuracy of the DNN are affected when we remove the random matrix $R_2$ from the weight layer matrix $W_2$. We define:  

\begin{equation}
\label{loss_noise_det}
L_{\cancel{\text{reg}}}(\alpha(t))=-\frac{1}{|T|}\sum_{s\in T}\log\left(\phi_{C(s)}(s,\alpha(t))\right). 
\end{equation}

thus

\begin{equation}
    L_{\cancel{\text{reg}}} = -\frac{1}{|T|}\sum_{s\in T}\log \left( \frac{e^{X_{C(s)}(\alpha(t), s)}}{\sum_{i=1}^{K} e^{(X_i(\alpha(t), s))}} \right),
    \end{equation}

with $K$ the number of classes in our classification problem.

     We now introduce the \textit{classification confidence}, see \cite{berlyand2021stability}. Take $X(s,\alpha)$ to be the output of the final layer in our DNN before softmax. The classification confidence is defined as follows:
  	
\begin{equation}\label{eq:defdeltaX}
\delta X(s,\alpha):=X_{C(s)}(s,\alpha)-\max_{j\neq C(s)} X_j(s,\alpha).
\end{equation}

  	In other words, 
  	\begin{itemize}
		\item $\delta X(s,\alpha(t))>0\Rightarrow s$ is well-classified by $\phi$.
		\item $\delta X(s,\alpha(t))< 0\Rightarrow s$ is misclassified by $\phi$
			\end{itemize}
			
			For $T'$ the test set we can now define the accuracy of the DNN on $T'$ using the classification confidence,
		\begin{equation}\label{2_class_prob}
		 \text{acc}_{\alpha}(t)=\frac{\#\left(\{s\in T':\delta X(s,\alpha(t))>0\}\right)}{\#T'}.
		\end{equation}

\begin{lem}\label{main_result_remove _R_loosandacc}
Let $\phi$ be a DNN with weight layer matrices $W_1,W_2,W_3$ and $\lambda$ the absolute value activation function: \vspace{-.3cm}
		
		\begin{equation}
			\vspace{-.2cm}
			\phi(\alpha,s)= \rho \circ \lambda \circ W_3  \circ \lambda \circ W_2 \circ \lambda \circ W_1 s, \hspace{.4cm} s\in \R^n. 
		\end{equation}

   Given \eqref{loss_noise_det_1}, let $W_2$ be a $N \times N$ matrix such that   $W_2(t)=R_2(t)+S_2(t)$, with $R_2$ and $S_2$ satisfying Assumptions 2-3.
  
    Suppose we replace the weight layer matrix $W_2$ with the deterministic matrix $S_2$. Then $\exists G_1(N)$ s.t.
    
    \begin{equation}    \label{pruning_equation_1}
\mathbb{P}\bigg(|L_{\cancel{\text{reg}}}(\alpha_{S_2}(t))-L_{\cancel{\text{reg}}}(\alpha_{W_2}(t))| \leq G(N)H(\phi) \bigg)\geq 1-G(N),
   \end{equation}
  with $G_1(N) \to 0$ as $N \to \infty$ and $H(\phi)$ some function depending on $\phi$ with $H(\phi) \to 0$ as $N \to \infty$ if $a(N,s):=  \frac{\| W_3 \|_1 \| W_1 s \|_2}{N^{1.5 / 4}}\to 0$ as $N \to \infty$.

\end{lem}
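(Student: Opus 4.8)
The plan is to treat this lemma as a quantitative corollary of Lemma \ref{main_result_remove _R}: that lemma already controls, with probability $1-2\exp(-N^{1/4}/2)$, how far each coordinate of the pre-softmax output $X(s,\cdot)$ moves when $W_2$ is replaced by $S_2$, so it remains only to show that the (unregularized) cross-entropy is a Lipschitz function of the logit vector and then average over $T$. Concretely, I would write the contribution of a single $s$ to \eqref{loss_noise_det} as $\ell_s(X):=-X_{C(s)}+\log\sum_{k=1}^K e^{X_k}$, a smooth function of $X=X(s,\alpha)$, compute $\partial_k\ell_s(X)=\rho(X)_k-\mathbf{1}_{\{k=C(s)\}}$, and note that $\|\nabla\ell_s(X)\|_1=2\bigl(1-\rho(X)_{C(s)}\bigr)\le 2$ uniformly in $X$. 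By the mean value inequality with $\ell_1$–$\ell_\infty$ duality this gives the global estimate $|\ell_s(X)-\ell_s(X')|\le 2\|X-X'\|_\infty$, hence
\[
\bigl|L_{\cancel{\text{reg}}}(\alpha_{S_2})-L_{\cancel{\text{reg}}}(\alpha_{W_2})\bigr|\le\frac{2}{|T|}\sum_{s\in T}\bigl\|X(s,\alpha_{S_2})-X(s,\alpha_{W_2})\bigr\|_\infty .
\]

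Next I would invoke Lemma \ref{main_result_remove _R}: its proof (see \eqref{main_equation_5}) in fact controls the whole vector, since the single Borell–TIS event $\{\|R\,\lambda(W_1 s)\|_\infty\le \tilde a(N,s)/\|W_3\|_1\}$ with $\tilde a(N,s):=\sqrt{\tfrac{2\log N}{N}}\|W_1 s\|_2\|W_3\|_1+\dfrac{\|W_3\|_1\|W_1 s\|_2}{N^{1.5/4}}$ forces $\|X(s,\alpha_{S_2})-X(s,\alpha_{W_2})\|_\infty\le\tilde a(N,s)$ and has probability $\ge 1-2\exp(-N^{1/4}/2)$ (alternatively, a union bound over the $K$ output coordinates turns the coordinatewise statement of Lemma \ref{main_result_remove _R} into the same thing at the cost of a factor $K$). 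Taking a union bound over the finitely many $s\in T$, with probability at least $1-2|T|\exp(-N^{1/4}/2)$ the bound $\|X(s,\alpha_{S_2})-X(s,\alpha_{W_2})\|_\infty\le\tilde a(N,s)$ holds for all $s\in T$ simultaneously, and combining with the display above yields $\bigl|L_{\cancel{\text{reg}}}(\alpha_{S_2})-L_{\cancel{\text{reg}}}(\alpha_{W_2})\bigr|\le \beta(N):=2\max_{s\in T}\tilde a(N,s)$ on that event.

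To match the stated form, I would then set $H(\phi):=\sqrt{\beta(N)}$ and $G(N):=\max\{\sqrt{\beta(N)},\,2|T|\exp(-N^{1/4}/2)\}$, so that $G(N)H(\phi)\ge\beta(N)$ and the probability is at least $1-2|T|\exp(-N^{1/4}/2)\ge 1-G(N)$; the remaining point is to verify $G(N),H(\phi)\to0$ under the hypothesis $a(N,s)\to0$, which follows because the second summand of $\tilde a(N,s)$ is exactly $a(N,s)$, while $a(N,s)\to0$ forces $\|W_3\|_1\|W_1 s\|_2=o(N^{3/8})$, making the first summand $o(N^{-1/8}\sqrt{\log N})\to0$. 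The analytic content is genuinely light — the engine is Lemma \ref{main_result_remove _R}, already proved — so the only mildly delicate steps are (i) establishing the global Lipschitz constant $2$ for the cross-entropy in the $\ell_\infty$ logit norm (which is what makes the logit control pass to loss control with at worst a benign dependence on $K$), and (ii) the bookkeeping of splitting $\beta(N)=\sqrt{\beta(N)}\cdot\sqrt{\beta(N)}$ to produce the artificial product $G(N)H(\phi)$ while checking that the logarithmic factor $\sqrt{2\log N/N}\,\|W_1 s\|_2\|W_3\|_1$ hidden inside $\tilde a(N,s)$ also vanishes. The same scheme, applied to the classification confidence $\delta X(s,\alpha)$ in place of $\ell_s$, would control the accuracy change, since $\delta X$ is likewise $\ell_\infty$-stable in the logits and only samples with $|\delta X(s,\alpha_{W_2})|\le 2\tilde a(N,s)$ can change their classification.
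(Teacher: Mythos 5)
Your route matches the paper's at the level of strategy (Lemma~\ref{main_result_remove _R} controls the logits, then pass to the cross-entropy, then average over the finite set $T$), but you replace the paper's qualitative argument with a quantitative one at the step where the logit perturbation is transferred to the loss, and this is a genuine strengthening. The paper's proof (Subsection~\ref{chnaging_loss_and_acc}) merely invokes continuity of $\exp$ and $\log$ to assert that $|a(N,s)|\to 0$ forces the per-sample loss difference to zero; this leaves implicit whether the modulus of continuity is uniform in the unknown logit location and hides the dependence on $K$. By instead computing $\partial_k\ell_s(X)=\rho(X)_k-\mathbf{1}_{\{k=C(s)\}}$, observing $\|\nabla\ell_s(X)\|_1=2(1-\rho(X)_{C(s)})\le 2$ globally, and applying $\ell_1$--$\ell_\infty$ duality, you get a clean global Lipschitz constant of $2$ that is independent of where the logits sit and independent of $K$, which is exactly what is needed to pass cleanly from a per-coordinate bound to a loss bound. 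You are also more careful than the paper on two points of bookkeeping: you correctly note that the Borell--TIS event in the proof of Lemma~\ref{main_result_remove _R} bounds the whole vector $\|X(s,\alpha_{S_2})-X(s,\alpha_{W_2})\|_\infty$ at once (rather than only one coordinate), so no extra factor of $K$ is actually needed; and you take a union bound over $s\in T$, which the paper glosses over by asserting the event ``holds for all $s\in T'$'' while keeping the single-sample probability $1-G(N)$. The final splitting $\beta(N)=\sqrt{\beta(N)}\cdot\sqrt{\beta(N)}$ to produce the artificial product $G(N)H(\phi)$, together with your observation that $a(N,s)\to 0$ forces $\|W_3\|_1\|W_1 s\|_2=o(N^{3/8})$ and hence kills the $\sqrt{2\log N/N}\,\|W_1 s\|_2\|W_3\|_1$ term as well, is correct and verifies the convergence claim that the paper leaves implicit. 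In short: same skeleton, but your version is the one that actually compiles.
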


See Subsection \ref{chnaging_loss_and_acc} for a proof of this lemma. See also Example \ref{adding_noise} for a numerical example of this lemma.

\subsection{Proof of Lemma \ref{main_result_remove _R_loosandacc}}
\label{chnaging_loss_and_acc}

Recall, we define:  

\begin{equation}
\label{loss_noise_det}
L_{\cancel{\text{reg}}}(\alpha(t))=-\frac{1}{|T|}\sum_{s\in T}\log\left(\phi_{C(s)}(s,\alpha(t))\right). 
\end{equation}

thus

\begin{equation}
    L_{\cancel{\text{reg}}} = -\frac{1}{|T|}\sum_{s\in T}\log \left( \frac{e^{X_{C(s)}(\alpha(t), s)}}{\sum_{i=1}^{K} e^{(X_i(\alpha(t), s))}} \right),
    \end{equation}

with $K$ the number of classes in our classification problem.

From Lemma \ref{main_result_remove _R}, we have that:

\begin{equation}
\mathbb{P}\bigg((|
    X_i(\alpha_W,s) - 
    X_i(\alpha_S,s)|\leq  a(N,s) \bigg)\geq 1-F(N),    \end{equation}
    with $F(N) \to 0$ as $N \to \infty$.
    
    Thus, for any fixed $s \in T'$, we have that, for big enough $N$, $\exists G(N) \to 0$ as $N\to \infty$ such that: 

    \begin{equation}\label{loss_bound_one_object}
       \mathbb{P}\bigg(| \log \left( \frac{e^{X_{C(s)}(\alpha_S(t), s)}}{\sum_{i=1}^{K} e^{(X_i(\alpha_S(t), s))}} \right)- \log \left( \frac{e^{X_{C(s)}(\alpha_W(t), s)}}{\sum_{i=1}^{K} e^{(X_i(\alpha_W(t), s))}} \right)|\leq G(N) C(s,\phi) \bigg)\geq 1-G(N).
    \end{equation}

    Furthermore, because the exponential and log functions are continuous, we have that if $a(N,s) \to 0$, then $G(N)C(s,\phi) \to 0$. Because that $T'$ is a finite set, and because \eqref{loss_bound_one_object} holds for all $s \in T'$, we have that $\exists H(\phi)$ such that:

\begin{equation}\label{loss_bound_one_object_2}
       \mathbb{P}\bigg(|-\frac{1}{|T|}\sum_{s\in T} \log \left( \frac{e^{X_{C(s)}(\alpha_S(t), s)}}{\sum_{i=1}^{K} e^{(X_i(\alpha_S(t), s))}} \right)+\frac{1}{|T|}\sum_{s\in T} \log \left( \frac{e^{X_{C(s)}(\alpha_S(t), s)}}{\sum_{i=1}^{K} e^{(X_i(\alpha_S(t), s))}} \right)|\leq G(N)H(\phi)\bigg)\geq 1-G(N).
    \end{equation}

    This proves the first part of Lemma \ref{main_result_remove _R_loosandacc} and gives us equation \eqref{pruning_equation_1}.

  \section{Proof of Theorems \ref{main_result_reducing_noise} and \ref{main_result_reducing_noise_general}}.
  \label{proof_main_theorem}

  In this section, we prove Theorem \ref{main_result_reducing_noise}. Using the more general Assumptions 4-7 for Theorem \ref{main_result_reducing_noise_general}, we can use the proof for these results as well. Before proving this theorem, we state some useful RMT results for the type of model given in assumptions 1-3 in Subsection \ref{assumptions}.

  \section{ Some known results on perturbation of matrices}

       Matrix perturbation theory is concerned with understanding how small changes in a matrix can affect its properties, such as eigenvalues, eigenvectors, and singular values. In this section, we state a couple of known results from matrix perturbation theory.
       \subsection{Asymptotics of singular values and singular vectors of deformation matrix}
       \label{bi_unitary_case}
The results in this subsection are taken from \cite{benaych2011eigenvalues}. Given the assumptions 1'-2' on $R$ and $S$ described in Section \ref{assumptions_2}, the authors were able to show that the largest eigenvalues and corresponding eigenvectors of $W=S+R$  are well approximated by the largest eigenvalues and eigenvectors of $S$.

We start by defining the following function:  \begin{equation}
D_{\mu_R}(z) = \left[ 
\int \frac{z}{z^2 - t^2} d\mu_R(t)
\right] \times \left[ c \int \frac{z}{z^2-t^2} d\mu_R(t)  + \frac{1-c}{z}\right]
\end{equation}

for $z>\sqrt{\lambda_+}$, with $\lambda_+$ given by the MP distribution of $R^TR$. Take $D^{-1}_{\mu_R}(\cdot)$ to be its functional inverse.

Set \begin{equation} 
\label{theta_bar}
\bar{\theta}= D_{\mu_R}(\sqrt{\lambda_+})^{-\frac{1}{2}}
\end{equation}

\begin{thm}\label{phase_transition_2}{Theorem for large singular values [Benaych-Georges and Nadakuditi (2012)]}
Take $W=R+S$, with $W, R$ and $S$ all  $N \times M$ matrices satisfying assumptions $1'-2'$. The $r$ largest singular values of $W$, denoted as $\sigma'_i(W)$ for $1 \leq i \leq r$,  exhibit the following behaviour as $N \to \infty$:

\begin{equation}\label{singualr_value_cases}
\sigma'_i(W) \xrightarrow[\text{}]{a.s.} 
\begin{cases}
  D^{-1}_{\mu_R}(\frac{1}{(\sigma_i)^2}) & \sigma_i>\bar{\theta}\\
  \sqrt{\lambda_+} & \sigma_i<\bar{\theta}
\end{cases}
\end{equation}
 
\end{thm}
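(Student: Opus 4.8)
The plan is to follow the argument of Benaych-Georges and Nadakuditi: reduce the singular-value equation for $W=R+S$ to a $2r\times 2r$ determinantal condition, identify the almost-sure limits of the random quantities appearing in it by exploiting the bi-unitary invariance of $R$, and then read off the outlier locations from the resulting decoupled scalar fixed-point equations.

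First I would reduce to a small determinant. Write $S=U\Sigma V^{T}$ with $\Sigma=\operatorname{diag}(\sigma_1,\dots,\sigma_r)$ and $U\in\R^{N\times r}$, $V\in\R^{M\times r}$ having orthonormal columns. The nonzero singular values of $W$ are the positive eigenvalues of the Hermitian dilation $H_W=\left(\begin{smallmatrix}0&W\\ W^{T}&0\end{smallmatrix}\right)=H_R+H_S$, where $H_S$ has rank at most $2r$ and, expressed in the columns of $U$ and $V$, equals $J:=\left(\begin{smallmatrix}0&\Sigma\\ \Sigma&0\end{smallmatrix}\right)$. For $z$ strictly above the largest singular value of $R$ --- which by assumption tends to $\sqrt{\lambda_+}$, so this is satisfied by any candidate outlier once $N$ is large --- the number $z$ is a singular value of $W$ iff $\det\bigl(I+(zI-H_R)^{-1}H_S\bigr)=0$, and by Sylvester's determinant identity this is the vanishing of $\det\bigl(I_{2r}+M_N(z)\,J\bigr)$, where $M_N(z)$ is the $2r\times 2r$ compression of the resolvent $(zI-H_R)^{-1}$ onto the columns of $U$ and $V$. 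Its blocks are, up to a factor $z$, the bilinear forms $u_i^{T}(z^{2}I-RR^{T})^{-1}u_j$ and $v_i^{T}(z^{2}I-R^{T}R)^{-1}v_j$, and the mixed forms $u_i^{T}(z^{2}I-RR^{T})^{-1}Rv_j$.

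Next I would compute the deterministic equivalents. Since $R$ is bi-unitarily invariant, its singular vectors are Haar-distributed and independent of the fixed $u_i,v_j$; concentration of measure on the orthogonal group together with the almost-sure convergence of the empirical distribution of the singular values of $R$ to its limiting law $\mu_R$ would give, uniformly for $z$ in a neighbourhood of any point of $(\sqrt{\lambda_+},\infty)$, that the diagonal blocks of $M_N(z)$ converge almost surely to $z\,a(z)\,I_r$ and $z\,c'(z)\,I_r$ with $a(z)=\int (z^{2}-t^{2})^{-1}\,d\mu_R(t)$ and $c'(z)=c\,a(z)+\frac{1-c}{z^{2}}$, while the mixed block (odd in the singular vectors of $R$) vanishes and the off-diagonal entries within each block vanish. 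Consequently $\det(I_{2r}+M_N(z)J)\to\prod_{i=1}^{r}\bigl(1-\sigma_i^{2}\,z^{2}a(z)c'(z)\bigr)=\prod_{i=1}^{r}\bigl(1-\sigma_i^{2}\,D_{\mu_R}(z)\bigr)$, using $[z\,a(z)][z\,c'(z)]=D_{\mu_R}(z)$ from the definition preceding the theorem. Hence any outlier of $W$ converging to some $z>\sqrt{\lambda_+}$ must solve $D_{\mu_R}(z)=1/\sigma_i^{2}$ for some $i$. Because $D_{\mu_R}$ is continuous and strictly decreasing on $(\sqrt{\lambda_+},+\infty)$ with $D_{\mu_R}(\sqrt{\lambda_+})=\bar{\theta}^{-2}$ by \eqref{theta_bar}, this has the unique solution $z=D^{-1}_{\mu_R}(1/\sigma_i^{2})>\sqrt{\lambda_+}$ exactly when $\sigma_i>\bar{\theta}$, which is the first branch. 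For the sticking branch ($\sigma_i<\bar{\theta}$) I would combine this with Weyl's interlacing inequalities $\sigma_{i+r}(R)\le\sigma'_i(W)\le\sigma_1(R)+\sigma_1(S)$, which keep $\sigma'_i(W)$ bounded and force $\liminf_N\sigma'_i(W)\ge\sqrt{\lambda_+}$; if a subsequence converged to $z_0>\sqrt{\lambda_+}$ the determinantal condition would yield $D_{\mu_R}(z_0)=1/\sigma_i^{2}>\bar{\theta}^{-2}=D_{\mu_R}(\sqrt{\lambda_+})>D_{\mu_R}(z_0)$, a contradiction, so $\sigma'_i(W)\to\sqrt{\lambda_+}$.

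The main obstacle is the second step: establishing the deterministic equivalents for the resolvent bilinear forms with enough uniformity in $z$, and controlling that the relevant $z$ stays bounded away from the spectrum of $R$. This is precisely where the isotropy (bi-unitary invariance) of $R$ and quantitative Haar concentration are used; under the weaker Assumptions \ref{as5}--\ref{as7} appearing elsewhere in the paper this limiting behaviour is postulated rather than derived. The remaining ingredients --- the Hermitian dilation, Sylvester's identity, Weyl's inequalities and the monotonicity of $D_{\mu_R}$ --- are routine.
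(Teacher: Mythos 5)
The paper supplies no proof of this theorem at all: it is quoted as a known result attributed to Benaych-Georges and Nadakuditi, and the surrounding text explicitly says ``the results in this subsection are taken from \cite{benaych2011eigenvalues},'' so there is no in-paper argument to compare your proposal against. What you wrote is, however, an essentially faithful reconstruction of the argument in that reference: the Hermitian dilation $H_W=H_R+H_S$, the Schur/Sylvester reduction to a $2r\times 2r$ determinant $\det(I_{2r}+M_N(z)J)$, the deterministic equivalents for the compressed resolvent coming from bi-unitary invariance and Haar concentration, the algebraic identity $[z\,a(z)][z\,c'(z)]=D_{\mu_R}(z)$ tying the limit to the function defined before the theorem, the strict monotonicity of $D_{\mu_R}$ on $(\sqrt{\lambda_+},\infty)$, and the Weyl-interlacing squeeze for the sticking branch are exactly the ingredients used there.

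Two places where your sketch silently skips work that a complete proof needs to make explicit. First, the almost-sure convergence of the random bilinear forms $u_i^{T}(z^2I-RR^{T})^{-1}u_j$ (and the mixed and $v$-forms) must be locally uniform in $z$ on compact subsets of $(\sqrt{\lambda_+},\infty)$; pointwise concentration is not enough, because you need zeros of the random determinant to converge to zeros of its limit and also need to rule out spurious outliers --- this is typically done with a quantitative Haar-concentration bound plus a normal-families (Montel/Vitali) argument, and one must also verify that $\|R\|\to\sqrt{\lambda_+}$ a.s.\ so that the resolvent domain is eventually nonempty and stable. Second, your treatment of the sticking branch tacitly identifies the $i$-th largest singular value of $W$ with the $i$-th spike $\sigma_i$: that pairing is the conclusion, not a hypothesis. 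The correct route is to count zeros of the limiting determinant $\prod_j\bigl(1-\sigma_j^2 D_{\mu_R}(z)\bigr)$ in $(\sqrt{\lambda_+},\infty)$ --- there are exactly $r_0:=\#\{j:\sigma_j>\bar\theta\}$ of them by monotonicity of $D_{\mu_R}$ --- so for $i>r_0$ the $i$-th largest singular value, squeezed between $\sigma_{i+r}(R)\to\sqrt{\lambda_+}$ from below and the absence of a limiting zero from above, is forced to the bulk edge. With those two points filled in, the argument is complete and matches the cited source.
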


\begin{thm}{Norm of projection of largest singular vectors [Benaych-Georges and Nadakuditi (2012)]}\label{singualr_vector_cases}

Take indices $i_0 \in \{1, . ..,r\}$  such that $\sigma_{i_0} > \Bar{\theta}$. Take $\sigma'_{i_0} =\sigma'_{i_0}(W)$ and let $u',v'$ be left and right unit singular vectors of $W$ associated with the singular value $\sigma'_{i_0}$ and $u,v$ be the corresponding singular vectors of $S$.  Then we have, as $N \to \infty$:

\begin{equation}\label{left_singualr_vectors_cases_2}
|<u',\mathrm{Span}\{u_i \ s.t. \ \sigma_i=\sigma_{i_0}\}>|^2 \xrightarrow[\text{}]{a.s.} \frac{-2\phi_{\mu_R}(\rho)}{\sigma^2_{i_0}D'_{\mu_R}(\rho)}
\end{equation}

and 

\begin{equation}\label{right_singualr_vectors_cases_2}
|<v',\mathrm{Span}\{v_i \ s.t. \ \sigma_i=\sigma_{i_0}\}>|^2 \xrightarrow[\text{}]{a.s.} \frac{-2\phi_{\mu_R}(\rho)}{\sigma^2_{i_0}D'_{\Tilde{\mu_R}}(\rho)}
\end{equation}

Here $\rho=D^{-1}_{\mu_R}(\frac{1}{(\sigma_{i_0})^2})$ and $\Tilde{\mu_R}=c\mu_R+(1+c)\delta_0$.

Further, 

\begin{equation}\label{left_singualr_vectors_cases_3}
|<u',\mathrm{Span}\{u_i \ s.t. \ \sigma_i\neq \sigma_{i_0}\}>|^2 \xrightarrow[\text{}]{a.s.} 0
\end{equation}

\begin{equation}\label{right_singualr_vectors_cases_3}
|<v',\mathrm{Span}\{v_i \ s.t. \ \sigma_i\neq \sigma_{i_0}\}>|^2 \xrightarrow[\text{}]{a.s.} 0
\end{equation}

\end{thm}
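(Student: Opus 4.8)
\section*{Proof proposal for Theorem~\ref{singualr_vector_cases}}

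The plan is not to reprove this from scratch but to invoke the framework of Benaych-Georges and Nadakuditi \cite{benaych2011eigenvalues}, after verifying that the pair $(R,S)$ meets their hypotheses: under Property~1 (statistical isotropy of $R$) and Property~2 (low rank of $S$ with simple singular values), $W=R+S$ is exactly a finite-rank additive perturbation of a bi-unitarily invariant random matrix whose empirical singular value distribution converges to $\mu_R$ with right edge $\sqrt{\lambda_+}$, which is precisely the class for which \cite{benaych2011eigenvalues} establishes the quoted limits, so the statement follows. If a self-contained argument is wanted, I would first Hermitize: for $z>0$, $z$ is a singular value of $W$ iff $z^2$ is an eigenvalue of $W^\ast W$, and since $W^\ast W - R^\ast R = R^\ast S + S^\ast W$ has rank at most $2r$ (its column space lies in $\mathrm{col}(R^\ast U)+\mathrm{col}(V)$), a Woodbury/Schur-complement computation shows that, for $z^2$ outside the spectrum of $R^\ast R$, $z$ is a singular value of $W$ iff a $2r\times 2r$ determinant of the form $\det(I_{2r}-\Delta\,G(z))$ vanishes, where $G(z)$ collects the bilinear forms $\langle u_i,(z^2-RR^\ast)^{-1}u_j\rangle$, $\langle v_i,(z^2-R^\ast R)^{-1}v_j\rangle$ and the mixed terms built from $R(z^2-R^\ast R)^{-1}$. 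The same resolvent identity expresses the coordinates of the singular vectors $u',v'$ of $W$ along the $u_i$- and $v_i$-directions through the null vector of this small matrix.

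Second, I would pass to the limit using isotropy of $R$. Bi-unitary invariance forces the bilinear forms of the resolvent to concentrate on their traces: $\langle u_i,(z^2-RR^\ast)^{-1}u_j\rangle \to \phi_{\mu_R}(z)\,\delta_{ij}$ with $\phi_{\mu_R}(z)=\int \frac{z}{z^2-t^2}\,d\mu_R(t)$, $\langle v_i,(z^2-R^\ast R)^{-1}v_j\rangle \to \phi_{\tilde\mu_R}(z)\,\delta_{ij}$ with $\tilde\mu_R=c\mu_R+(1-c)\delta_0$, and the mixed terms converge accordingly, all almost surely and locally uniformly for $z\in(\sqrt{\lambda_+},\infty)$. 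Substituting these into $\det(I_{2r}-\Delta\,G(z))$ makes it factor over the spike index $i$, so the secular equation collapses to the scalar equation $\sigma_i^2\,D_{\mu_R}(z)=1$; this has a solution $\rho=D^{-1}_{\mu_R}(\sigma_i^{-2})$ in $(\sqrt{\lambda_+},\infty)$ exactly when $\sigma_i>\bar{\theta}=D_{\mu_R}(\sqrt{\lambda_+})^{-1/2}$ (cf.\ \eqref{theta_bar}), and otherwise the would-be outlier sticks to the edge $\sqrt{\lambda_+}$. This reproduces \eqref{singualr_value_cases}, including the ordering $\sigma'_i(W)>\sigma_i$ for supercritical spikes.

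Third, for the overlaps \eqref{left_singualr_vectors_cases_2}--\eqref{right_singualr_vectors_cases_2} I would use a residue argument: $|\langle u',\mathrm{Span}\{u_i:\sigma_i=\sigma_{i_0}\}\rangle|^2$ equals the residue at $z=\rho$ of the appropriate diagonal entry of $(z^2-WW^\ast)^{-1}$ restricted to the spike subspace, which, after the reduction above, is governed by the derivative at $\rho$ of the limiting secular function $z\mapsto \sigma_{i_0}^2\,D_{\mu_R}(z)-1$. The chain rule then produces $D'_{\mu_R}(\rho)$ (resp.\ $D'_{\tilde\mu_R}(\rho)$) in the denominator and $\phi_{\mu_R}(\rho)$ in the numerator, matching the stated formulas; the orthogonality relations \eqref{left_singualr_vectors_cases_3}--\eqref{right_singualr_vectors_cases_3} are immediate because in the limit the $2r\times 2r$ secular matrix is diagonal, so distinct spikes decouple and $u'$ carries no component along $u_j$ with $\sigma_j\neq\sigma_{i_0}$.

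The main obstacle is the second step: establishing the almost-sure, locally uniform concentration of the bilinear forms of the resolvent of $R$ near the edge. Under exact bi-unitary (Haar) invariance this is a Weingarten-calculus computation plus a rank/continuity argument, but if one only assumes i.i.d.\ entries with variance $1/N$ (as in Assumption~\ref{as5}) it instead requires an anisotropic local law for $R^\ast R$. This is precisely why the paper states Assumptions~\ref{as5}--\ref{as7} (equivalently Properties~1--2) as an \emph{input}, citing the spiked-model literature, rather than deriving them; for the present theorem it suffices to take that input as given and quote \cite{benaych2011eigenvalues}.
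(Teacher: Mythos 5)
Your proposal is correct and matches the paper's treatment: the paper states Theorem~\ref{singualr_vector_cases} as an imported result from \cite{benaych2011eigenvalues} (the subsection opens with ``The results in this subsection are taken from \cite{benaych2011eigenvalues}'') and provides no proof, and you rightly identify that the only work to be done is to check that Properties~1--2 place $(R,S)$ in the finite-rank-perturbation-of-a-bi-unitarily-invariant-matrix setting that Benaych-Georges and Nadakuditi treat. Your additional Hermitization / Schur-complement / residue sketch is a faithful summary of how BGN actually prove it, and you correctly flag the genuine technical content (locally-uniform almost-sure concentration of the resolvent bilinear forms, which under mere i.i.d.\ entries requires an anisotropic local law rather than just Weingarten calculus); one small remark: you silently write $\tilde\mu_R=c\mu_R+(1-c)\delta_0$, which is the correct BGN normalization, whereas the paper's statement has $(1+c)\delta_0$ — evidently a typo, since otherwise $\tilde\mu_R$ is not a probability measure.
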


\begin{ex}\label{simple_case}
Take $S=\sum_{i=1}^r\sigma_i u_i v_i^T$ to be a $N \times N$ deterministic matrix, with $\sigma_i$ the singular values and $v_i$ and $u_i$ the singular vectors of $S$. Take $R$ to be a $N \times N$ random matrix with real i.i.d components taken from the normal distribution $N(0,\frac{1}{N})$. For $W=R+S$ we have:

\begin{thm}(Theorem for large singular values for Example \ref{simple_case})
\label{singualr_values_simple_case}
The $r$ largest singular values of $W$, denoted  $\sigma'_i(W)$ for $1 \leq i \leq r$,  exhibit the following behaviour as $N \to \infty$:
\[
      \sigma'_i(W) \xrightarrow[\text{}]{a.s.} 
\begin{cases}
  \frac{1+\sigma_i^2}{\sigma_i} & \sigma_i>1\\
  2 & \sigma_i<1
\end{cases}
\]
\end{thm}

\begin{thm}(Theorem for large singular vectors  for Example \ref{simple_case})
\label{singualr_vectors_simple_case}
Assuming that the $r$ largest singular values of $W$ have multiplicity $1$, then the right and left singular vectors $u'_i, v'_i$ of $W$ corresponding with the $r$ largest singular values $\sigma'_i(W)$  exhibits the following behaviour as $N \to \infty$:
 \[
      |<v_i,v'_i>|^2, |<u_i,u'_i>|^2  \xrightarrow[\text{}]{a.s.}
\begin{cases}
  (1-\frac{1}{\sigma^2_i}) & \sigma_i>1\\
  0 & \sigma_i<1
\end{cases}
\]  
\end{thm}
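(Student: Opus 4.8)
The plan is to derive Theorem~\ref{singualr_vectors_simple_case} as the $c=1$ specialization of the Benaych-Georges--Nadakuditi results already quoted (Theorems~\ref{phase_transition_2} and~\ref{singualr_vector_cases}), checking that all the abstract transforms collapse to elementary functions in this case. First I would identify the relevant spectral measure: since $R$ is a square $N\times N$ matrix with i.i.d.\ $\mathcal{N}(0,1/N)$ entries, the Wishart matrix $\frac1N R^T R$ has an ESD converging to the Marchenko--Pastur law with $\sigma^2=1$ and $c=1$, so $\lambda_+=4$, and (by the change of variables $\lambda=t^2$) the limiting \emph{singular value} distribution $\mu_R$ of $R$ is the quarter-circle law on $[0,2]$ with density $\frac1\pi\sqrt{4-t^2}$. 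Its symmetrization is the standard semicircle law, so the transform forming the first bracket in the definition of $D_{\mu_R}$, namely $\phi_{\mu_R}(z):=\int \frac{z}{z^2-t^2}\,d\mu_R(t)$, equals the Stieltjes transform of the semicircle: $\phi_{\mu_R}(z)=G_{\mathrm{sc}}(z)=\tfrac12\bigl(z-\sqrt{z^2-4}\bigr)$ for $z>2$ (the branch with $G_{\mathrm{sc}}(z)\sim 1/z$).

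Next I would collapse $D_{\mu_R}$ in the square case. Because $c=1$ the term $\frac{1-c}{z}$ vanishes and $D_{\mu_R}(z)=\phi_{\mu_R}(z)^2$; evaluating at $z=\sqrt{\lambda_+}=2$ gives $D_{\mu_R}(2)=G_{\mathrm{sc}}(2)^2=1$, so the threshold of Theorem~\ref{singualr_vector_cases} is $\bar\theta=D_{\mu_R}(2)^{-1/2}=1$, exactly where the stated phase transition sits. For $\sigma_i>1$ I would invert $D_{\mu_R}(z)=1/\sigma_i^2$, i.e.\ solve $G_{\mathrm{sc}}(z)=1/\sigma_i$: squaring $\sqrt{z^2-4}=z-2/\sigma_i$ yields $z=\rho_i:=\sigma_i+\sigma_i^{-1}$, which simultaneously reproduces $\sigma_i'(W)\to\frac{1+\sigma_i^2}{\sigma_i}$ of Theorem~\ref{singualr_values_simple_case}. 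Along the way I record the two identities $\rho_i^2-4=(\sigma_i-\sigma_i^{-1})^2$ and $\phi_{\mu_R}(\rho_i)=\sigma_i^{-1}$ that drive the final simplification. Strict monotonicity of $\sigma\mapsto\sigma+\sigma^{-1}$ on $(1,\infty)$ shows the multiplicity-one hypothesis on $W$'s top singular values is equivalent to multiplicity one for the corresponding $\sigma_i(S)$, so the spans in Theorem~\ref{singualr_vector_cases} reduce to the single lines $\mathbb{R}u_i$, $\mathbb{R}v_i$, and the overlaps in the statement are the relevant projections.

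For the eigenvector overlaps I would substitute into the formula of Theorem~\ref{singualr_vector_cases}. With $c=1$ we have $D_{\mu_R}'(z)=2\phi_{\mu_R}(z)\phi_{\mu_R}'(z)$, hence $\dfrac{-2\phi_{\mu_R}(\rho_i)}{\sigma_i^2 D_{\mu_R}'(\rho_i)}=\dfrac{-1}{\sigma_i^2\,\phi_{\mu_R}'(\rho_i)}$. Differentiating, $\phi_{\mu_R}'(z)=-\phi_{\mu_R}(z)/\sqrt{z^2-4}$, so at $z=\rho_i$, using the two identities above, $\phi_{\mu_R}'(\rho_i)=-\sigma_i^{-1}/(\sigma_i-\sigma_i^{-1})=-1/(\sigma_i^2-1)$, and therefore the left overlap converges a.s.\ to $\dfrac{-1}{\sigma_i^2}\cdot\bigl(-(\sigma_i^2-1)\bigr)=1-\sigma_i^{-2}$. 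Because the square Gaussian ensemble is invariant under transposition (and the measure $\tilde\mu_R$ appearing in the right-vector formula reduces to $\mu_R$ when $c=1$), the computation for $v_i'$ is word-for-word the same, giving the identical limit $1-\sigma_i^{-2}$. For $\sigma_i<1=\bar\theta$ the subcritical regime of the Benaych-Georges--Nadakuditi theory (the $\sigma_i<\bar\theta$ cases of \eqref{left_singualr_vectors_cases}--\eqref{right_singualr_vectors_cases}) gives overlap $\to 0$ directly, covering the remaining cases.

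I expect the only real difficulty to be bookkeeping rather than anything conceptual: pinning down $\mu_R$ and the correct branch of $\sqrt{z^2-4}$ on $(2,\infty)$ in $G_{\mathrm{sc}}$, ensuring the $c=1$ simplifications (vanishing of $\frac{1-c}{z}$ in $D_{\mu_R}$ and of the $\delta_0$ mass in $\tilde\mu_R$) are applied consistently in both the singular-value and singular-vector formulas, and verifying the algebraic identity $\rho_i^2-4=(\sigma_i-\sigma_i^{-1})^2$ that makes the final expression collapse. No new probabilistic estimate is required beyond what Theorems~\ref{phase_transition_2}--\ref{singualr_vector_cases} already supply.
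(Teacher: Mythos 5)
Your derivation is correct, and it usefully supplies a proof that the paper itself does not give: in the text this statement is simply quoted as a known result from Benaych-Georges and Nadakuditi (and the references \cite{baik2005phase, benaych2011eigenvalues}), so there is no internal proof to compare against. Your specialization of Theorems~\ref{phase_transition_2} and~\ref{singualr_vector_cases} to the square Gaussian case is the natural route and all the steps check out: the identification of $\mu_R$ with the quarter-circle law on $[0,2]$, hence $\phi_{\mu_R}=G_{\mathrm{sc}}$; the collapse of $D_{\mu_R}$ to $G_{\mathrm{sc}}^2$ when $c=1$ and the resulting threshold $\bar\theta=1$; the inversion $\rho_i=\sigma_i+\sigma_i^{-1}$ recovering Theorem~\ref{singualr_values_simple_case}; the identity $\rho_i^2-4=(\sigma_i-\sigma_i^{-1})^2$; and the derivative computation $\phi_{\mu_R}'(\rho_i)=-1/(\sigma_i^2-1)$ yielding the overlap $1-\sigma_i^{-2}$. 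You also correctly read the paper's $\tilde\mu_R=c\mu_R+(1+c)\delta_0$ as a typo for $(1-c)\delta_0$ (otherwise it is not a probability measure), which is what makes the left- and right-vector formulas coincide at $c=1$, and you correctly note that strict monotonicity of $\sigma\mapsto\sigma+\sigma^{-1}$ on $(1,\infty)$ lets the multiplicity-one hypothesis on $W$ transfer to $S$ so that the span formulas reduce to rank-one projections. One small point worth flagging if this is written up: $\phi_{\mu_R}$ is used but never defined in the paper, so you should state explicitly that $\phi_{\mu_R}(z):=\int\frac{z}{z^2-t^2}\,d\mu_R(t)$ is the first bracket in the definition of $D_{\mu_R}$, as you implicitly assume.
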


\end{ex}

  \subsection{Asymptotic behavior of singular values and vectors of the deformed matrix model given in Subsection \ref{assumptions}}

The following results can be found in \cite{baik2005phase, benaych2011eigenvalues}. See \cite{couillet2022random} for other similar results. 
   
\begin{ex}\label{simple_case}
Take $S=\sum_{i=1}^r\sigma_i u_i v_i^T$ to be a $N \times N$ deterministic matrix, with $\sigma_i$ the singular values and $v_i$ and $u_i$ the singular vectors of $S$. Take $R$ to be a $N \times N$ random matrix with real i.i.d. components taken from normal distribution $N(0,\frac{1}{N})$. For $W=R+S$ we have:

\begin{thm}(Theorem for large singular values for Example \ref{simple_case})
\label{singualr_values_simple_case}
The $r$ largest singular values of $W$, denoted  $\sigma'_i(W)$ for $1 \leq i \leq r$,  exhibit the following behaviour as $N \to \infty$:
\[
      \sigma'_i(W) \xrightarrow[\text{}]{a.s.} 
\begin{cases}
  \frac{1+\sigma_i^2}{\sigma_i} & \sigma_i>1\\
  2 & \sigma_i<1
\end{cases}
\]
\end{thm}

\end{ex}

\begin{remark}
\label{a.s_CIL}
   We say that $X_n \to X$ in probability if $\forall \epsilon$ $\mathbb{P}(|X_n-X|>\epsilon) \to 0$ as $n \to \infty$. One can show that $X_n \to X$ a.s. implies that $X_n \to X$ in law. Thus for Theorem \ref{singualr_values_simple_case}  $\forall \epsilon$ we have:

   \begin{equation}
   \label{singualr_value_approx}
   \mathbb{P}(|\sigma'_i(W) -\frac{1+\sigma_i^2}{\sigma_i}|>\epsilon)<B_N(\epsilon).
   \end{equation}

\end{remark}

In our case, we want the variance of $R(t)$ to depend on $t$. That is, The matrix $W_b(t)$ is expressed as: 
\begin{equation}
    W_b(t) = R_b(t) + S_b(t), \quad (\textit{deformed matrix})
\end{equation}
where $R_b(t)$ is an $N \times M$ matrix with i.i.d. entries drawn from $N(0,\frac{C(t)}{N})$, with $C(t) \leq 1$ representing the training time-dependent variance, and $S_b(t)$ is a deterministic matrix.

\begin{lem}[Lemma for large singular values with training time-dependent variance]
\label{theorem_time_dependent_variance}
Consider the matrix $W_b(t) = R_b(t) + S_b(t)$, where $R_b(t)$ is an $N \times M$ random matrix with i.i.d. entries drawn from $N(0,\frac{g(t)}{N})$ and $S_b(t)$ is a deterministic matrix. The $r$ largest singular values of $W_b(t)$, denoted $\sigma'_i(W_b(t))$ for $1 \leq i \leq r$, exhibit the following behaviour as $N \to \infty$:
\[
\sigma'_i(W_b(t)) \xrightarrow[\text{}]{a.s.} \sqrt{g(t)} \times
\begin{cases}
\frac{1+(\frac{\sigma_i}{\sqrt{g(t)}})^2}{\frac{\sigma_i}{\sqrt{g(t)}}} & \frac{\sigma_i}{\sqrt{g(t)}} > 1\\
2 & \frac{\sigma_i}{\sqrt{g(t)}} < 1
\end{cases}
\]
\end{lem}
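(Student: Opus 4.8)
The plan is to derive Lemma \ref{theorem_time_dependent_variance} from the known fixed-variance result (Theorem \ref{singualr_values_simple_case}) by a scaling argument. The key observation is that the statement is invariant under the substitution $R_b(t) = \sqrt{g(t)}\,\tilde R$, where $\tilde R$ is an $N\times M$ random matrix with i.i.d. $N(0,\tfrac1N)$ entries. First I would factor out $\sqrt{g(t)}$:
\[
W_b(t) = R_b(t) + S_b(t) = \sqrt{g(t)}\left(\tilde R + \tfrac{1}{\sqrt{g(t)}} S_b(t)\right) =: \sqrt{g(t)}\,\tilde W,
\]
so that $\tilde W = \tilde R + \tilde S$ with $\tilde S := \tfrac{1}{\sqrt{g(t)}}S_b(t)$ a deterministic matrix whose singular values are $\tilde\sigma_i := \sigma_i/\sqrt{g(t)}$ and whose singular vectors coincide with those of $S_b(t)$.

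Next I would apply Theorem \ref{singualr_values_simple_case} to $\tilde W = \tilde R + \tilde S$ (noting $\tilde R$ has exactly the i.i.d.\ $N(0,\tfrac1N)$ law required there), obtaining, as $N\to\infty$,
\[
\sigma'_i(\tilde W) \xrightarrow{a.s.}
\begin{cases}
\dfrac{1+\tilde\sigma_i^2}{\tilde\sigma_i} & \tilde\sigma_i > 1\\[2mm]
2 & \tilde\sigma_i < 1.
\end{cases}
\]
Then, since multiplying a matrix by the positive scalar $\sqrt{g(t)}$ multiplies every singular value by $\sqrt{g(t)}$, we have $\sigma'_i(W_b(t)) = \sqrt{g(t)}\,\sigma'_i(\tilde W)$. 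Substituting $\tilde\sigma_i = \sigma_i/\sqrt{g(t)}$ back in and using the almost-sure continuous-mapping/convergence facts (a.s.\ convergence is preserved under multiplication by a constant, and $\tilde\sigma_i > 1 \iff \sigma_i/\sqrt{g(t)} > 1$) yields precisely the claimed limit
\[
\sigma'_i(W_b(t)) \xrightarrow{a.s.} \sqrt{g(t)}\times
\begin{cases}
\dfrac{1+(\sigma_i/\sqrt{g(t)})^2}{\sigma_i/\sqrt{g(t)}} & \sigma_i/\sqrt{g(t)} > 1\\[2mm]
2 & \sigma_i/\sqrt{g(t)} < 1.
\end{cases}
\]

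The argument is essentially bookkeeping, so there is no serious obstacle; the one point requiring a little care is the hypothesis of Theorem \ref{singualr_values_simple_case}, which is stated for a \emph{constant} ($N$-independent) deterministic matrix, whereas here $\tilde S$ depends on $t$ (and, through $g(t)$, could in principle depend on $N$ if one let $g$ vary with $N$). I would handle this by invoking Assumption \ref{as3}: $r$ is fixed as $N\to\infty$ and the $\tilde\sigma_i$ stay bounded away from the transition threshold, so for each fixed $t$ the hypotheses of Theorem \ref{singualr_values_simple_case} apply verbatim to $\tilde W$ with $\tilde S$ in the role of the deterministic spike. One should also remark that the boundary case $\sigma_i/\sqrt{g(t)} = 1$ is excluded, exactly as in the original theorem. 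An analogous scaling immediately gives the companion statement for singular vectors (Theorem \ref{singualr_vectors_simple_case}), since singular vectors are unchanged under scaling by $\sqrt{g(t)}$, though this is not needed for the lemma as stated.
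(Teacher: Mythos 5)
Your proof is correct and follows essentially the same rescaling argument as the paper: factor out $\sqrt{g(t)}$ so that the rescaled noise matrix $\tilde R$ has i.i.d.\ $N(0,1/N)$ entries, apply the fixed-variance result (Theorem \ref{singualr_values_simple_case}) to $\tilde W=\tilde R+\tilde S$, and multiply the resulting singular values back by $\sqrt{g(t)}$. Your bookkeeping is in fact cleaner than the paper's, which has a typo in the rescaled deterministic part (it writes $S'=\tfrac{1}{g(t)}S_b(t)$ where $\tfrac{1}{\sqrt{g(t)}}S_b(t)$ is intended, as your $\tilde S$ correctly has it), and your remark about excluding the boundary case $\sigma_i/\sqrt{g(t)}=1$ is a sensible caveat the paper omits.
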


\begin{proof}
Define $W' = R' + S'$ where $R' = \frac{1}{\sqrt{g(t)}}R_b(t)$ and $S' = \frac{1}{g(t)}S_b(t)$. The entries of $R'$ are now drawn from $N(0,\frac{1}{N})$, matching the classical case.

For $W'$, the classical theorem gives the behavior of its largest singular values. Specifically, if $\sigma'_i$ are the singular values of $S'$, then the singular values of $W'$, denoted as $\sigma'_i(W')$, behave as:
\[
\sigma'_i(W') \xrightarrow[\text{}]{a.s.}
\begin{cases}
\frac{1+\sigma'^2_i}{\sigma'_i} & \sigma'_i > 1\\
2 & \sigma'_i < 1
\end{cases}
\]

Since $\sigma'_i = \frac{\sigma_i}{\sqrt{g(t)}}$, the theorem for $W'$ becomes:
\[
\sigma'_i(W') \xrightarrow[\text{}]{a.s.}
\sqrt{g(t)} \times\begin{cases}
\frac{1+(\frac{\sigma_i}{\sqrt{g(t)}})^2}{\frac{\sigma_i}{\sqrt{g(t)}}} & \frac{\sigma_i}{\sqrt{g(t)}} > 1\\
2 & \frac{\sigma_i}{\sqrt{g(t)}} < 1
\end{cases}
\]

To revert to the original matrix $W_b(t)$, we multiply the singular values of $W'$ by $\sqrt{g(t)}$. This gives the asymptotic behavior of the singular values of $W_b(t)$ as stated in the lemma.
\end{proof}

\subsection{Proof of Theorems \ref{main_result_reducing_noise}, \ref{theorem_reduction_of_loss}, \ref{main_result_reducing_noise_general} and \ref{theorem_reduction_of_loss_general}.}
\label{main_theorem_proof}
\begin{proof}

We will prove the main result by showing that if the random component $R_2(t)$ of the weight matrix $W_2(t)$ is non-zero at a local minimum, it contradicts the assumption of a local minimum.

\subsection*{1. Notation and Assumptions:}
\begin{itemize}
    \item We assume that $\alpha^*$ is a local minimum of the loss function $L(\alpha(t))$ defined in Eq. (\ref{loss_noise_det}).
    \item We assume that $\|R_2(\alpha^*)\|_2 \neq 0$, which implies the random component $R_2(\alpha^*)$ is not zero.
\end{itemize}

\subsection*{2. Constructing a DNN at the local minimum:}
\begin{itemize}
    \item We initialize a DNN with parameters $\alpha^*$ and a loss function $\Bar{L}(\alpha(a))$ such that $\Bar{L}(\alpha(0)) = L(\alpha^*)$. That is, we create a new training time, denoted with the variable $a$, such that when $a=0$, the DNN parameters are those at the critical point $\alpha^*$.  This is achievable as the loss function is continuous.
\end{itemize}

\subsection*{3. Introducing a controlled random noise:}
\begin{itemize}
    \item We define a random matrix $R(a)$ with i.i.d. entries drawn from $N(0, \frac{g(a)}{N })$. We will choose $g(a)$ to vanish as $a$ approaches infinity.
\end{itemize}

\subsection*{4. Key Inequality and its Consequence:}
\begin{itemize}
    \item We claim that there exists a constant $g(a)$ such that $g(a) \rightarrow 0$ as $a \rightarrow \infty$ and for all $a>0$ and $N > N_0$ (for some $N_0$) we have that $\exists D(N)$ such that:
    \begin{equation}        \mathbb{P}\bigg(L(\alpha^*) \geq \Bar{L}(\alpha(a))\bigg)>1-D(N),
    \end{equation}
    with $D(N) \to 0$ as $N \to \infty$.

\end{itemize}

\subsection*{5. Proof of the Key Inequality:}
\begin{itemize}
    \item We will utilize the following properties:
    \begin{itemize}
        \item Lemma \ref{main_result_remove _R_loosandacc}: This lemma states that removing the random component $R_2$ from the weight matrix $W_2$ has a minimal impact on the loss function with high probability. That is based on Lemma \ref{main_result_remove _R_loosandacc}, we know that:
        \begin{equation}    \label{pruning_equation_1}
\mathbb{P}\bigg(|L_{\cancel{\text{reg}}}(\alpha_{S_2}(a))-L_{\cancel{\text{reg}}}(\alpha_{W_2}(a))| \leq G(N)H(\phi) \bigg)\geq 1-G(N),
   \end{equation}
where $G(N) \rightarrow 0$ as $N \rightarrow \infty$ and $\alpha_{S_2}(t)$ denotes the parameters of the DNN with the deterministic weight matrix $S_2(t)$.
        \item Frobenius norm: \begin{equation}
            \|A\|_F = \sqrt{\sum_{i=1}^{N} \sum_{j=1}^{N} A_{ij}^2}=\sqrt{\sigma_1^2 + \sigma_2^2 + \ldots + \sigma_N^2}.    
        \end{equation} 
        \item The loss:
\begin{equation}
\label{loss_noise_det}
L(\alpha(a))=-\frac{1}{|T|}\sum_{s\in T}\log\left(\phi_{i(s)}(s,\alpha(a))\right) + \sum_{i=1}^L\mu\|W_i(a)\|_F^2,
\end{equation}
\item Lemma \ref{theorem_time_dependent_variance}. We are assuming that for the non-zero singular values of $S$ we have, $\sigma_i(a)>1$. Because $0\leq g(a) \leq 1$, we have, by Lemma \ref{theorem_time_dependent_variance}, that:

\begin{equation}
    \sigma'_i(W') \xrightarrow[\text{}]{a.s.}
g(a)\frac{1+(\frac{\sigma_i}{\sqrt{g(a)}})^2}{\frac{\sigma_i}{\sqrt{g(a)}}} . 
\end{equation}
That is, $\exists D_2(N,\epsilon)$ such that $\forall \epsilon$:
\begin{equation}
\label{change_in_SV}
    \mathbb{P}\bigg(| \sigma'_i(W')-C(a)\frac{1+(\frac{\sigma_i}{\sqrt{C(a)}})^2}{\frac{\sigma_i}{\sqrt{C(a)}}} |>\epsilon \bigg)\leq D_2(N,\epsilon), 
\end{equation}
with $D_2(N,\epsilon) \to 0$ as $N \to \infty$. 
    \end{itemize}
    
\item 
Take $\alpha^*$ to be a local min of \eqref{loss_noise_det}, and suppose $\|R(\alpha^*)\|_2 \not = 0$. Initialize the DNN
\begin{equation}
\phi(\alpha,s) = \rho \circ \lambda \circ W_3 \circ \lambda \circ W_2 \circ \lambda \circ W_1 s
\end{equation}
with parameters $\alpha^*$ and loss $\Bar{L}(\alpha(a))$ s.t. $\Bar{L}(\alpha(0))=L(\alpha^*)$.  Assume that $R_2(0)$ is a random matrix with i.i.d.  $N(0,\frac{g(a=0)}{N})$, such that at $a=0$ we have that $W_2(0)=R_2(0)+S_2(0)$ satisfying the assumptions in Subsection \ref{assumptions}. Then there $\exists g(a) \to 0$ monotonically as $a \to \infty$, and take $R_2(a)$ to be a random matrix with i.i.d. taken from $N(0,\frac{g(a)}{N})$ and $W_2(a)=R_2(a)+S_2(0)$. By Lemma \ref{main_result_remove _R_loosandacc}, we know that:
        \begin{equation}    \label{pruning_equation_1}
\mathbb{P}\bigg(|L_{\cancel{\text{reg}}}(\alpha_{S_2}(a))-L_{\cancel{\text{reg}}}(\alpha_{W_2}(a))| \leq G(N)H(\phi) \bigg)\geq 1-G(N).
   \end{equation}

Furthermore, one can apply the same proof that was used to prove Lemma \ref{main_result_remove _R_loosandacc} to show that $\forall a$:

 \begin{equation}    \label{pruning_equation_1}
\mathbb{P}\bigg(|L_{\cancel{\text{reg}}}(\alpha_{S_2}(0))-L_{\cancel{\text{reg}}}(\alpha_{W_2}(a))| \leq G(N)H(\phi) \bigg)\geq 1-G(N),
   \end{equation}
given that $W_2(a)=R_2(a)+S_2(0)$ and the random part $R_2(a)$ has i.i.d. components with zero mean and variance $\frac{g(a)}{N}$. That is, as $a \to \infty$ if the only change in $W_2$ is that the random part $R_2$ goes to zero, then the loss $L_{\cancel{\text{reg}}}$ won't change significantly with probability depending on $N$. 

However, the loss:
\begin{equation}
\label{loss_noise_det}
L(\alpha(a))=-\frac{1}{|T|}\sum_{s\in T}\log\left(\phi_{C(s)}(s,\alpha(a))\right) + \mu \sum_{i=1}^L \|W_i(a)\|_F^2,
\end{equation}
will be decreasing because the term $\mu \sum_{i=1}^L \|W_i(a)\|_F^2$ will get smaller monotonically as $a \to \infty$. This can be seen from the equations:
\begin{equation}
            \|A\|_F = \sqrt{\sum_{i=1}^{N} \sum_{j=1}^{N} A_{ij}^2}=\sqrt{\sigma_1^2 + \sigma_2^2 + \ldots + \sigma_N^2},    
        \end{equation} 
        and \eqref{change_in_SV}. By \eqref{change_in_SV}, the $r$ largest singular values of $W_2$ will becomes smaller as $a \to \infty$ (with some probability depending on $N$), making $\mu \sum_{i=1}^L \|W_i(a)\|_F^2$ smaller monotonically (again with probability depending on $N$). Furthermore, the $N-r$ smallest singular values of $W_2$ will also go to zero monotonically as $a \to \infty$, again ensuring that $\mu \sum_{i=1}^L \|W_i(a)\|_F^2$ decrease (with probability depending on $N$). Thus, because for \eqref{loss_noise_det} the term $-\frac{1}{|T|}\sum_{s\in T}\log\left(\phi_{C(s)}(s,\alpha(a))\right)$ does not change as $a \to \infty$ (with probability depending on $N$) and $\mu \sum_{i=1}^L \|W_i(a)\|_F^2$ decreases as $a \to \infty$  (again with probability depending on $N$), we have that the loss given in $\eqref{loss_noise_det}$ must be decreasing as $a \to \infty$ (with probability depending on $N$).   That is, for $a>0$, $\exists D(N)$ such that $\forall \epsilon$:

     \begin{equation}        \mathbb{P}\bigg(L(\alpha^*) > \Bar{L}(\alpha(a))\bigg)>1-D(N).
     \end{equation}
     That is, $\alpha^*$ is not a local min (with probability depending on $N$).

     Furthermore, for a fixed training time $t$ as $N \to \infty$ we have that removing the random matrix $R_2$ will not affect the cross entropy term in the loss but will remove the singular values of $W_2$ smaller then $\sqrt{\lambda_+}$. That is, the singular values of $W_2$ corresponding to those of $R_2$ will be set to zero, which means that the loss will decrease by $\mu \|R_2\|^2_F$. This proves Theorem \ref{theorem_reduction_of_loss}. In fact, if $R_2\not 0$, then one can remove it (by sending $g(t) \to 0$), and the loss will decrease monotonically (with some probability depending on $N$) by $\mu \|R_2\|^2_F$, showing that we were not in a local min.

\end{itemize}

\end{proof}

\section{Algorithms for RMT-Based 
pruning of DNN}
\label{numerics_updated}

\subsection{BEMA technique for computing $\lambda_+$}
\label{finding_lambda_updated}

We outline the BEMA technique for identifying the optimal fit $\lambda_+$ for $\frac{1}{N}R^TR$ from the ESD of $X=R+S$. This is beneficial when analyzing matrices with information added to noise, aiming to pinpoint the furthest boundary of the MP distribution's compact support. The BEMA technique is known for its computational efficiency and precision for matrices with added noise. The complete procedure can be located in \cite{ke2021estimation}. Below, we present a more streamlined version for an $N \times N$ matrix $R$:

\begin{ex}\label{deterministic_random}
  In this example, we create a random $N\times N$ matrix $R$  with random i.i.d. $\mathcal{N}(0,1)$ (zero mean and unit variance ($\sigma^2=1$), Gaussian).  
  
  In this example, we generate a random $N \times N$ matrix $R$, where each entry is independently sampled from a standard normal distribution $\mathcal{N}(0,1)$ with zero mean and unit variance ($\sigma^2 = 1$).
  We take $S$ to be a  $N\times N$ deterministic matrix  with components given by
  
  \begin{equation}
  S[i,j]=\tan(\frac{\pi}{2}+\frac{1}{j+1})+\cos(i)\cdot\log(i+j+1)+\sin(j)\cdot\cos(\frac{i}{j}),
  \end{equation}
      $W=R+S$ and $X = \frac{1}{N}W^T W$. The BEMA algorithm is used to find the $\lambda_+$ of the ESD of $X$, as described in Algorithm \ref{BEMA_algo}. $R$ is a random matrix satisfying the conditions of Theorem \ref{RMT_MP_theorem}, and so the ESD of $\frac{1}{N}R^TR$ converges to the Marchenko-Pastur distribution as $N \to \infty$ and has a $\lambda_+$ that determines the rightmost edge of its compact support. We can imagine a situation in which $R$ is not directly known, and the goal is to find an estimator of $\lambda_+$ from the ESD of $X$. See Fig. \ref{fig:ESD_X&MP} for the result of the ESD of $X$ with the Marchenko-Pastur distribution that best fits the ESD shown in red. 
 \begin{figure}[h!]
	\centering	\includegraphics[width=.8\textwidth]{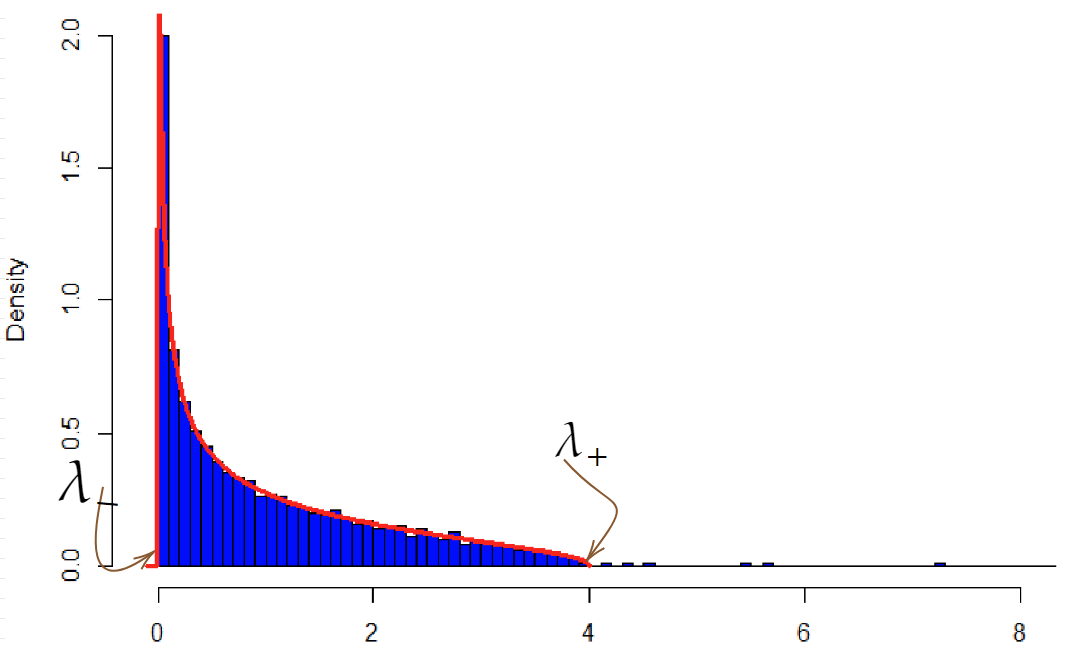}
			
		\caption{In blue we have the ESD of X, in red the Marchenko-Pastur distribution which best fits the ESD based on the BEMA algorithm.}
		\label{fig:ESD_X&MP}
\end{figure}

\end{ex}

\begin{remark}
\label{alpha_hyper}
The method relies on parameters $\alpha \in (0,1/2), \beta \in (0,1)$. This dependency can be observed by adjusting $\alpha$ and $\beta$ as showcased in Fig. \ref{alpah_beta_dep}. The highlighted line represents $\lambda_+ = 4$, the accurate $\lambda_+$ for $\frac{1}{N}R^TR$. In this case, while the influence of $\alpha$ is minimal for ample values, adjusting $\beta$ regulates the assurance that the eigenvalues of the matrix $R$ will remain below the $\lambda_+$ estimator of the MP distribution.

\begin{figure}
     \begin{subfigure}{0.4\textwidth}
  \includegraphics[width=\textwidth]{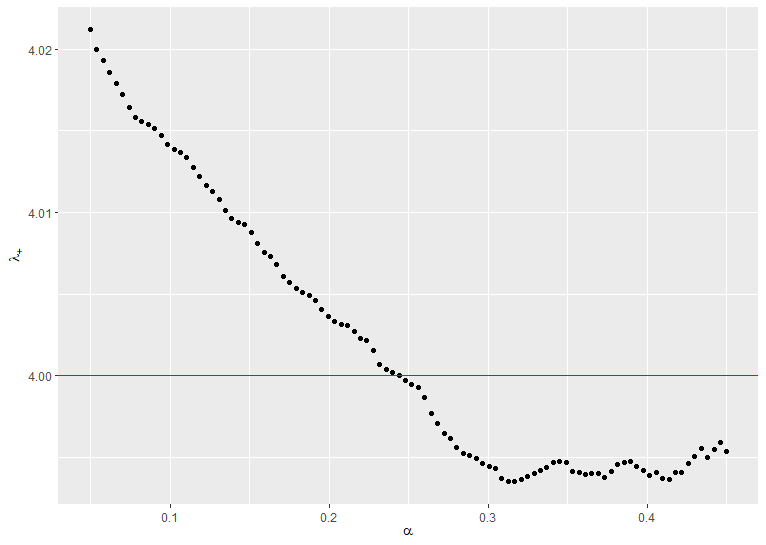}
    \caption{Dependence of algorithm the choice of $\alpha$, $\beta = 0.5$. In this example the rank of the deterministic matrix $S$ is fairly low.}
    \label{dep_alpha}
     \end{subfigure}
     \hfill
     \begin{subfigure}{0.4\textwidth}
  \includegraphics[width=\textwidth]{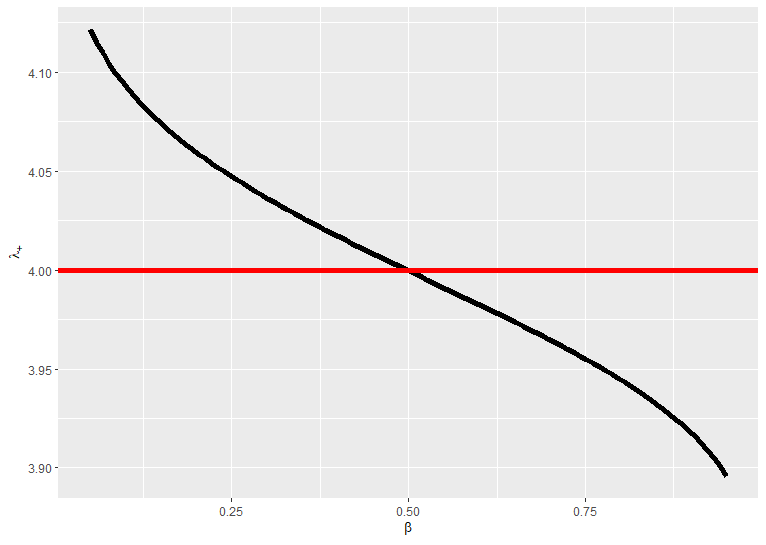}
    \caption{Dependence of algorithm on the choice of $\beta$, $\alpha = 0.25$.}
    \label{dep_beta}
     \end{subfigure}
     \caption{How $\lambda_+$ depends on the hyperparameters $\alpha$ and $\beta$.}
     \label{alpah_beta_dep}
     \end{figure}

\end{remark}

The BEMA technique will be utilized to deduce $\lambda_+$ from $X_l(t)$'s ESD. As DNN learning evolves, most of the $X_l(t)$ eigenvalues are anticipated to fit the MP distribution. Yet, certain eigenvalues might surpass the MP distribution's bulk and correspond to $S_l(t)$'s singular values. The BEMA's objective is to spot the MP distribution's outermost boundary, which aids in assessing $\lambda_+$. Grasping $\lambda_+$ is pivotal as it provides an understanding of the DNN's functionality during learning and its adaptability to fresh data.

Together with the SVD, the BEMA technique can guide which singular values of DNN's weight matrices $W_l$ to omit during learning. The SVD dissects the weight matrix, yielding a breakdown into singular values and vectors, which facilitates an RMT-driven analysis of their distribution. By employing the BEMA, one can distinguish eigenvalues corresponding to $S_l$'s singular values from those linked to $R_l$'s singular values. Removing eigenvalues tied to $R_l$ can enhance the DNN learning process's effectiveness.

\subsection{Importance of singular value decomposition in Deep Learning}

For a matrix $A$ of dimensions $N \times M$, its SVD comprises a factorization $A = U\Sigma V^T$, where:

\begin{itemize}
\item $U$ is a square orthogonal matrix of size $N \times N$.
\item $V$ is a square orthogonal matrix of size $M \times M$.
\item $\Sigma$ is a diagonal matrix of dimensions $N \times M$, with its diagonal entries corresponding to singular values $\sigma_i$, and non-diagonal entries being zero.
\end{itemize}

For matrix $X = W^T W$, with eigenvalues $\lambda_i$, the associated singular values of $W$ are denoted by $\sigma_i = \sqrt{\lambda_i}$. This ties singular values to eigenvalues of a matrix $W$'s symmetrization.

For a DNN's weight matrix $W_l$, studies have shown that omitting insignificant singular values through its SVD during training can diminish parameters yet augment accuracy. In the upcoming sections, we will elucidate how RMT can assist in determining which singular values to exclude from a DNN layer, ensuring the DNN's precision remains intact.

Specifically, integrating the BEMA technique with the SVD of $W_l$ can determine which singular values to exclude during training. This is accomplished by first computing $W_l$'s SVD, followed by determining the eigenvalues of the symmetrical matrix $X_l = \frac{1}{N}W_l^T W_l$. The derived eigenvalues can be related to the singular values of $W_l$ via $N\lambda_i = \sigma_i^2$. Using the BEMA to estimate $\lambda_+$ enables setting a threshold for $W_l$'s singular values. Those below this threshold can be discarded without affecting accuracy, as they are probably less vital for the DNN's output. This procedure can be recurrent during training, updating the threshold as required.

\subsection{How well does the  ESD of $X$ fit the MP Distribution with Spikes}
\label{Alignment_Evaluation}
In this subsection, we outline a method to investigate whether the ESD of \(X\) may be derived from a particular MP distribution that may contain spiked eigenvalues. The starting point of this method uses the BEMA technique to pinpoint the most appropriate MP distribution. The best-fitting distribution offers a theoretical cumulative distribution function (CDF), while the observed cumulative spectral distribution for \(X\) is obtainable. Contrasting these distributions, we can rule out the idea that \(X\) is governed by the proposed MP distribution if there is a notable disparity between them. We will now elaborate on these notions, beginning with the empirical cumulative spectral distribution.

\begin{defn} \label{ECS_Reformulated}
    Consider \(G\) as an \(N \times M\) matrix with its ESD \(\mu_{G_M}\) as presented in Definition \ref{ESD_Definition}. The observed cumulative spectral distribution of \(G\), denoted as \(F_G: \R \to \R\), is:
    \begin{equation}
        F_G(a) = \mu_{G_m}((-\infty, a])
    \end{equation}
\end{defn}

Remarkably, the CDFs for the MP distribution are available in an explicit form. Using these formulas, we can lucidly detail our approach. A tuning parameter \(\gamma \in (0,1)\) specifies the sensitivity of our evaluation.

This approach measures the utmost disparity between the predicted and observed CDFs by sampling at each point in the observed distribution. As this is meant for the special scenario of testing for MP distributions with spikes, this knowledge allows us to refine our evaluation over simply determining the \(L^\infty\) disparity between the two distributions.

This refinement is evident in the step that determines \(i_{\text{min}}\) and \(i_{\text{max}}\). Given that BEMA solely utilizes data in the quantile between \((\alpha, 1 - \alpha)\) for the optimal fit, it's reasonable to inspect the fit within the same boundaries. In this scenario, a spiked MP distribution would likely be inadequately represented by its generative MP distribution around the predominant eigenvalues (the spikes), so it's logical to test only the core values for fit accuracy.

\subsection{Eliminating singular values while preserving accuracy}
\label{stable_acc}

\begin{figure}%
    \centering
    \subfloat[\centering Full Empirical Density]{{\includegraphics[width=6.5cm]{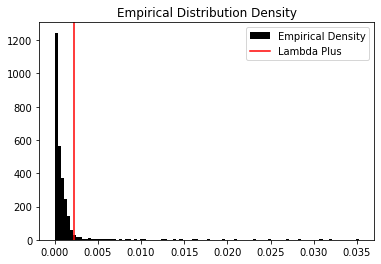} }}%
    \qquad
    \subfloat[\centering Zoomed Density ]{{\includegraphics[width=6.5cm]{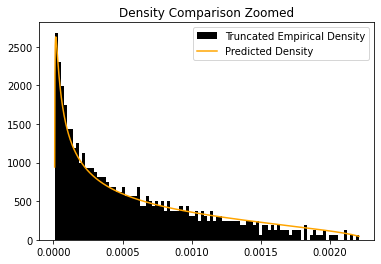} }}
    \caption{The ESD of \(X_l\) and its best fit MP distribution}
    \label{MPdis1}%
\end{figure}

Algorithm \ref{algo_1} is how we train DNNs together with MP-based pruning.

\begin{ex}
    Consider an original DNN with two hidden layers, each consisting of 10 nodes. The total number of parameters in this case would be 100. By employing SVD and removing \(8\) small singular values in the weight layer matrix of this DNN, we can split the hidden layer into two, resulting in a new DNN with three hidden layers. The first layer will have 10 nodes, the second layer will have 2 nodes, and the third layer will have 10 nodes. By keeping only two singular values in the SVD, we now have only 20 parameters, see Figure \ref{fig:dnn_split}. In practice, we don't actually split the layer 

\begin{figure}[h]
\centering
\begin{tikzpicture}
\foreach \i in {1,2,3,4,5,6,7,8,9,10} {
\node[draw, circle, minimum size=0.5cm] (A\i) at (1, 0.5*\i) {};
\node[draw, circle, minimum size=0.5cm] (B\i) at (3, 0.5*\i) {};
}
\foreach \i in {1,2,3,4,5,6,7,8,9,10} {
\foreach \j in {1,2,3,4,5,6,7,8,9,10} {
\draw[dashed] (A\i) -- (B\j);
}
}
\node[rotate=90] at (0.5, 2.5) {Original DNN};
\begin{scope}[xshift=5cm]
    \foreach \i in {1,2,3,4,5,6,7,8,9,10} {
        \node[draw, circle, minimum size=0.5cm] (C\i) at (1, 0.5*\i) {};
        \node[draw, circle, minimum size=0.5cm] (E\i) at (3, 0.5*\i) {};
    }
    \node[draw, circle, minimum size=0.5cm] (D1) at (2, 2) {};
    \node[draw, circle, minimum size=0.5cm] (D2) at (2, 3) {};
    
    \foreach \i in {1,2,3,4,5,6,7,8,9,10} {
        \draw[dashed] (C\i) -- (D1);
        \draw[dashed] (C\i) -- (D2);
        \draw[dashed] (D1) -- (E\i);
        \draw[dashed] (D2) -- (E\i);
    }
    \node[rotate=90] at (0.5, 2.5) {New DNN};
\end{scope}
\end{tikzpicture}
\caption{Original DNN with two hidden layers, each with 10 nodes (total 100 parameters), is transformed into a new DNN with three hidden layers. The first layer has 10 nodes, the second layer has 2 nodes (keeping only two singular values in the SVD), and the third layer has 10 nodes, resulting in a total of 20 parameters.}
\label{fig:dnn_split}
\end{figure}
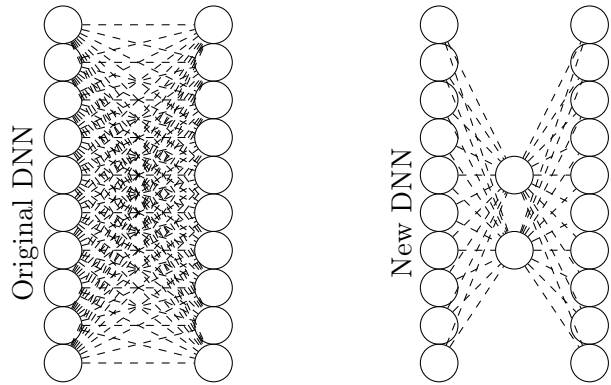

\end{ex}

\begin{ex}
\label{threshold_example}
     We used the above approach for a DNN trained on MNIST. In this example the DNN has two layers, the first with a \(784 \times 1000\) matrix \(W_1\) and the second with a \(1000 \times 10\) matrix \(W_2\). The activation function was ReLU. We trained the DNN for \(10\) epocs and achieved a \(98\%\) accuracy on the test set.

 We perform a SVD on \(W_1\), in this case \(\Sigma\) is a \(784 \times 1000\) matrix. Even if we only keep the biggest \(60\) \(\sigma_i\) of \(W_1\) and transform the first layer into two layers \(W_{1,1}\) and \(W_{2,1}\) the accuracy is still \(98\%\). \(W_1\) has \(784 \times 1000 = 784000\) parameters but \(W_{1,1}\) and \(W_{2,1}\) together have \(60 \times 784 + 60 \times 1000 = 106440\) parameters. This is a significant reduction in the number of parameters without any loss of accuracy. 
\end{ex}

\subsection{Algorithm for Sparsifying Singular Vectors of a DNN Layer}
\label{SV-sparsification}

Let $W_l$ denote the $l$-th layer matrix of a DNN and for simplicity we assume that it is $N \times N$. The SVD of $W_l$ is given by $W_l = U \Sigma V^T$, where $U$ and $V$ are the left and right singular vectors, and $\Sigma$ is a diagonal matrix of singular values.

Given a threshold $\sqrt{\lambda_+ N}$, the algorithm proceeds as follows:

Note: The algorithm focuses on sparsifying the singular vectors based on the dynamic threshold determined by the singular values and a predefined threshold $\theta$. Here, $\text{apply\_sparsity}(U_{:,i}, T_i(\Sigma_{ii}))$ sets to zero all elements in the vector $U_{:,i}$ below the threshold $T_i(\Sigma_{ii})$.

\subsection{Algorithm for element-wise sparsification of a matrix}

\subsection{The MP and spike metrics of a DNN model}
\label{alpha_and_metrics}

First, we present two figures of the ESD of different layers of the base ViT model studied in Subsection \ref{sec:pruning_VITs}, see Figs.  \ref{MP_fit_ViT_layer_example_1}, \ref{MP_fit_ViT_layer_example_2}. We see that some layers fit the MP distribution "very" well, while some layers don't. When pruning layers of the DNN, we remove larger parameters in the layers that are more random because of the larger amount of randomness in those layers.

\begin{figure}[h!]
	\centering	\includegraphics[width=.8\textwidth]{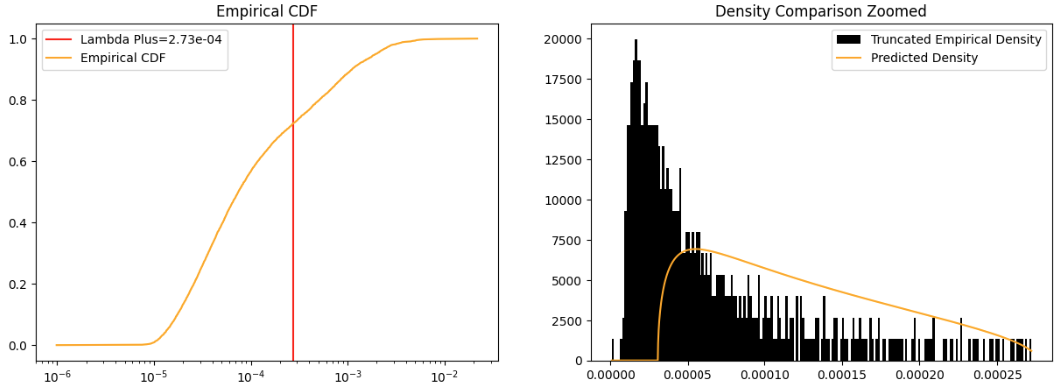}
			
		\caption{Best fit MP distribution (on the left) for $\alpha=.25$ and $\beta=.8$ for a layer of the base ViT model. Here the MP fit error is $.74$ while the percentage of singular values smaller than $\sqrt{\lambda_+}$ is $73\%$. For more on the right figure, see Section \ref{reg_problem}. }
		\label{MP_fit_ViT_layer_example_1}
\end{figure}

\begin{figure}[h!]
	\centering	\includegraphics[width=.8\textwidth]{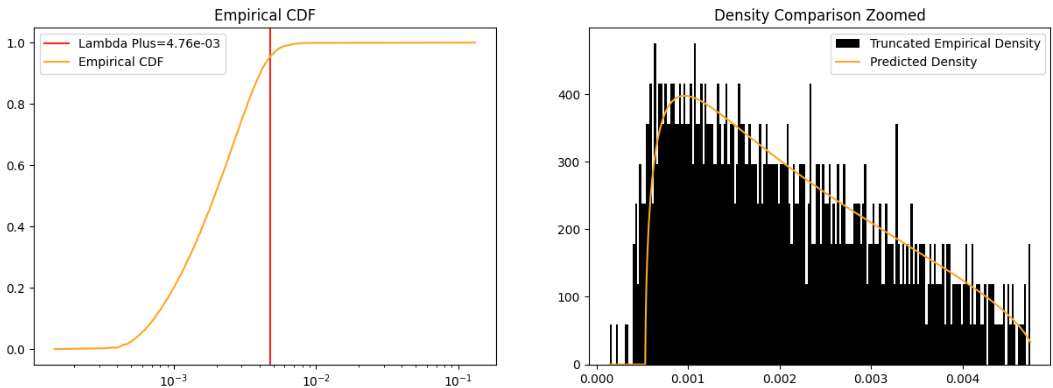}
			
		\caption{Best fit MP distribution (on the left) for $\alpha=.25$ and $\beta=.8$ for a layer of the base ViT model. Here the MP fit error is $.01$ while the percentage of singular values smaller than $\sqrt{\lambda_+}$ is $99.73\%$. For more on the right figure, see Section \ref{reg_problem}. }		\label{MP_fit_ViT_layer_example_2}
\end{figure}

Second, we show how the hyperparameter $\alpha$ discussed in Remark \ref{alpha_hyper} affects the MP fit and spike metrics; see Fig. \ref{alpha_vs_LR_MP}.

\begin{figure}[h!]
	\centering	\includegraphics[width=.8\textwidth]{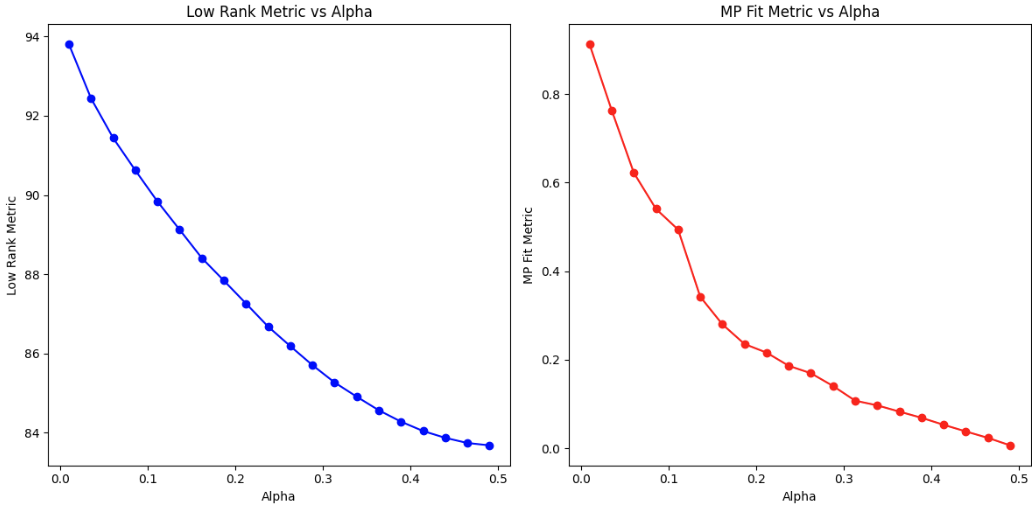}
			
		\caption{In the left figure, we have $\alpha$ vs LRM and on the right we have $\alpha$ vs MP fit metric. This is for the base ViT model studied in \ref{ViT_example1_sparsification_with_RMT_1_FT}.}
		\label{alpha_vs_LR_MP}
\end{figure}

\section{ViT model architecture}
\label{ViT_modl}

\subsection{Image representation and embedding}
Consider an input image \( I \) which is a matrix of pixels. This image is divided into a grid of \( N \) patches, \( \{P_1, P_2, ..., P_N\} \). Each patch \( P_i \) is then linearly embedded into a \( d \)-dimensional space. This process is done by flattening each patch and then applying a linear transformation, often followed by adding a positional encoding.

The mathematical process can be described as follows:
\[
\text{vec}(P_i) = \text{Flatten}(P_i)
\]

We now apply a fully connected layer with no activation function: 

\[
x_i = \text{vec}(P_i)W + b
\]

Here, \(\text{vec}(P_i)\) is the flattened vector of the patch \( P_i \), \( W \) is a learnable weight matrix in \(\mathbb{R}^{p \times d}\), where \( p \) is the number of pixels in a patch, and \( b \) is a learnable bias vector in \(\mathbb{R}^d\). The result \( x_i \) is the embedded representation of the patch.

\subsection{Positional encoding}
Due to the self-attention mechanism's permutation-invariant nature, it does not inherently account for the order or position of the input elements. In the context of images, where the relative position of patches is vital, positional encoding is added to the embedded representations to retain this spatial information.

Positional encodings are typically vectors of the same dimension \( d \) as the patch embeddings, which are added to the embeddings:

\[
x_i^{'} = x_i + \text{PositionalEncoding}(i)
\]

These encodings can be either learned during the training process or predefined based on sinusoidal functions, capturing the relative or absolute position of the patches in the image grid. This way, the transformer can leverage the spatial structure of the image.

\subsection{Self-Attention Mechanism}

In the self-attention mechanism, each element in the sequence is updated by aggregating information from the entire sequence. This is done by computing a weighted sum of value vectors \( V \), where the weights are determined by a compatibility function of query \( Q \) and key \( K \) vectors.

Each vector \( x_i \) is transformed into a set of query \( Q \), key \( K \), and value \( V \) vectors through linear transformations (rather then affine)  using learnable weights \( W^Q \), \( W^K \), and \( W^V \):

\[
Q_i = x_iW^Q, \quad K_i = x_iW^K, \quad V_i = x_iW^V
\]

where \( W^Q, W^K, W^V \in \mathbb{R}^{d_{emb} \times d_k} \) and \(d_{emb}\) is the dimension space of the embeddings and \(d_k < d_{emb}\). Similar to the previous weight matrix \(W\), matrices \(W^Q, W^K\), and \(W^V\) begin with random values that get learned throughout the training. 


\subsection{Calculation of attention}

The attention mechanism is a fundamental concept in various models for processing sequences of data (an example of such a sequence can be a sequence of patches of data). Fundamentally, it calculates the similarity of each part of the data sequence (patch) with respect to a different element of the sequence (a different patch). 

\subsubsection*{Attention Score}

The attention score is a measure of the relevance or importance between two elements in a sequence, typically represented as patches, tokens, or vectors. The attention score between the $i$-th query vector $Q_i$ and the $j$-th key vector $K_j$ is computed using the dot product of $Q_i$ and $K_j$, which is then scaled by the inverse of the square root of the dimension of the key vectors. Mathematically, it is represented as:

\[
\text{Attention}(Q_i, K_j) = \text{softmax}\left(\frac{Q_iK_j^T}{\sqrt{d_k}}\right)
\]

Here, \( d_k \) represents the dimension of the key (and query) vectors. The dot product $Q_iK_j^T$ computes a measure of similarity or alignment between the query and key. The scaling factor $\frac{1}{\sqrt{d_k}}$ is used to avoid overly large dot products as the dimensionality increases, which can lead to gradients that are too small for effective learning.

\subsubsection{Output of the Attention Layer}

Once the attention scores are computed, the output for each query vector is obtained by taking a weighted sum of the value vectors, where the weights are the attention scores. The output for the $i$-th query vector is computed as:

\[
\text{Output}_i = \sum_{j=1}^{N} \text{Attention}(Q_i, K_j)V_j
\]

In this equation, $V_j$ represents the value vector corresponding to the $j$-th element of the sequence. The weighted sum allows the model to aggregate information from the entire sequence, weighted by how relevant each element is to the query.

\subsection{Position-Wise Feed-Forward Networks}

Each layer of the Transformer also contains a feed-forward network, which consists of two linear transformations with an absolute value activation in between. These networks are applied to each position separately and identically.

\[
\text{FFN}(x) = \text{Abs}(xW_1 + b_1)W_2 + b_2
\]

\end{document}